\newtheorem{theorem}{Theorem}
\begin{document}

\title{Local Sample-weighted Multiple Kernel Clustering with Consensus Discriminative Graph}

\author{Liang~Li$^{\dagger}$,~Siwei~Wang$^{\dagger}$,~Xinwang~Liu$^{\ast}$,~\IEEEmembership{Senior~Member,~IEEE,}~En~Zhu,~Li Shen,~Kenli~Li,~\IEEEmembership{Senior~Member,~IEEE,}\\
~and~Keqin~Li,~\IEEEmembership{Fellow,~IEEE}
        % <-this % stops a space
\thanks{L. Li, S. Wang, X. Liu, E. Zhu, and L. Shen are with School of Computer, National University of Defense Technology, Changsha 410073, China (E-mail: \{liangli,\,wangsiwei13,\,xinwangliu,\,enzhu,\,lishen\}@nudt.edu.cn).}
\thanks{K. Li is with College of Computer Science and Electronic Engineering, Hunan University, and also with Supercomputing and Cloud Computing Institute, Changsha 410073, China (E-mail: lkl@hnu.edu.cn).}
\thanks{K. Li is with the Department of Computer Science, State University of New York, New Paltz, New York 12561, USA (E-mail: lik@newpaltz.edu).}
\thanks{$^{\dagger}$ Equal contribution.}
\thanks{$^{\ast}$ Corresponding author.}}

%         % <-this % stops a space
% \thanks{This paper was produced by the IEEE Publication Technology Group. They are in Piscataway, NJ.}% <-this % stops a space
% \thanks{Manuscript received April 19, 2021; revised August 16, 2021.}}

% The paper headers
% \markboth{Journal of \LaTeX\ Class Files,~Vol.~14, No.~8, August~2021}%
\markboth{Submitted to IEEE Transactions on Neural Networks and Learning Systems, December~2021}
{Shell \MakeLowercase{\textit{et al.}}: A Sample Article Using IEEEtran.cls for IEEE Journals}

\IEEEpubid{0000--0000/00\$00.00~\copyright~2021 IEEE}
% Remember, if you use this you must call \IEEEpubidadjcol in the second
% column for its text to clear the IEEEpubid mark.

\maketitle

\begin{abstract}
Multiple kernel clustering (MKC) is committed to achieving optimal information fusion from a set of base kernels. Constructing precise and local kernel matrices is proved to be of vital significance in applications since the unreliable distant-distance similarity estimation would degrade clustering performance. Although existing localized MKC algorithms exhibit improved performance compared to globally-designed competitors, most of them widely adopt KNN mechanism to localize kernel matrix by accounting for $\tau$-nearest neighbors. However, such a coarse manner follows an unreasonable strategy that the ranking importance of different neighbors is equal, which is impractical in applications. To alleviate such problems, this paper proposes a novel local sample-weighted multiple kernel clustering (LSWMKC) model. We first construct a consensus discriminative affinity graph in kernel space, revealing the latent local structures. Further, an optimal neighborhood kernel for the learned affinity graph is output with naturally sparse property and clear block diagonal structure. Moreover, LSWMKC implicitly optimizes adaptive weights on different neighbors with corresponding samples. Experimental results demonstrate that our LSWMKC possesses better local manifold representation and outperforms existing kernel or graph-based clustering algorithms. The source code of LSWMKC can be publicly accessed from \url{https://github.com/liliangnudt/LSWMKC}.
\end{abstract}

\begin{IEEEkeywords}
Graph learning, Localized kernel, Multi-view clustering, Multiple kernel learning.
\end{IEEEkeywords}

%==========================================================
\section{Introduction}
\IEEEPARstart{C}{lustering} is one of the representative unsupervised learning techniques widely employed in data mining and machine learning \cite{jain1999data,xu2005survey,liao2017automatic,liao2018multiple,yang2018multi, xiao2019novel}. As a popular algorithm, $k$-means has been well investigated \cite{hartigan1979algorithm, wagstaff2001constrained, peng2022xai}. Although achieving extensive applications, $k$-means assumes that data can be linearly separated into different clusters \cite{scholkopf1998nonlinear}. By employing kernel tricks, the non-linearly separable data are embedded into a higher dimensional feature space and  become linearly separable. As a consequence, kernel $k$-means (KKM) is naturally developed for handling non-linearity issues \cite{scholkopf1998nonlinear,dhillon2004kernel}. Moreover, to encode the emerging data generated from heterogeneous sources or views, multiple kernel clustering (MKC) provides a flexible and expansive framework for combining a set of kernel matrices since different kernels naturally correspond to different views \cite{zhao2009multiple,yu2012optimized,gonen2014localized,liu2016multiple,liu2017optimal,wang2019multi,wang2021late}. Multiple kernel $k$-means (MKKM) \cite{huang2012multiple} and various variants are further developed and widely employed in many applications  \cite{lu2014multiple,du2015robust,liu2016multiple,liu2017optimal,liu2019multiple,liu2021incomplete}. 
\begin{figure}[t]
\vspace{-10pt}
\begin{center}{
		\centering
		\hspace{-5mm}\subfloat[Average kernel]{{\includegraphics[width=0.19\textwidth]{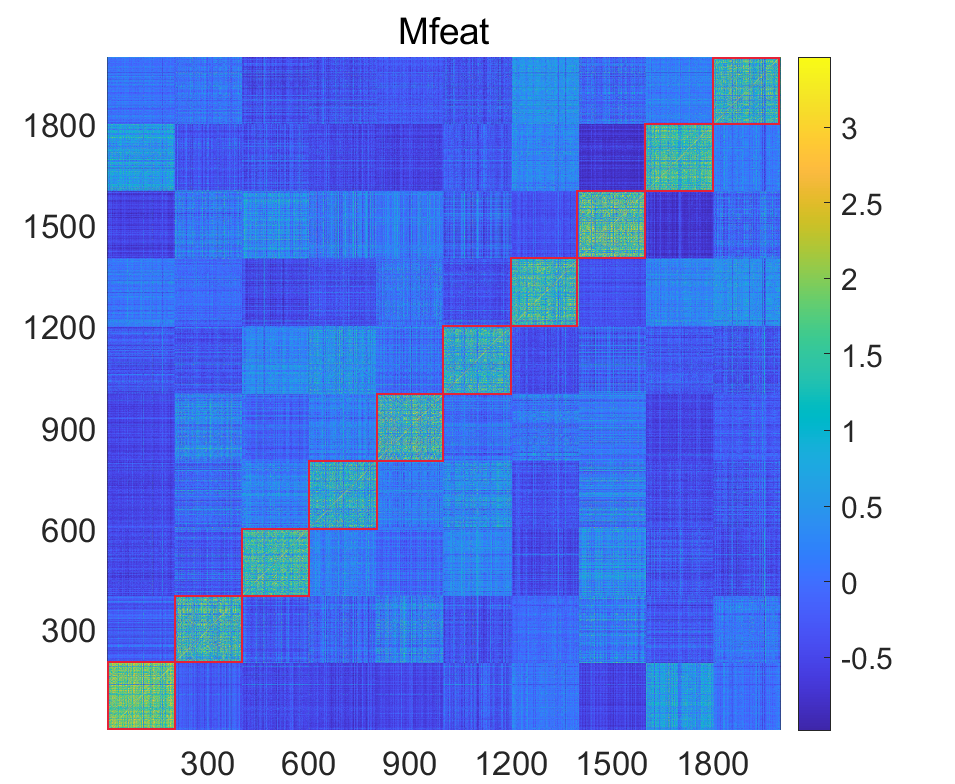}}\hspace{-5.3mm} \label{Avg-mfeat}}
		\subfloat[KNN mechanism]{{\includegraphics[width=0.19\textwidth]{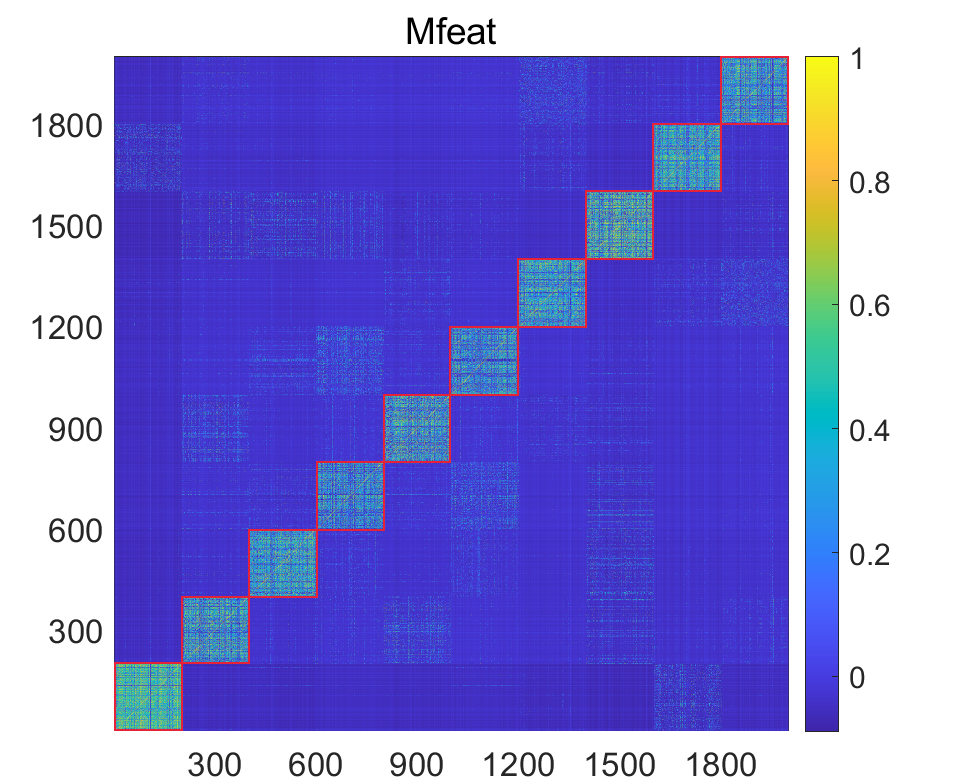}}\hspace{-5mm} \label{KNN-mfeat}}
		\subfloat[Proposed]{{\includegraphics[width=0.19\textwidth]{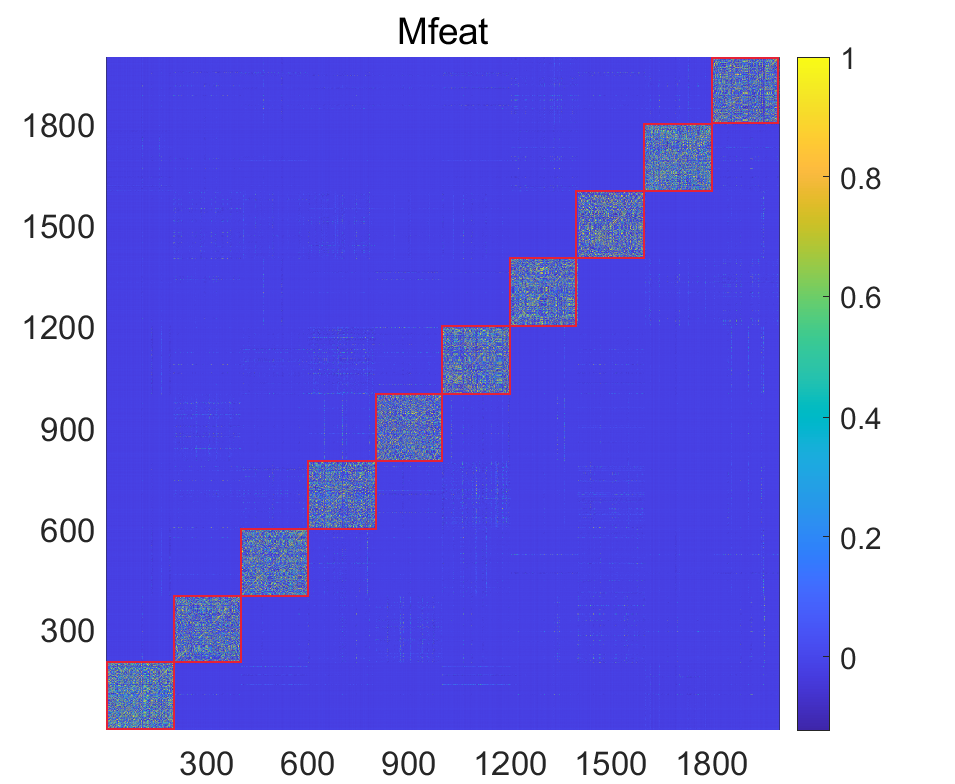}}\hspace{-5mm} \label{proposed-mfeat}}
		\caption{Illustration of (a) original average kernel, (b) localized average kernel in KNN mechanism by carefully tuning $\tau$ within $[0.1,0.2,\cdots,0.9]$ and present the optimal results ($\tau = 0.1$), and (c) localized kernel learned by proposed model on Mfeat dataset.}
		\label{comparison-KNN}
			}
\end{center}
\vspace{-15pt}
\end{figure}
Most of the kernel-based algorithms follow a common assumption that all the samples are reliable to exploit the intrinsic structures of data, thus such a globally designed manner equally calculates the pairwise similarities of all samples\cite{lu2014multiple,du2015robust,liu2016multiple,liu2017optimal,wang2019multi,liu2020simplemkkm,liu2021one}. Nevertheless, in a high-dimensional space, this assumption is incompatible with the well-acknowledged theory that the similarity estimation for distant samples is less reliable on account of the intrinsic manifold structures are highly complex with curved, folded, or twisted characteristics \cite{tenenbaum2000global,li2018dynamic,wang2018spectral,yao2018local}. Furthermore, researchers have found that preserving reliable local manifold structures of data could achieve better effectiveness than globally preserving all the pairwise similarities in unsupervised tasks, and can achieve better clustering performance, such as dimension reduction \cite{roweis2000nonlinear,de2018nonlinear,liu2013global,sun2021projective} and clustering \cite{zhou2007spectral,zhao2018large}.

Therefore, many approaches are proposed to localize kernels to enhance discrimination \cite{li2016multiple,zhu2018localized,zhou2019multiple,zhang2021late,liu2021localized}. The work in \cite{li2016multiple} develops a localized kernel maximizing alignment method that merely aligns the original kernel with $\tau$-nearest neighbors of each sample to the learned optimal kernel. Along this way, KNN mechanism is introduced to kernel-based subspace segmentation \cite{zhou2019multiple}. Moreover, a recently proposed simple MKKM method \cite{liu2020simplemkkm} with min-max optimization is also localized in the same way to consider local structures \cite{liu2021localized}. Besides, such a localized manner also has been extended to handle incomplete data \cite{zhu2018localized}. Although showing improved performance, most traditional localized kernel methods adopt the simple KNN mechanism to select neighbors.\\

As can be seen in Figure \ref{comparison-KNN} (a)-(b), previous localized MKC methods with KNN mechanism encounter two issues: 
\romannumeral1) These methods follow the common assumption that all the neighbors are reliable without considering their variation and ranking relationship. However, it is incompatible with common knowledge that the neighbors of a sample are adaptively varied, and some may have been corrupted by noise or outliers. For instance, in social networking, the closer relationship means more essential and vice versa. 
\romannumeral2) The KNN mechanism introduces a hyper-parameter neighbor ratio, which is fixed for each sample and commonly pre-determined empirically. Apart from this unreasonable fixed neighbor ratio, it incurs dataset-related parameter-tuning in a wide range to obtain satisfying clustering results. From experimental results, we can observe that KNN mechanism still preserves apparent noise compared to the original average kernel.  

To alleviate these problems, we start our work with a natural thought that adaptively assign a reasonable weight to each neighbor according to its ranking importance. However, there is no sufficient prior knowledge in kernel space to identify the ranking relationship among neighbors. Owing to the remarkable performance in exploring the complex nonlinear structures of various data, developing graph-based methods is greatly popular with scholars \cite{nie2016constrained,nie2016unsupervised,nie2016parameter,li2018dynamic,peng2018structured,zhou2019person,wang2019parameter,nie2020self,ren2020consensus,li2020multi,ren2020multiple,ren2020simultaneous,nie2021learning,nie2021implicit,shi2021multi,ren2021simultaneous,Liu2022DCRN}. Considering kernel matrix can be regarded as affinity graph with additional positive semi-definite (PSD) constraint, it is practicable and more flexible to learn a discriminative affinity graph with naturally sparsity and clear block diagonal structures \cite{nie2016constrained,nie2016parameter,nie2017self,nie2020self}. 

Based on the above motivation and our inspiration from graph learning \cite{nie2016constrained,nie2017self,ren2020simultaneous,ren2020consensus,nie2020self,DBLP:journals/tnn/LuoNCYHZ18}, we develop a novel local sample-weighted MKC with consensus discriminative graph (LSWMKC) method. Instead of using KNN mechanism to localize kernel matrix without considering the ranking importance of neighbors, We firstly learn a consensus discriminative affinity graph across multiple views in kernel space to reveal the latent manifold structures, and further heuristically learn an optimal neighborhood kernel. As Figure \ref{comparison-KNN} (c) shows, the learned neighborhood kernel is naturally sparse with clear block diagonal structures. We develop an efficient iterative algorithm to simultaneously learn weights of base kernels, discriminative affinity graph, and localized consensus neighborhood kernel. Instead of empirically tuning or selecting pre-defined neighbor ratio, our model can implicitly optimize adaptive weights on different neighbors with corresponding samples. Extensive experiments demonstrate that the learned neighborhood kernel can achieve clear local manifold structures, and it outperforms localized MKC methods in KNN mechanism and other existing models.
We briefly summarize the main contributions as follows,
\begin{itemize}
    \item A novel local sample-weighted MKC algorithm is proposed based on kernelized graph learning, which can implicitly optimize adaptive weights on different neighbors with corresponding samples according to their ranking importance. 
    \item We learn an optimal neighborhood kernel with more discriminative capacity by further denoising the graph, revealing the latent local manifold representation in kernel space. 
    \item We conduct extensive experimental evaluations on twelve MKC benchmark datasets compared with existing thirteen methods. Our proposed LSWMKC shows apparent effectiveness over localized MKC methods in KNN mechanism and other existing methods.
\end{itemize}
%==========================================================
\section{Background}\label{Background}
This section introduces multiple kernel clustering and traditional KNN-based localized MKC methods.
%----------------------------------------------------------
\subsection{Multiple Kernel $k$-means}\label{MKKMC}
For a data matrix ${\mathbf{X} \in \mathbb{R}^{d \times n}}$ including $n$ samples with ${d}$-dimensional features from ${k}$ clusters, nonlinear feature mapping ${\psi(\cdot) : \mathbb{R}^{d} \mapsto \mathcal{H}}$ achieves the transformation from sample space $\mathbb{R}^{d}$ to a Reproducing Kernel Hilbert Space (RKHS) $\mathcal{H}$ \cite{tzortzis2008global}. Kernel matrix $\mathbf{K}$ is computed by 
\begin{equation}\label{Kij}
\begin{split}
\mathbf{K}_{ij}={\kappa}\left(\mathbf{x}_{i}, \mathbf{x}_{j}\right)=\psi\left(\mathbf{x}_{i}\right)^{\top} \psi\left(\mathbf{x}_{j}\right),
\end{split}
\end{equation}
where ${\kappa} \left(\cdot,\cdot\right) : \mathbb{R}^{d} \times \mathbb{R}^{d} \mapsto \mathbb{R}$ denotes a PSD kernel function.

$k$-means is to minimize the clustering loss, i.e.,
\begin{equation}\label{KKM}
\begin{split}
\min \limits_{\mathbf{S}} \; \sum_{i=1}^{n}\sum_{q=1}^{k}\mathbf{S}_{iq}\|\mathbf{x}_{i}-\mathbf{c}_{q}\|_{2}^{2}, \;\; \textrm{s.t.} \sum_{q=1}^{k} \mathbf{S}_{iq}=1,
\end{split}
\end{equation}
where $\mathbf{S}\in\{0,1\}^{n \times k}$ denotes the indicator matrix, $\mathbf{c}_{q}$ denotes the centroid of $q$-th cluster and $n_{q}=\sum_{i=1}^{n} \mathbf{S}_{i q}$ denotes the corresponding amount of samples. To deal with nonlinear features, the samples are mapped into RKHS $\mathcal{H}$. Kernel $k$-means is formulated as
\begin{equation}\label{KKM-matrix2}
\begin{split}
\min \limits_{\mathbf{H}} \; \mathrm{Tr}\left(\mathbf{K}\left(\mathbf{I}_{n}-\mathbf{H} \mathbf{H}^{\top}\right)\right), \; \textrm {s.t.} \; \mathbf{H}^{\top} \mathbf{H}=\mathbf{I}_{k},
\end{split}
\end{equation}
where partition matrix $\mathbf{H} \in \mathbb{R}^{n \times k}$ is computed by taking rank-$k$ eigenvectors of $\mathbf{K}$ and then exported to $k$-means to compute the final results \cite{scholkopf1998nonlinear,dhillon2004kernel}.

For multiple kernel learning scenarios, $\mathbf{x}$ can be represented as $\psi_{\boldsymbol{\omega}}\left(\mathbf{x}\right) = [\omega_{1} \psi_{1}\left(\mathbf{x}\right)^{\top},\omega_{2} \psi_{2}\left(\mathbf{x}\right)^{\top},\ldots,\omega_{m} \psi_{m}\left(\mathbf{x}\right)^{\top}]^{\top}$, where $\boldsymbol{\omega}=\left[\omega_{1}, \cdots, \omega_{m}\right]^{\top}$ denotes the coefficients of $m$ base kernel functions $\{\kappa_{p}\left(\cdot, \cdot\right)\}_{p=1}^{m}$. ${\kappa}_{\boldsymbol{\omega}} \left(\cdot,\cdot\right)$ is expressed as
\begin{equation}\label{kernel-function}
\begin{split}
 \kappa_{\boldsymbol{\omega}}\left(\mathbf{x}_{i}, \mathbf{x}_{j}\right)=\psi_{\boldsymbol{\omega}}\left(\mathbf{x}_{i}\right)^{\top} \psi_{\boldsymbol{\omega}}\left(\mathbf{x}_{j}\right)=\sum_{p=1}^{m} \omega_{p}^{2} \kappa_{p}\left(\mathbf{x}_{i}, \mathbf{x}_{j}\right). 
 \end{split}
\end{equation}

The objective of MKKM is formulated as
\begin{equation}\label{MKKM}
\begin{split}
\min \limits_{\mathbf{H}, \boldsymbol{\omega}} &\; \operatorname{Tr}\left(\mathbf{K}_{\boldsymbol{\omega}}\left(\mathbf{I}_{n}-\mathbf{H H}^{\top}\right)\right), \\ 
\textrm {s.t.} &\;\; \mathbf{H} \in \mathbb{R}^{n \times k},\; \mathbf{H}^{\top} \mathbf{H}=\mathbf{I}_{k},\; \omega_{p} \geq 0,\; \forall p,
\end{split}
\end{equation}
where the consensus kernel $\mathbf{K}_{\boldsymbol{\omega}} = \sum_{p=1}^{m} \omega_{p}^{2} \mathbf{K}_{p}$ is commonly assumed as a combination of base kernels $\mathbf{K}_{p}$. To control the contribution of different kernels, there are some strategies on ${\boldsymbol{\omega}}$, such as ``kernel affine weight strategy" \cite{ren2020simultaneous}, ``auto-weighted strategy" \cite{nie2016parameter,ren2020consensus}, and ``sum-to-one strategy" \cite{liu2021localized}. According to \cite{huang2012multiple}, Eq. (\ref{MKKM}) can be solved by alternatively optimizing $\boldsymbol{\omega}$ and $\mathbf{H}$. 
%----------------------------------------------------------
\subsection{Construction of Localized Kernel in KNN Mechanism}\label{MKKM-LKAM}
Most kernel-based methods assume that all the samples are reliable and calculate fully connected pairwise similarity. However, as pointed in \cite {tenenbaum2000global,han2018local,li2018dynamic,wang2018spectral,yao2018local}, the similarity estimation of distant-distance samples in high-dimensional space is unreliable. Many localized kernel-based works have been developed to alleviate this problem \cite{li2016multiple,wang2018local,liu2021localized}. Commonly, the localized kernel is constructed in KNN mechanism.       

The construction of localized kernel mainly includes two steps, i.e., neighbor searching and localized kernel construction. Firstly, in average kernel space, the neighbors of each sample are identified by labelling its $\tau$-nearest samples. Denoting the neighbor mask matrix as $\mathbf{N} \in \{0,1\}^{n \times n}$. The neighbor searching is defined as follows,
\begin{equation}\label{neighbor_kernel_Nij}
\begin{split}
\mathbf{N}_{ij}=\left\{\begin{array}{ll} 1, & \mathbf{x}_{j} \in \mathrm{KNN}(\mathbf{x}_{i}), \\ 0, & \text {otherwise},\end{array}\right.
\end{split}
\end{equation}
where $j$ denotes the neighbor index of $i$-th sample. For each row, there are $round(\tau n)$ elements are labelled as neighbors, where neighbor ratio $\tau$ is commonly pre-determined empirically and carefully tuned by grid search, such as $\tau$ varies within $[0.1,0.2,\ldots,0.9]$, and finally obtain the optimal clustering results. If we set neighbor ratio $\tau=1$, the KNN structure will be full-connected. For the pre-computed base kernels $\mathbf{K}_{p}$, the corresponding localized kernel $\mathbf{K}_{p(l)}$ is formulated as
\begin{equation}\label{neighbor_kernel}
\begin{split}
\mathbf{K}_{p(l)}=\mathbf{N} \odot \mathbf{K}_{p},
\end{split}
\end{equation}
where $\odot$ is the Hadamard product. 

Although the traditional KNN mechanism to localize kernel is simple and has improved performance than globally designed methods. This manner neglects a critical issue that the variation of neighbors. Therefore, it is important and practical to assign reasonable weights to different neighbors according to their ranking relationship. Another issue is that the initial neighbor ratio $\tau$ of each sample is usually fixed and pre-determined empirically and needs to be tuned to report the best clustering result. As Figure \ref {comparison-KNN} (a)-(b) shows, the obtained localized kernels preserve much noise, which will incur degeneration of clustering performance.
%==========================================================
\section{METHODOLOGY}
This section presents our proposed LSWMKC in detail and provides an efficient three-step optimization solution. Moreover, we analyze convergence, computational complexity, limitation and extensions.
%----------------------------------------------------------
\subsection{Motivation}
From our aforementioned analysis about the traditional localized kernel method in KNN mechanism, we find that: 
\romannumeral1) This seemingly simple method neglects the ranking importance of the neighbors, which may degrade the clustering performance due to the impact of the unreliable distant-distance relationship. 
\romannumeral2) The neighbor ratio is commonly pre-determined empirically and needs to be tuned to report the best results.   

The above issues inspire us to rethink the manner of constructing localized MKC, and a natural motivation is to exploit their ranking relationship and assign a reasonable weight to each neighbor. However, there is no sufficient prior knowledge in kernel space to identify the ranking importance of neighbors. In recent years, graph-based algorithms have been greatly popular with scholars to explore the nonlinear structures of data. An ideal affinity graph exhibits two good properties: 
\romannumeral1) Clear block diagonal structures with $k$ connected blocks, each corresponding to one cluster. 
\romannumeral2) The affinity represents the similarity of pairwise samples, and the intra-cluster affinities are nonzero, while the extra-cluster affinities are zeros. Considering the kernel matrix can be regarded as the affinity graph with additional PSD constraint, a discriminative graph can reveal the latent local manifold representation in kernel space. These issues inspire us to exploit the capacity of graph learning in capturing nonlinear structures of kernel space.  
%----------------------------------------------------------
\subsection{The Proposed Formula}\label{Formula}
Here, we briefly introduce the affinity graph learning method, which will be the base of our proposed model.

For sample set $\{\mathbf{x}_{1}, \ldots, \mathbf{x}_{n}\}$, it is desirable to learn an affinity graph ${\mathbf{Z} \in \mathbb{R}^{n \times n}}$ with distinct distance $\|\mathbf{x}_i - \mathbf{x}_j \|_{2}^{2}$ corresponding to small similarity ${z}_{ij}$, which is formulated as
\begin{equation}\label{GL-reg}
\begin{split}
\min \limits_{\mathbf{Z}}&\; \sum_{i,j=1}^{n}\left \|\mathbf{x}_{i}-\mathbf{x}_{j}\right\|^{2}_{2} z_{ij}+\gamma z_{ij}^{2}, \\
\textrm{s.t.}& \;\;\mathbf{Z}_{i,:} \mathbf{1}_{n}=1, \; z_{ij} \geq 0, \;z_{ii} = 0,
\end{split}
\end{equation}
where $\gamma$ is a hyper-parameter, $\mathbf{Z}_{i,:} \mathbf{1}_{n}=1$ is for normalization, $z_{ij} \geq 0$ is to ensure the non-negative property, $z_{ii} = 0$ can avoid trivial solutions. Commonly, the second term $\ell_{2}$ norm regularization is to avoid undesired trivial solutions \cite{du2015unsupervised,nie2016unsupervised}.

However, the existing graph-based methods are developed in sample space $\mathbb{R}^{d}$ rather than RKHS $\mathcal{H}$ kernel space, significantly limiting their applications. To fill this gap and exploit their potent capacity to capture nonlinear structures in kernel space, by using kernel tricks, the first term of Eq. (\ref{GL-reg}) can be extended as
\begin{equation}\label{mapping}
\resizebox{0.5\textwidth}{!}{$
\begin{split}
\min \limits_{\mathbf{Z}}& \sum_{i,j=1}^{n} \|\psi(\mathbf{x}_{i})-\psi(\mathbf{x}_{j})\|^{2}_{2} z_{ij} \\
= \min \limits_{\mathbf{Z}}& \sum_{i,j=1}^{n} (\psi(\mathbf{x}_{i})^{\top}\psi(\mathbf{x}_{i})-2\psi(\mathbf{x}_{i})^{\top}\psi(\mathbf{x}_{j})+\psi(\mathbf{x}_{j})^{\top}\psi(\mathbf{x}_{j}))z_{ij}\\
= \min \limits_{\mathbf{Z}}& \sum_{i,j=1}^{n} (\kappa({\mathbf{x}_{i},\mathbf{x}_{i}})-2\kappa({\mathbf{x}_{i},\mathbf{x}_{j}})+\kappa({\mathbf{x}_{j},\mathbf{x}_{j}}))z_{ij}\\
= \min _{\mathbf{Z}}& \; 2n - \sum_{i,j=1}^{n} 2\kappa({\mathbf{x}_{i},\mathbf{x}_{j}})z_{ij}
\Leftrightarrow \min_{\mathbf{Z}} \sum_{i,j=1}^{n} -\kappa({\mathbf{x}_{i},\mathbf{x}_{j}})z_{ij} \\
 \textrm{s.t.}&\;\; \mathbf{Z}_{i,:} \mathbf{1}_{n}=1, \; z_{ij} \geq 0, \;z_{ii} = 0.
\end{split}$}
\end{equation}

Note that the condition for Eq. (\ref{mapping}) is that we assume $\kappa({\mathbf{x}_{i},\mathbf{x}_{i}}) = 1$. However, it is not always valid for all the kernel functions. A common choice is the Gaussian kernel which satisfies $\kappa(\mathbf{x}_i,\mathbf{x}_i) = 1$. The present work utilizes this manner or directly downloads the public kernel datasets. Moreover, all the base kernels are firstly centered and then normalized following \cite{cortes2012algorithms, shawe2004kernel}, which further guarantees $\kappa({\mathbf{x}_{i},\mathbf{x}_{i}}) = 1$.   

We have the following insights from the kernelized affinity graph learning model:
\romannumeral1) Compared to using $\|\mathbf{x}_i - \mathbf{x}_j \|_{2}^{2}$ to estimate the pairwise distance in sample space, we should adopt $-\kappa(\mathbf{x}_{i},\mathbf{x}_{j})$ in kernel space.          
\romannumeral2) Such compact form achieves affinity graph learning in kernel space to explore the complex nonlinear structures. 

In multiple kernel learning scenarios, it is commonly assumed that the ideal kernel is optimally combined by given base kernels, Eq. (\ref{mapping}) can be extended as    
\begin{equation}\label{GL-K-MV}
\begin{split}
\min \limits_{\mathbf{Z}, \boldsymbol{\omega}}& \;\sum_{p=1}^{m} \sum_{i,j=1}^{n}\; \; -\omega_{p} {\kappa}_{p} (\mathbf{x}_{i}, \mathbf{x}_{j}) z_{ij} + \gamma z_{ij}^{2}, \\
\textrm{s.t.}&\;\;\left\{ 
     \begin{array}{lr}
        \mathbf{Z}_{i,:} \mathbf{1}_{n}=1, \; z_{ij} \geq 0, \;z_{ii} = 0,\\
        \sum_{{p}=1}^{m} \omega_{p}^{2} = 1, \; \omega_{p} \geq 0,
     \end{array}
     \right.
\end{split}
\end{equation}
where $\omega_{p}$ is the weight of $p$-th base kernel. Since using $\sum_{p=1}^{m}\omega = 1$ will only activate the best kernel, and it incurs the multi-kernel scenario degraded into the undesirable single-kernel scenario. We employ the squared $\ell_{2}$ norm constraint of $\omega_{p}$ to smooth the weights and avoid the sparse trivial solution. Other weight strategies can refer to \cite{nie2016parameter,ren2020consensus,ren2020simultaneous}. The above formula achieves multiple kernel-based graph learning by jointly optimizing kernel weights and consensus affinity graph. Specifically, the learned consensus discriminative graph reveals kernel space's intrinsic local manifold structures by graph learning mechanism and fuses latent clustering information across multiple kernels by weight learning mechanism. 

Recall we aim to estimates the ranking relationship of neighbors with corresponding samples in kernel space. The above discriminative consensus graph inspires us to further learn an optimal neighborhood kernel, which obtains a consensus kernel with naturally sparse property and precise block diagonal structures. This idea can be naturally modeled by minimizing squared $\mathrm{F}$-norm loss $\|\mathbf{K}^\ast - \mathbf{Z}\|_\mathrm{F}^{2}$ with constraints $\mathbf{K}^{\ast} \succeq 0$ and $\mathbf{K}^{\ast} = \mathbf{K}^{\ast\top}$. We define the optimization goal as follows,
\begin{equation}\label{GLK-Matrix}
\begin{split}
 \min \limits_{\mathbf{Z},\mathbf{K}^\ast,\boldsymbol{\omega}}&\; - \mathrm{Tr}\left(\sum_{p=1}^m \omega_{p} \mathbf{K}_{p} \mathbf{Z}^{\top}\right) + \|\mathbf{G} \odot \mathbf{Z}\|_{\mathrm{F}}^{2} + \alpha \|\mathbf{K}^\ast - \mathbf{Z}\|_{2}^{2},  \\
  \textrm{s.t.}&\;\;\left\{
     \begin{array}{lr}
        \mathbf{Z} \mathbf{1}_{n} = \mathbf{1}_{n}, \; \mathbf{Z} \geq 0, \; \mathbf{Z}_{ii} = 0,\\
        \mathbf{K}^{\ast} \succeq 0, \; \mathbf{K}^{\ast} = \mathbf{K}^{\ast\top}, \;
        \sum_{{p}=1}^{m} \omega_{p}^{2} = 1, \; \omega_{p} \geq 0,
     \end{array}
     \right.
\end{split}
\end{equation}
where $\mathbf{G} = \mathbf{1}_{n}^{\top} \otimes \boldsymbol{\gamma}$, $\boldsymbol{\gamma} = (\sqrt{\gamma_{1}},\sqrt{\gamma_{2}},\cdots,\sqrt{\gamma_{n}})^{\top}$ denotes hyper-parameter $\gamma_{i}$ with corresponding $i$-row of $\mathbf{Z}$, $\otimes$ is outer product, $\odot$ is Hadamard product, and $\alpha$ is the balanced hyper-parameter for neighborhood kernel construction. 

Note that $n$ hyper-parameters $\gamma$ corresponding to $n$ rows of $\mathbf{Z}$ respectively, which is due to the following considerations:
\romannumeral1) As our analysis in Eq. (\ref{GL-K-MV}), reasonable hyper-parameters $\gamma$ can avoid trivial solutions, i.e., $\gamma \rightarrow {0}$ or $\gamma \rightarrow {\infty}$ will incur undesired extremely sparse or dense affinity matrix respectively. \romannumeral2) Section \ref{Sub-Z} also illustrates the sub-problem of optimizing $\mathbf{Z}$ involves $n$-row formed independent optimization. It is reasonable to assign different $\gamma_{i}$ to each problem, considering their variations. Such issues inspire us to learn reasonable $\gamma$ instead of empirical and time-consuming parameter-tuning. We derive a theoretical solution in Section \ref{Initialization} and experimentally validate the ablation study on tuning $\gamma$ by grid search in Section \ref{Ablation}.

From the above formula, our proposed LSWMKC model jointly optimizes the kernel weights, the consensus affinity graph, and the consensus neighborhood kernel into a unified framework. Although the formula is straightforward, LSWMKC has the following merits: 
\romannumeral1) It addresses localized kernel problem via a heuristic manner rather than the traditional KNN mechanism, which achieves implicitly optimizing adaptive weights on different neighbors with corresponding samples according to their ranking relationship. 
\romannumeral2) Instead of tuning hyper-parameter $\boldsymbol{\gamma}$ by grid search, we propose an elegant solution to pre-determine it.
\romannumeral3) More advanced graph learning methods in kernel space can be easily introduced to this framework.
%----------------------------------------------------------
\subsection{Optimization}
Simultaneously optimizing all the variables in Eq. (\ref{GLK-Matrix}) is difficult since the optimization objective is not convex. This section provides an effective alternate optimization strategy by optimizing each variable with others been fixed. The original problem is separated into three sub-problems such that each one is convex. 
%----------------------------------------------------------
%
\subsubsection{Optimization ${\omega}_{p}$ with fixed $\mathbf{Z}$ and ${\mathbf{K}^\ast}$}
With fixed $\mathbf{Z}$ and $\mathbf{K}^\ast$, the objective in Eq. ({\ref{GLK-Matrix}}) is formulated as
\begin{equation}\label{weight}
\begin{split}
\max \limits_{\boldsymbol{\omega}}\; \sum_{p=1}^{m} \omega_{p} \delta_{p}, \;\; \textrm{s.t.} \; \sum_{p=1}^{m} \omega_{p}^{2}=1, \omega_{p} \geq 0,
\end{split}
\end{equation}
where $\delta_{p} = \mathrm{Tr}(\mathbf{K}_{p} \mathbf{Z}^{\top})$. This problem could be easily solved with closed-form solution as follows,
\begin{equation}\label{weight_closed_solution}
\begin{split}
\omega_{p} = \frac{\delta_{p}}{\sqrt{\sum_{p=1}^{m} \delta_{p}^{2}}}.
\end{split}
\end{equation}

The computational complexity is $\mathcal{O}(mn^{2})$.
%----------------------------------------------------------
\subsubsection{Optimization $\mathbf{Z}$ with fixed $\mathbf{K}^{\ast}$ and ${\omega}_{p}$} \label{Sub-Z}
With fixed $\mathbf{K}^{\ast}$ and $\omega_{p}$, Eq. ({\ref{GLK-Matrix}}) is transformed to $n$ sub-optimization problems, and each one can be independently solved by 
%----------------
\begin{equation}\label{Opt-Z}  
\begin{split}
\min \limits_{\mathbf{Z}_{i,:}}& \; \left(\gamma_{i} + \alpha \right){\mathbf{Z}_{i,:}} \mathbf{Z}_{i,:}^{\top} - \left(2\alpha \mathbf{K}_{i,:}^\ast +\sum\nolimits_{p=1}^m \omega_{p} \mathbf{K}_{p[i,:]} \right)\mathbf{Z}_{i,:}^{\top},\\
\textrm{s.t.}& \;\; \mathbf{Z}_{i,:} \mathbf{1}_{n}=1, \; \mathbf{Z}_{i,:} \geq 0, \; \mathbf{Z}_{ii} = 0,
\end{split}
\end{equation}
where $\mathbf{K}_{p[i,:]}$ denotes the i-th row of the $p$-th base kernel.

Further, Eq. (\ref{Opt-Z}) can be rewritten as Quadratic Programming (QP) problem, 
%----------------
\begin{equation}\label{Z-QP} 
\begin{split}
\min \limits_{\mathbf{Z}_{i,:}}& \; \frac{1}{2} \mathbf{Z}_{i,:} \mathbf{AZ}_{i,:}^{\top}+{\mathbf{e}}_{i} \mathbf{Z}_{i,:}^{\top},  \\
\textrm{s.t.}& \;\; \mathbf{Z}_{i,:} \mathbf{1}_{n}=1, \; \mathbf{Z}_{i,:} \geq 0, \; \mathbf{Z}_{ii} = 0,
\end{split}
\end{equation}
where $\mathbf{A} = 2(\gamma_{i} + \alpha) \mathbf{I}_{n}, \; {\mathbf{e}}_{i} = - \left(2 \alpha \mathbf{K}^\ast_{i,:} + \sum\nolimits_{p=1}^m \omega_{p} \mathbf{K}_{p[i,:]} \right)$. The global optimal solution of QP problem can be easily solved by the toolbox of MATLAB. Since $\mathbf{Z}_{i,:}$ is a $n$-dimensional row vector, the computational complexity of Eq. (\ref{Z-QP}) is $\mathcal{O}(n^{3} + mn)$ and the total complexity is $\mathcal{O}(n^{4} + mn^{2})$.

%----------------
Furthermore, the above Eq. (\ref{Z-QP}) can be simplified as,
\begin{equation}\label{QP-closed-solution}
\begin{split}
\min \limits_{\mathbf{Z}_{i,:}}& \; \frac{1}{2}\left\|\mathbf{Z}_{i,:}-\hat{\mathbf{Z}}_{i,:}\right\|_{2}^{2}, \\
  \textrm{s.t.}& \;\; \mathbf{Z}_{i,:} \mathbf{1}_{n}=1, \; \mathbf{Z}_{i,:} \geq 0, \; \mathbf{Z}_{ii} = 0,
\end{split}
\end{equation}
where $\hat{\mathbf{Z}}_{i,:} = - \frac{{\mathbf{e}}_{i}}{2(\alpha + \gamma_{i})}$.

%----------------
Mathematically, the following Theorem \ref{Theorem 1} illustrates that the solution of Eq. (\ref{QP-closed-solution}) can be analytically solved.
%----------------
\begin{theorem} 
\label{Theorem 1}
The analytical solution of Eq. (\ref{QP-closed-solution}) is as follows,
\begin{equation}\label{QP-closed-Theorem}
\begin{split}
\mathbf{Z}_{i,:}=\max \left(\hat{\mathbf{Z}}_{i,:}+\beta_{i} \mathbf{1}_{n}^{\top}, 0 \right), \; \mathbf{Z}_{ii} = 0, 
% \; \beta_{i}=\frac{1+\hat{\mathbf{Z}}_{i,:} \mathbf{1}_{n}}{n}.
\end{split}
\end{equation}
where $\beta_{i}$ can be solved by Newton's method efficiently. 
% is obtained by solving $\mathbf{Z}_{i,:} \mathbf{1}_{n}=1$. 
\end{theorem}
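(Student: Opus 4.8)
The plan is to treat Eq.~(\ref{QP-closed-solution}) as the Euclidean projection of $\hat{\mathbf{Z}}_{i,:}$ onto the probability simplex intersected with the hyperplane $\mathbf{Z}_{ii}=0$, and to recover the stated form through the Karush--Kuhn--Tucker (KKT) conditions. Since the objective $\frac{1}{2}\|\mathbf{Z}_{i,:}-\hat{\mathbf{Z}}_{i,:}\|_2^2$ is strictly convex and the feasible set is a nonempty compact convex polytope, the minimizer is unique and the KKT conditions are both necessary and sufficient. First I would eliminate the constraint $\mathbf{Z}_{ii}=0$ by simply fixing that coordinate to zero and optimizing over the remaining $n-1$ free entries $\{z_{ij}\}_{j\neq i}$.

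Next I would introduce a scalar multiplier $\beta_i$ for the normalization $\mathbf{Z}_{i,:}\mathbf{1}_n=1$ and nonnegative multipliers $\eta_{ij}\geq 0$ for the box constraints $z_{ij}\geq 0$, and write the Lagrangian
\begin{equation}
\mathcal{L}=\tfrac{1}{2}\sum_{j\neq i}(z_{ij}-\hat{z}_{ij})^2-\beta_i\Big(\sum_{j\neq i} z_{ij}-1\Big)-\sum_{j\neq i}\eta_{ij}z_{ij}.
\end{equation}
Setting $\partial\mathcal{L}/\partial z_{ij}=0$ gives the stationarity relation $z_{ij}=\hat{z}_{ij}+\beta_i+\eta_{ij}$. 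The key step is to combine this with complementary slackness $\eta_{ij}z_{ij}=0$: whenever $z_{ij}>0$ we must have $\eta_{ij}=0$ and hence $z_{ij}=\hat{z}_{ij}+\beta_i$, while whenever $z_{ij}=0$ the sign condition $\eta_{ij}\geq 0$ forces $\hat{z}_{ij}+\beta_i\leq 0$. These two cases collapse into the single closed form $z_{ij}=\max(\hat{z}_{ij}+\beta_i,0)$, which is exactly Eq.~(\ref{QP-closed-Theorem}), with the fixed entry $\mathbf{Z}_{ii}=0$ carried through unchanged.

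The main obstacle, and the part the statement defers to Newton's method, is pinning down the multiplier $\beta_i$. Substituting the max-form solution back into the normalization constraint yields the scalar equation $f(\beta_i):=\sum_{j\neq i}\max(\hat{z}_{ij}+\beta_i,0)-1=0$. I would observe that $f$ is continuous, piecewise-linear, and monotonically nondecreasing in $\beta_i$, since each summand is a nondecreasing hinge function, with $f(\beta_i)\to -1$ as $\beta_i\to-\infty$ and $f(\beta_i)\to+\infty$ as $\beta_i\to+\infty$, so a root exists and is unique on the strictly increasing branch. Because $f$ is monotone, its root can be located efficiently by Newton's method, which justifies the final clause of the theorem. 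I expect the only subtlety to be the nonsmoothness of $f$ at the breakpoints where some $\hat{z}_{ij}+\beta_i$ crosses zero, but monotonicity guarantees convergence regardless, and one may equivalently sort the $\hat{z}_{ij}$ and scan the breakpoints to obtain $\beta_i$ in closed form.
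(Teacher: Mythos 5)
Your proposal is correct and follows essentially the same route as the paper's own proof: forming the Lagrangian with a scalar multiplier for the normalization constraint and nonnegative multipliers for the sign constraints, applying stationarity and complementary slackness to obtain the $\max(\hat{\mathbf{Z}}_{i,:}+\beta_i\mathbf{1}_n^{\top},0)$ form, and invoking the monotonicity of $\mathbf{Z}_{i,:}\mathbf{1}_n$ in $\beta_i$ to justify Newton's method. Your additional remarks on strict convexity, the existence of the root via the limits of $f$, and the sorting-based closed form are sound refinements of the same argument.
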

%----------------
\begin{proof} 
For $i$-th row of $\mathbf{Z}$, the Lagrangian function of Eq. (\ref{QP-closed-solution}) is as follows,
\begin{equation}\label{QP-closed-proof}
\begin{split}
\mathcal{L}\left(\mathbf{Z}_{i,:}, \beta_{i}, \boldsymbol{\eta}_{i} \right)= \frac{1}{2}\left\|\mathbf{Z}_{i,:}-\hat{\mathbf{Z}}_{i,:}\right\|_{2}^{2}-\beta_{i}\left(\mathbf{Z}_{i,:} \mathbf{1}_{n}-1\right)-\boldsymbol{\eta}_{i} \mathbf{Z}_{i,:}^{\top},
\end{split}
\end{equation}
where scalar $\beta_{i}$ and row vector $\boldsymbol{\eta}_{i}$ are Lagrangian multipliers. According to the KKT condition, 
\begin{equation}\label{QP-closed-1}
\begin{split}
 \left\{\begin{array}{l}\mathbf{Z}_{i,:}-\hat{\mathbf{Z}}_{i,:}-\beta_{i} \mathbf{1}_{n}^{\top}-\boldsymbol{\eta}_{i}=\mathbf{0}^{\top}, \\ 
 \boldsymbol{\eta}_{i} \odot \mathbf{Z}_{i,:}=\mathbf{0}^{\top}, \end{array}\right.
\end{split}
\end{equation}

We have 
\begin{equation}\label{QP-closed-2}
\begin{split}
\mathbf{Z}_{i,:}=\max \left(\hat{\mathbf{Z}}_{i,:}+\beta_{i} \mathbf{1}^{\top}_{n}, 0 \right), \; \mathbf{Z}_{ii} = 0.
\end{split}
\end{equation}

Note that $\mathbf{Z}_{i,:} \mathbf{1}_{n}=1$, and $\mathbf{Z}_{i,:} \mathbf{1}_{n}$ increases monotonically with respect to $\beta_{i}$ according to Eq. (\ref{QP-closed-2}), $\beta_{i}$ can be solved by Newton's method efficiently. This completes the proof.
\end{proof}
By computing the closed-formed solution, the computational complexity of Eq. (\ref{Z-QP}) is reduced to $\mathcal{O}(mn)$, which is mainly from computing ${\mathbf{e}}_{i}$. The total complexity is $\mathcal{O}(mn^{2})$. 
%----------------------------------------------------------
\subsubsection{Optimization $\mathbf{K}^\ast$ with fixed $\mathbf{Z}$ and ${\omega}_{p}$}
With fixed $\mathbf{Z}$ and $\omega_{p}$, the original objective Eq. ({\ref{GLK-Matrix}}) can be converted to
\begin{equation}\label{Opt-K*}
\begin{split}
\min \limits_{\mathbf{K}^\ast}& \; \left\|\mathbf{K}^\ast - \mathbf{Z}\right\|_\mathrm{F}^{2},  \\
  \textrm{s.t.}& \;\; \mathbf{K}^\ast \succeq 0, \;\mathbf{K}^\ast = \mathbf{K}^{\ast\top}.
\end{split}
\end{equation}

However, this seemingly simple sub-problem is hard to be directly solved. Theorem \ref{Theorem 2} provides an equivalent solution. 
%----------------
\begin{theorem}\label{Theorem 2}
The optimization in Eq. (\ref{Opt-K*}) has the same solution as Eq. (\ref{Opt-K*-proof}),
\begin{equation}\label{Opt-K*-proof}
\begin{split}
\min \limits_{\mathbf{K}^\ast}& \; \left \|\mathbf{K}^\ast -\frac{1}{2} (\mathbf{Z}+\mathbf{Z}^{\top}) \right\|_\mathrm{F}^{2},  \\
  \textrm{s.t.}& \;\; \mathbf{K}^\ast \succeq 0, \;\mathbf{K}^\ast = \mathbf{K}^{\ast\top}.
\end{split}
\end{equation}
\end{theorem}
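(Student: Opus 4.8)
The plan is to exploit the single nontrivial feature of the feasible set, namely that it forces $\mathbf{K}^\ast$ to be \emph{symmetric}, so that only the symmetric part of $\mathbf{Z}$ can ever be matched by a feasible $\mathbf{K}^\ast$. First I would split $\mathbf{Z}$ into its symmetric and antisymmetric components, $\mathbf{Z} = \mathbf{Z}_s + \mathbf{Z}_a$ with $\mathbf{Z}_s = \tfrac{1}{2}(\mathbf{Z}+\mathbf{Z}^{\top})$ and $\mathbf{Z}_a = \tfrac{1}{2}(\mathbf{Z}-\mathbf{Z}^{\top})$; note that $\mathbf{Z}_s$ is exactly the target matrix appearing in Eq.~(\ref{Opt-K*-proof}).

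Next I would expand the objective of Eq.~(\ref{Opt-K*}) against the Frobenius inner product as $\|\mathbf{K}^\ast - \mathbf{Z}\|_{\mathrm{F}}^{2} = \|(\mathbf{K}^\ast - \mathbf{Z}_s) - \mathbf{Z}_a\|_{\mathrm{F}}^{2}$. The crucial step, which carries the real content of the theorem, is the orthogonality relation that any symmetric matrix is Frobenius-orthogonal to any antisymmetric matrix. Since $\mathbf{K}^\ast$ is constrained to be symmetric, the difference $\mathbf{K}^\ast - \mathbf{Z}_s$ is symmetric, whereas $\mathbf{Z}_a$ is antisymmetric by construction, so their cross term vanishes.

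The orthogonality itself I would establish via the trace identity: for symmetric $\mathbf{S}$ and antisymmetric $\mathbf{A}$ one has $\mathrm{Tr}(\mathbf{S}\mathbf{A}) = \mathrm{Tr}((\mathbf{S}\mathbf{A})^{\top}) = \mathrm{Tr}(\mathbf{A}^{\top}\mathbf{S}^{\top}) = -\mathrm{Tr}(\mathbf{A}\mathbf{S}) = -\mathrm{Tr}(\mathbf{S}\mathbf{A})$, forcing $\mathrm{Tr}(\mathbf{S}\mathbf{A}) = 0$. With the cross term eliminated, the objective decomposes as $\|\mathbf{K}^\ast - \mathbf{Z}_s\|_{\mathrm{F}}^{2} + \|\mathbf{Z}_a\|_{\mathrm{F}}^{2}$, in which the second term does not depend on $\mathbf{K}^\ast$. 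Hence minimizing the original objective over the symmetric PSD cone produces the identical minimizer as minimizing $\|\mathbf{K}^\ast - \tfrac{1}{2}(\mathbf{Z}+\mathbf{Z}^{\top})\|_{\mathrm{F}}^{2}$ over the same set, which is precisely Eq.~(\ref{Opt-K*-proof}).

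The main obstacle, though a mild one, is recognizing and cleanly justifying this symmetric/antisymmetric Frobenius orthogonality; everything downstream is a routine constant-dropping argument. I would not expect the constraints $\mathbf{K}^\ast \succeq 0$ and $\mathbf{K}^\ast = \mathbf{K}^{\ast\top}$ to play any active role in the \emph{equivalence}: the PSD constraint is shared by both problems and carries over unchanged, while the symmetry constraint is exactly what makes the antisymmetric part of $\mathbf{Z}$ irrelevant to the optimization.
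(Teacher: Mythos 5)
Your proof is correct, and it is actually more complete than the one in the paper. The paper's argument runs through transposition invariance: since $\mathbf{K}^\ast$ is symmetric (the paper attributes this to the ``PSD property,'' though symmetry is the only thing actually used), $\|\mathbf{K}^\ast-\mathbf{Z}\|_\mathrm{F}^2=\|(\mathbf{K}^\ast-\mathbf{Z})^{\top}\|_\mathrm{F}^2=\|\mathbf{K}^\ast-\mathbf{Z}^{\top}\|_\mathrm{F}^2$, and it then jumps directly to the conclusion that the minimizer coincides with that of the symmetrized problem, leaving implicit the step that gets from ``distance to $\mathbf{Z}$ equals distance to $\mathbf{Z}^{\top}$'' to ``same minimizer as distance to $\tfrac{1}{2}(\mathbf{Z}+\mathbf{Z}^{\top})$'' (one would average the two objectives and expand, or invoke exactly the orthogonality you use). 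Your route via the decomposition $\mathbf{Z}=\mathbf{Z}_s+\mathbf{Z}_a$ and the Frobenius orthogonality of symmetric and antisymmetric matrices supplies precisely that missing step, exhibiting the objective as $\|\mathbf{K}^\ast-\mathbf{Z}_s\|_\mathrm{F}^2+\|\mathbf{Z}_a\|_\mathrm{F}^2$ with an explicit $\mathbf{K}^\ast$-independent constant; your observation that the PSD constraint is inert and only the symmetry constraint drives the equivalence is also accurate and sharper than the paper's phrasing. Both arguments rest on the same underlying fact, but yours is the rigorous version of what the paper only sketches.
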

%----------------
\begin{proof}
According to the PSD property of $\mathbf{K}^\ast$, we can derive that the original optimization objective $\left\|\mathbf{K}^\ast - \mathbf{Z}\right\|_\mathrm{F}^{2}$ in Eq. (\ref{Opt-K*}) is equivalent to $\left\|\mathbf{K}^\ast - \mathbf{Z}^{\top}\right\|_\mathrm{F}^{2}$. Therefore, the solution of Eq. (\ref{Opt-K*}) is the same as Eq. (\ref{Opt-K*-proof}). This completes the proof.
\end{proof}
%----------------
According to Theorem \ref{Theorem 2}, supposing the eigenvalue decomposition result of $(\mathbf{Z}+\mathbf{Z}^{\top})/2$ is $\mathbf{U}_\mathbf{Z} \mathbf{\Sigma}_\mathbf{Z} \mathbf{U}^{\top}_\mathbf{Z}$. The optimal $\mathbf{K}^\ast$ can be easily obtained by imposing $\mathbf{K}^\ast = \mathbf{U}_\mathbf{Z} \mathbf{\Sigma} \mathbf{U}^{\top}_\mathbf{Z}$, where $\mathbf{\Sigma} = \max(\mathbf{\Sigma}_\mathbf{Z}, 0)$. Note that the learned $\mathbf{K}^\ast$ can further denoise the $\mathbf{Z}$ from the above optimization. Once we obtain $\mathbf{K}^\ast$, it is exported to KKM to calculate the final results.

%----------------------------------------------------------
\subsection{Initialize the Affinity Graph $\mathbf{Z}$ and Hyper-parameter $\gamma_{i}$}\label{Initialization}
For graph-based clustering methods, the performance is sensitive to the initial affinity graph. A bad graph construction will degrade the overall performance. For the proposed algorithm, we aim to learn a neighborhood kernel $\mathbf{K^{\ast}}$ of the consensus affinity graph $\mathbf{Z}$. This section proposes a strategy to initialize the affinity matrix $\mathbf{Z}$ and the hyper-parameter $\gamma_{i}$.

Recall our objective in Eq. (\ref{GLK-Matrix}), a sparse discriminative affinity graph is preferred. Theoretically, by constraining $\gamma_{i}$ within reasonable bounds, $\mathbf{Z}$ will be naturally sparse. The $c$ nonzero values of $\mathbf{Z}_{i,:}$ denotes the affinity of each instance corresponding to its initialized neighbours. Therefore, with all the other parameters fixed, we learn an initialized $\mathbf{Z}$ with the maximal $\gamma_{i}$. Based on our objective in Eq. (\ref{GLK-Matrix}), by constraining the $\ell_{0}$-norm of $\mathbf{Z}_{i,:}$ to be $c$, we solve the following problem:
\begin{equation}\label{max-gamma}
\begin{split}
\max \limits_{\gamma_{i}} \;\gamma_{i}, \;\; \textrm{s.t.} \; \|\mathbf{Z}_{i,:}\|_{0}=c.
\end{split} 
\end{equation}

Recall the sub-problem of optimizing $\mathbf{Z}$ in Eq. (\ref{QP-closed-solution}), its equivalent form can be written as follows,
\begin{equation}\label{GL-reg-Simplified}
\begin{split}
\min \limits_{\mathbf{Z}_{i,:} \mathbf{1}_{n}=1, \;\mathbf{Z}_{i,:} \geq 0, \;\mathbf{Z}_{ii}=0}\; \frac{1}{2}\left\|\mathbf{Z}_{i,:}+\frac{\mathbf{e}_{i}}{2(\alpha + \gamma_{i})}\right\|_{2}^{2},
\end{split}
\end{equation}
where $\mathbf{e}_{i} = - \left(2 \alpha \mathbf{K}^\ast_{i,:} + \sum\nolimits_{p=1}^m \omega_{p} \mathbf{K}_{p[i,:]} \right)$.
The Lagrangian function of Eq. (\ref{GL-reg-Simplified}) is
\begin{equation}\label{Lag-GL-reg-Simplified}
\begin{split}
\small{\mathcal{L}\left(\mathbf{Z}_{i,:}, \zeta, \boldsymbol{\lambda}_{i}\right)=\frac{1}{2}\left\|\mathbf{Z}_{i,:}+\frac{\mathbf{e}_{i}}{2(\alpha + \gamma_{i})}\right\|_{2}^{2}-\zeta\left(\mathbf{Z}_{i,:} \mathbf{1}_{n}-1\right)-\boldsymbol{\lambda}_{i} \mathbf{Z}_{i,:}^{\top},}
\end{split}
\end{equation}
where scalar $\zeta$ and row vector $\boldsymbol{\lambda}_{i} \geq \mathbf{0}^{\top}$ denote the Lagrange multipliers.
The optimal solution $\mathbf{Z}_{i,:}^{\ast}$ satisfy that the derivative of Eq. (\ref{Lag-GL-reg-Simplified}) equal to zero, that is
\begin{equation}\label{Lag-derivative}
\begin{split}
\mathbf{Z}_{i,:}^{\ast}+\frac{\mathbf{e}_{i}}{2(\alpha + \gamma_{i})}-\zeta \mathbf{1}^{\top}_{n}-\boldsymbol{\lambda}_{i}=\mathbf{0}^{\top}.
\end{split}
\end{equation}

For the ${j}$-th element of $\mathbf{Z}_{i,:}^{\ast}$, we have
\begin{equation}\label{Lag-der-element}
\begin{split}
{z}_{ij}^{\ast}+\frac{e_{ij}}{2(\alpha + \gamma_{i})}-\zeta -\lambda_{ij}={0}.
\end{split}
\end{equation}

According to KKT condition that ${z}_{ij} \lambda_{ij} ={0}$, we have
\begin{equation}\label{z*}
\begin{split}
{z}_{ij}^{\ast}=\max\left(-\frac{e_{ij}}{2(\alpha + \gamma_{i})}+\zeta,0 \right).
\end{split}
\end{equation}

To construct a sparse affinity graph with $c$ valid neighbors, we suppose each row ${e}_{i1}, {e}_{i2}, \ldots, {e}_{in}$ are ordered in ascending order. Naturally, ${e}_{ii}$ ranks first. Considering $\mathbf{Z}_{ii} = 0$, the invalid ${e}_{ii}$ should be neglected since the similarity with itself is useless. That is $\mathbf{Z}_{i,2},\mathbf{Z}_{i,3},\cdots, \mathbf{Z}_{i,c+1} \textgreater 0$ and $\mathbf{Z}_{i,c+2}, \mathbf{Z}_{i,c+3}, \cdots, \mathbf{Z}_{i,n} = 0$, we further derive
\begin{equation}\label{neibour}
\begin{split}
 -\frac{e_{i,c+1}}{2(\alpha + \gamma_{i})}+\zeta \textgreater 0, \;\; -\frac{e_{i,c+2}}{2(\alpha + \gamma_{i})}+\zeta \leq 0.
\end{split}
\end{equation}

According to Eq. (\ref{z*}) and constraint $\mathbf{Z}_{i,:} \mathbf{1}_{n}=1$, we obtain
\begin{equation}\label{zeta}
\begin{split}
\sum_{j=2}^{c+1}\left(-\frac{e_{ij}}{2(\alpha + \gamma_{i})}+\zeta\right)=1. 
\end{split}
\end{equation}

$\zeta$ is formulated as
\begin{equation}\label{eta_fomula}
\begin{split}
\zeta=\frac{1}{c}+\frac{1}{2c(\alpha + \gamma_{i})} \sum_{j=2}^{c+1} e_{ij}.
\end{split}
\end{equation}

Therefore, we have
\begin{equation}\label{gamma_interval}
\begin{split}
\frac{c}{2} e_{i,c+1}-\frac{1}{2} \sum_{j=2}^{c+1} e_{ij} -\alpha < \gamma_{i} \leq \frac{c}{2} e_{i,c+2}-\frac{1}{2} \sum_{j=2}^{c+1} e_{ij} -\alpha. 
\end{split}
\end{equation}

According to the aforementioned derivation, to satisfy $\|\mathbf{Z}^{\ast}_{i,:}\|_{0}=c$, the maximal $\gamma_{i}$ is as follows,
\begin{equation}\label{gamma_max}
\begin{split}
\gamma_{i} = \frac{c}{2} e_{i, c+2}-\frac{1}{2} \sum_{j=2}^{c+1} e_{ij} -\alpha. 
\end{split}
\end{equation}

In the meantime, the initial ${z}_{ij}^{\ast}$ is as follows,
\begin{equation}\label{initial_Z}
\begin{split}
{z}_{ij}^{\ast}=\left\{\begin{array}{cl}\frac{e_{i, c+2}-e_{i,j+1}}{c e_{i, c+2}-\sum_{h=2}^{c+1} e_{i h}}, & j \leq c, \\ 0, & j>c.\end{array}\right. 
\end{split}
\end{equation}

From the above analysis, we initialize a sparse discriminative affinity graph with each row having $c$ nonzero values and derive the maximal $\gamma_{i}$. Note that Eq. (\ref{gamma_interval}) involves an undesired hyper-parameter $\alpha$, to get rid of its impact, we directly impose $\alpha = 0$. Once the initial $\gamma_{i}$ are computed, these coefficients will remain unchanged during the iteration. According to the initialization, we have the following observations: 
\romannumeral1) The construction is simple with basic operations, but can effectively initialize a sparse discriminative affinity graph with block diagonal structures, contributing to the subsequent learning process. 
\romannumeral2) The hyper-parameter $\gamma_{i}$ can be pre-determined to avoid the undesired tuning by grid search. 
\romannumeral3) Initializing the affinity graph involves a parameter, i.e., the number of neighbors $c$. For most cases, $ 5 \leq {c} \leq 10$ is likely to achieve reasonable results and $c$ is fixed at 5 in this work.
%----------------------------algorithm-----------------------
\begin{algorithm}
\caption{LSWMKC}
\label{proposed algorithm}
	\SetAlgoRefName{} 
	\SetAlgoLined
	\KwIn{Base kernel matrices $\{\mathbf{K}_{p}\}_{p=1}^m$,
		clusters $k$, neighbors $c$,
		hyper-parameter $\alpha$.}
	\KwInit{$\mathbf{Z}$ \rm {by Eq. (\ref{initial_Z})}; $\mathbf{K}^{\ast} = \sum\nolimits_{p=1}^m \omega_{p} \mathbf{K}_{p}$; $\gamma_{i}$ \rm {by Eq. (\ref{gamma_max})}; $\omega_{p} = \sqrt{1/m}$.}\\
		\While{not converged}{
		Compute $\omega_{p}$ according to Eq. (\ref{weight})\;
		Compute $\mathbf{Z}$ according to Eq. (\ref{QP-closed-solution})\;
		Compute $\mathbf{K}^{\ast}$ according to Eq. (\ref{Opt-K*-proof})\;
}
	\KwOut{Perform kernel $k$-means on $\mathbf{K}^{\ast}$.}
\end{algorithm}
	\vspace{-15pt}
%----------------------------------------------------------
\subsection{Analysis and Extensions} 
\textit{Computational Complexity}: According to the aforementioned alternate optimization steps, the computational complexity of our LSWMKC model includes three parts. Updating $\omega_{p}$ in Eq. (\ref{weight}) needs $\mathcal{O}(mn^{2})$ to obtain the closed-form solution. When updating $\mathbf{Z}$, the complex QP problem in Eq. (\ref{Z-QP}) is transformed into an equivalent closed-form solution in Eq. (\ref{QP-closed-solution}) whose computational complexity is $\mathcal{O}(mn^{2})$. Updating $\mathbf{K}^{\ast}$ in Eq. (\ref{Opt-K*-proof}) needs $\mathcal{O}(n^{3})$ cost by eigenvalue decomposition. Commonly, $n \gg m$, the total computational complexity of our LSWKMC is $\mathcal{O}(n^{3})$ in each iteration. 

For the post-processing of $\mathbf{K}^{\ast}$, we perform kernel $k$-means to obtain the clustering partition and labels whose computational complexity is $\mathcal{O}(n^{3})$. Although the computational complexity of our LSWMKC algorithm is the same as the compared models \cite{huang2012multiple, gonen2014localized, li2016multiple, liu2016multiple, liu2017optimal, liu2020simplemkkm, ren2020consensus, ren2020simultaneous, liu2021localized}, its clustering performance exhibits significant improvement as reported in Table \ref{Comparasion_Kpi}.

\textit{Convergence}: Jointly optimizing all the variables in Eq. (\ref{GLK-Matrix}) is problematic since our algorithm is non-convex. Instead, as Algorithm \ref{proposed algorithm} shows, we adopt an alternate optimization manner, and each of sub-problems is strictly convex. For each sub-problem, the objective function decreases monotonically during iteration. Consequently, as pointed out in \cite{Bezdek2003}, the proposed model can theoretically obtain a local minimum solution. In experiments, Figure \ref{Convergence} plots the objective evolution during iteration, further validating the convergence.

\textit{Limitation and Extension}: The proposed model provides a heuristic insight on the localized mechanism in kernel space. Nevertheless, we should emphasize that the promising performance obtained at the expense of $\mathcal{O}(n^{3})$ computational complexity, which limits wide applications in large-scale clustering. Introducing more advanced and efficient graph learning methods to this framework deserve future investigation, especially for prototype or anchor learning \cite{li2020multi,9646486,nie2021learning}, which may reduce the complexity from $\mathcal{O}(n^{3})$ to $\mathcal{O}(n^{2})$, even $\mathcal{O}(n)$. Moreover, the present work still requires post-processing to get the final clustering results, i.e. $k$-means. Interestingly, several concise strategies, such as rank constraint \cite{nie2016constrained,ren2020consensus,nie2021learning} or one-pass manner \cite{liu2021one}, provide promising solutions of directly obtaining the clustering labels, these deserve further research.
%==========================================================
\section{Experiment}
\label{experiments}
This section conducts extensive experiments to evaluate the performance of our proposed algorithm, including clustering performance, running time, comparison with KNN mechanism, kernel weights, visualization, convergence, parameter sensitivity analysis, and ablation study. 
%----------------------------------------------------------
\subsection{Datasets}
\begin{table}[h]
\vspace{-5pt}
\small
\caption{Datasets summary}
\centering
\begin{tabular}{c|c|c|c}\hline
\hline
Datasets                &      Samples &      Views &   Clusters \\\hline
YALE                    &      165 &          5 &         15 \\
MSRA                    &      210 &          6 &          7 \\
Caltech101-7            &      441 &          6 &          7 \\
PsortPos                &      541 &         69 &          4 \\
BBC                     &      544 &          2 &          5 \\
BBCSport                &      544 &          6 &          5 \\
ProteinFold             &      694 &         12 &         27 \\
PsortNeg                &     1444 &         69 &          5 \\
Caltech101-mit          &     1530 &         25 &        102 \\
Handwritten             &     2000 &          6 &         10 \\
Mfeat                   &     2000 &         12 &         10 \\
Scene15                 &     4485 &          3 &         15 \\
\hline\hline
\end{tabular}
\label{Datasets}
\vspace{-8pt}
\end{table}
Table \ref{Datasets} lists the twelve widely employed multi-kernel benchmark datasets, including\\
(1) {\textbf{YALE}\footnote{\footnotesize{\texttt{http://vision.ucsd.edu/content/yale-face-database}}}} includes 165 face gray-scale images from 15 individuals with different facial expressions or configurations, and each subject includes 11 images.\\
(2) \textbf{MSRA} derived from MSRCV1 \cite{winn2005locus}, contains 210 images with 7 clusters, including airplane, bicycle, building, car, caw, face, and tree.\\
(3) {\textbf{Caltech101-7 and Caltech101-mit}\footnote{\footnotesize{\texttt{http://www.vision.caltech.edu/Image\_Datasets/\\Caltech101/}}}} originated from Caltech101, including 101 object categories (e.g., ``face", ``dollar bill", and ``helicopter") and a background category.\\
(4) {\textbf{PsortPos} and \textbf{PsortNeg}\footnote{\footnotesize{\texttt{https://bmi.inf.ethz.ch/supplements/protsubloc}}}} are bioinformatics MKL datasets used for protein subcellular localization research.\\
(5) {\textbf{BBC} and \textbf{BBCSport} \footnote{\footnotesize{\texttt{http://mlg.ucd.ie/datasets/bbc.html}}}} are two news corpora datasets derived from BBC News, consisting of various documents corresponding to stories or sports news in 5 areas.\\
(6) {\textbf{ProteinFold}\footnote{\footnotesize{\texttt{mkl.ucsd.edu/dataset/protein-fold-prediction}}}} is a bioinformatics dataset containing 694 protein patterns and 27 protein folds.\\
(7) {\textbf{Handwritten}\footnote{\footnotesize{\texttt{http://archive.ics.uci.edu/ml/datasets/}}}} and {\textbf{Mfeat}\footnote{\footnotesize{\texttt{https://datahub.io/machine-learning/mfeat-pixel}}}} are image datasets originated from UCI ML repository, including 2000 digits of handwritten numerals (``0"–``9").\\
(8) {\textbf{Scene-15} \footnote{\footnotesize{\texttt{https://www.kaggle.com/yiklunchow/scene15}}}}contains 4485 gray-scale images, 15 environmental categories, and 3 features (GIST, PHOG, and LBP).

All the pre-computed base kernels within the datasets are publicly available on websites, and are centered and then normalized following \cite{cortes2012algorithms, shawe2004kernel}. 
%----------------------------------------------------------
\subsection{Compared Algorithms}
Thirteen existing multiple kernel or graph-based algorithms are compared with our proposed model, including\\
(1) {\bf Avg-KKM} combines base kernels with uniform weights.\\
(2) {\bf MKKM} \cite{huang2012multiple} optimally combines multiple kernels by alternatively performing KKM and updating the kernel weights.\\
(3) {\bf LMKKM} \cite{gonen2014localized} can optimally fuse base kernels via an adaptive sample-weighted strategy.\\
(4) {\bf MKKM-MR} \cite{liu2016multiple} improve the diversity of kernels by introducing a matrix-induced regularization term.\\
(5) {\bf LKAM} \cite{li2016multiple} introduces localized kernel maximizing alignment by constraining $\tau$-nearest neighbors of each sample.\\
(6) {\bf ONKC} \cite{liu2017optimal} regards the optimal kernel as the neighborhood kernel of the combined kernel.\\
(7) {\bf SwMC} \cite{nie2017self} eliminates the undesired hyper-parameter via a self-weighted strategy. \\
(8) {\bf LF-MVC} \cite{wang2019multi} aims to achieve maximal alignment of consensus partition and base ones via a late fusion manner.\\
(9) {\bf SPMKC} \cite{ren2020simultaneous} simultaneously performs consensus kernel learning and graph learning.\\
(10) {\bf SMKKM} \cite{liu2020simplemkkm} proposes a novel min-max optimization based on kernel alignment criterion.\\
(11) {\bf CAGL} \cite{ren2020consensus} proposes a multi-kernel graph-based clustering model to directly learn a consensus affinity graph with rank constraint.\\
(12) {\bf OPLFMVC} \cite{liu2021one} can directly learn the cluster labels on the base partition level. \\
(13) {\bf LSMKKM} \cite{liu2021localized} is localized SMKKM in KNN method.
%----------------------------------------------------------
\subsection{Experimental Settings}
Regarding the benchmark datasets, it is commonly assumed that the true number of clusters ${k}$ is known. For the methods involving $k$-means, the centroid of clusters is repeatedly and randomly initialized 50 times to reduce its randomness and report the best results. Regarding all the compared algorithms, we directly download the public MATLAB code and carefully tune the hyper-parameters following the original suggestion. For our proposed LSWMKC, the balanced hyper-parameter $\alpha$ varies in $[2^{0},2^{1},\cdots,2^{10}]$ by grid search. The clustering performance is evaluated by four widely employed criteria, including clustering Accuracy (ACC), Normalized Mutual Information (NMI), Purity, and Adjusted Rand Index (ARI). The experimental results are obtained from a desktop with Intel Core i7 8700K CPU (3.7GHz), 64GB RAM, and MATLAB 2020b (64bit).
\begin{table*}[!t]
\caption{{ACC, NMI, Purity, and ARI comparisons of fourteen clustering algorithms on twelve benchmark datasets.}}\label{Comparasion_Kpi}
\begin{center}
\vspace{-5pt}
\tiny
{
  \centering
  \resizebox{\textwidth}{!}{
  \tiny
    \begin{tabular}{|c|c|c|c|c|c|c|c|c|c|c|c|c|c|c|}
    \toprule
    \multirow{2}{*}{Datasets}  &\multirow{2}{*}{Avg-KKM} &{MKKM} &{LMKKM} &{MKKM-MR} &{LKAM} &{ONKC} &{SwMC}  &{LF-MVC} &{SPMKC} &{SMKKM} &{CAGL} &{OPLFMVC} &{LSMKKM}  &\multirow{2}{*}{Proposed}\\
                      &   &\tiny{(2011)}  &\tiny{(2014)}  &\tiny{(2016)}  &\tiny{(2016)}  &\tiny {(2017)}  &\tiny{(2017)}  &\tiny{(2019)}  &\tiny{(2020)}  &\tiny{(2020)} &\tiny{(2020)}  &\tiny{(2021)} &\tiny{(2021)}  & \\
    
    \midrule
    &\multicolumn{14}{c|} {ACC $(\%)$} \\
    \hline
    YALE & 54.73 & 52.00 & 52.27 & 56.24 & 58.88 & 56.36 & 46.67 & 55.00 & {\color[HTML]{0000FF} \textbf{65.45}} & 56.03 & 53.33 & 55.76 & 59.24 & {\color[HTML]{FF0000} \textbf{66.67}} \\
        \hline
    MSRA & 83.33 & 81.29 & 81.93 & 88.07 & 89.14 & 85.36 & 23.33 & 87.76 & 79.05 & 86.50 & {\color[HTML]{FF0000} \textbf{99.05}} & 87.14 & {\color[HTML]{0000FF}\textbf{91.19}} &  {90.95} \\
        \hline
    Caltech101-7 & 59.17 & 52.15 & 53.89 & 68.44 & 70.39 & 69.42 & 54.65 & 71.39 & 62.59 & 68.15 & {\color[HTML]{FF0000} \textbf{78.91}} & 73.47 & 76.21 & {\color[HTML]{0000FF} \textbf{76.64}} \\
        \hline
    PsortPos & 56.94 & 60.70 & {\color[HTML]{0000FF} \textbf{61.84}} & 49.21 & 53.08 & 50.41 & 37.71 & 53.21 & 36.04 & 43.70 & 48.80 & 56.38 & 49.50 & {\color[HTML]{FF0000} \textbf{65.06}} \\
        \hline
    BBC & 63.17 & 63.03 & 63.90 & 63.17 & 73.85 & 63.35 & 36.03 & 76.42 & 88.79 & 64.20 & 76.10 & {\color[HTML]{0000FF} \textbf{90.26}} & 73.58 & {\color[HTML]{FF0000} \textbf{96.51}} \\
        \hline
    BBCSport & 66.25 & 66.24 & 66.58 & 66.17 & 76.58 & 66.43 & 36.03 & 76.46 & 40.81 & 66.76 & {\color[HTML]{0000FF} \textbf{89.15}} & 81.25 & 77.11 & {\color[HTML]{FF0000} \textbf{97.24}} \\
        \hline
    ProteinFold & 28.97 & 26.99 & 22.41 & 34.72 & {\color[HTML]{FF0000} \textbf{37.73}} & 36.27 & 14.99 & 33.00 & 21.61 & 34.68 & 32.28 & 35.88 & 35.91 & {\color[HTML]{0000FF} \textbf{36.60}} \\
        \hline
    PsortNeg & 41.01 & {\color[HTML]{0000FF} \textbf{51.88}} & - & 39.71 & 40.53 & 40.15 & 26.59 & 45.52 & 25.14 & 41.54 & 27.77 & 48.13 & 45.69 & {\color[HTML]{FF0000} \textbf{52.77}} \\
        \hline
    Caltech101-mit & 34.16 & 32.81 & 27.94 & 34.75 & 32.28 & 34.02 & 22.42 & 34.41 & 36.99 & 35.85 & {\color[HTML]{FF0000} \textbf{44.18}} & 24.84 & 36.96 & {\color[HTML]{0000FF} \textbf{39.35}} \\
        \hline
    Handwritten & 95.99 & 64.94 & 65.03 & 88.66 & 95.40 & 89.51 & 58.50 & 95.80 & 28.15 & 93.57 & 88.25 & 92.25 & {\color[HTML]{0000FF} \textbf{96.48}} & {\color[HTML]{FF0000} \textbf{97.45}} \\
        \hline
    Mfeat & 93.83 & 64.31 & - & 88.53 & 82.28 & 88.85 & 78.65 & 92.85 & 16.95 & 94.19 & 87.50 & 93.80 & {\color[HTML]{0000FF} \textbf{96.95}} & {\color[HTML]{FF0000} \textbf{97.50}} \\
        \hline
    Scene15 & 43.17 & 41.18 & 40.85 & 38.41 & 41.42 & 39.93 & 11.33 & {\color[HTML]{0000FF} \textbf{45.82}} & 11.82 & 43.60 & 22.30 & 43.26 & 43.80 & {\color[HTML]{FF0000} \textbf{48.58}}\\
    \midrule
    &\multicolumn{14}{c|}{NMI $(\%)$} \\
    \hline
    YALE & 57.32 & 54.35 & 54.56 & 58.63 & 60.23 & 59.54 & 48.86 & 57.54 & {\color[HTML]{0000FF} \textbf{64.11}} & 58.91 & 59.93 & 56.90 & 60.31 & {\color[HTML]{FF0000} \textbf{66.15}} \\
        \hline
    MSRA & 73.99 & 73.22 & 75.01 & 77.59 & 79.83 & 74.89 & 22.86 & 79.39 & 69.35 & 75.17 & {\color[HTML]{FF0000} \textbf{97.85}} & 78.96 & 82.63 & {\color[HTML]{0000FF}\textbf{85.15}}\\
        \hline
    Caltech101-7 & 59.07 & 51.60 & 52.13 & 64.12 & 65.35 & 63.52 & 58.20 & 70.08 & 56.06 & 63.73 & {\color[HTML]{FF0000} \textbf{83.85}} & 69.37 & {\color[HTML]{0000FF} \textbf{74.15}} & 71.72 \\
        \hline
    PsortPos & 28.73 & 35.50 & {\color[HTML]{0000FF} \textbf{37.16}} & 21.13 & 24.54 & 25.43 & 2.28 & 24.95 & 5.48 & 23.76 & 24.19 & 28.33 & 24.01 & {\color[HTML]{FF0000} \textbf{39.65}} \\
        \hline
    BBC & 43.50 & 43.58 & 44.01 & 43.46 & 65.42 & 43.53 & 2.00 & 58.86 & 74.60 & 44.45 & {\color[HTML]{0000FF} \textbf{80.81}} & 79.69 & 65.09 & {\color[HTML]{FF0000} \textbf{90.05}} \\
        \hline
    BBCSport & 54.18 & 54.09 & 54.37 & 53.85 & 54.50 & 53.51 & 3.71 & 57.59 & 6.75 & 49.34 & {\color[HTML]{0000FF} \textbf{79.82}} & 65.25 & 54.81 & {\color[HTML]{FF0000} \textbf{91.03}} \\
        \hline
    ProteinFold & 40.32 & 38.03 & 34.68 & 43.70 & {\color[HTML]{FF0000} \textbf{46.25}} & 44.38 & 7.91 & 41.72 & 33.03 & 44.44 & 41.56 & 41.90 & 45.15 & {\color[HTML]{0000FF} \textbf{46.03}} \\
        \hline
    PsortNeg & 17.39 & {\color[HTML]{FF0000} \textbf{32.16}} & - & 21.65 & 21.76 & 21.03 & 0.66 & 18.75 & 0.31 & 19.05 & 12.21 & 23.25 & 17.01 & {\color[HTML]{0000FF} \textbf{30.20}} \\
        \hline
    Caltech101-mit & 59.30 & 58.57 & 55.26 & 59.72 & 58.48 & 59.30 & 30.91 & 59.55 & 60.11 & 60.35 & {\color[HTML]{FF0000} \textbf{66.12}} & 52.86 & 61.37 & {\color[HTML]{0000FF}\textbf{62.91}} \\
        \hline
    Handwritten & 91.09 & 64.79 & 64.74 & 79.44 & 91.83 & 80.66 & 61.38 & 90.91 & 15.98 & 87.42 & 92.30 & 84.80 & {\color[HTML]{0000FF} \textbf{93.56}} & {\color[HTML]{FF0000} \textbf{94.17}} \\
        \hline
    Mfeat & 89.09 & 59.82 & - & 80.41 & 84.89 & 80 & 84.56 & 88.60 & 3.82 & 88.64 & 91.34 & 87.09 & {\color[HTML]{0000FF} \textbf{93.18}} & {\color[HTML]{FF0000} \textbf{94.31}} \\
        \hline
    Scene15 & 41.31 & 38.62 & 38.79 & 37.25 & 42.14 & 37.73 & 2.61 & {\color[HTML]{0000FF} \textbf{42.71}} & 2.89 & 40.60 & 29.36 & 41.88 & 40.97 & {\color[HTML]{FF0000} \textbf{46.70}}\\
    \midrule
    &\multicolumn{14}{c|}{Purity $(\%)$} \\
    \hline
    YALE & 55.42 & 52.94 & 53.06 & 56.58 & 59.42 & 57.18 & 50.91 & 56.03 & {\color[HTML]{0000FF} \textbf{66.06}} & 56.42 & 55.15 & 56.97 & 59.88 & {\color[HTML]{FF0000} \textbf{67.27}} \\
        \hline
    MSRA & 83.33 & 81.45 & 81.93 & 88.07 & 89.14 & 85.36 & 30.48 & 87.76 & 79.05 & 86.50 & {\color[HTML]{FF0000} \textbf{99.05}} & 87.14 & {\color[HTML]{0000FF}\textbf{91.19}} & {90.95} \\
        \hline
    Caltech101-7 & 68.05 & 63.84 & 66.39 & 72.93 & 76.55 & 73.97 & 64.63 & 79.59 & 68.93 & 72.34 & {\color[HTML]{FF0000} \textbf{83.22}} & 80.27 & {\color[HTML]{0000FF} \textbf{81.42}} & 81.41 \\
        \hline
    PsortPos & 60.74 & 66.66 & {\color[HTML]{0000FF} \textbf{68.03}} & 56.14 & 61.03 & 60.79 & 37.71 & 57.07 & 46.03 & 57.63 & 48.80 & 60.63 & 53.72 & {\color[HTML]{FF0000} \textbf{68.76}} \\
        \hline
    BBC & 68.06 & 68.15 & 68.40 & 68.03 & 79.39 & 68.10 & 36.76 & 76.75 & 88.79 & 68.68 & 76.29 & {\color[HTML]{0000FF} \textbf{90.26}} & 79.17 & {\color[HTML]{FF0000} \textbf{96.51}} \\
        \hline
    BBCSport & 77.33 & 77.27 & 77.50 & 77.10 & 76.58 & 76.99 & 37.87 & 78.30 & 40.81 & 73.52 & {\color[HTML]{0000FF} \textbf{89.15}} & 81.25 & 77.11 & {\color[HTML]{FF0000} \textbf{97.24}} \\
        \hline
    ProteinFold & 37.39 & 33.70 & 31.16 & 41.89 & {\color[HTML]{FF0000} \textbf{43.70}} & 42.67 & 18.30 & 39.33 & 28.24 & 41.79 & 35.88 & 38.33 & 42.52 & {\color[HTML]{0000FF} \textbf{42.80}} \\
        \hline
    PsortNeg & 43.33 & {\color[HTML]{0000FF} \textbf{56.61}} & - & 44.66 & 45.29 & 44.67 & 27.22 & 48.22 & 27.08 & 42.17 & 30.96 & 51.80 & 47.17 & {\color[HTML]{FF0000} \textbf{57.06}} \\
        \hline
    Caltech101-mit & 36.22 & 34.88 & 29.56 & 36.77 & 34.30 & 36.16 & 26.08 & 36.65 & 39.22 & 37.96 & {\color[HTML]{FF0000} \textbf{46.80}} & 25.75 & 39.25 & {\color[HTML]{0000FF}\textbf{41.31}} \\
        \hline
    Handwritten & 95.99 & 65.84 & 65.52 & 88.66 & 95.44 & 89.51 & 58.70 & 95.80 & 30.50 & 93.57 & 88.25 & 92.25 & {\color[HTML]{0000FF} \textbf{96.52}} & {\color[HTML]{FF0000} \textbf{97.45}} \\
        \hline
    Mfeat & 94.13 & 64.95 & - & 88.53 & 86.02 & 88.85 & 78.80 & 93.27 & 17.60 & 94.19 & 87.85 & 93.80 & {\color[HTML]{0000FF} \textbf{96.95}} & {\color[HTML]{FF0000} \textbf{97.70}} \\
        \hline
    Scene15 & 47.85 & 44.29 & 44.30 & 42.40 & 46.01 & 43.60 & 11.62 & {\color[HTML]{0000FF} \textbf{49.36}} & 13.00 & 48.38 & 22.52 & 47.65 & 48.62 & {\color[HTML]{FF0000} \textbf{50.81}}\\
    \midrule
    &\multicolumn{14}{c|}{ARI $(\%)$} \\
    \hline
    YALE & 33.93 & 30.42 & 30.50 & 35.49 & 37.31 & 36.56 & 13.17 & 34.29 & {\color[HTML]{0000FF} \textbf{43.70}} & 35.86 & 32.56 & 34.21 & 37.89 & {\color[HTML]{FF0000}\textbf{45.06}} \\
        \hline
    MSRA & 68.14 & 66.22 & 68.00 & 74.46 & 76.66 & 69.76 & 6.90 & 74.52 & 59.60 & 71.17 & {\color[HTML]{FF0000} \textbf{97.77}} & 74.11 & 80.63 & {\color[HTML]{0000FF}\textbf{81.38}} \\
        \hline
    Caltech101-7 & 46.02 & 38.30 & 41.23 & 55.62 & 59.44 & 56.75 & 40.59 & 65.19 & 45.01 & 55.64 & {\color[HTML]{FF0000} \textbf{74.40}} & 65.14 & 68.81 & {\color[HTML]{0000FF} \textbf{74.34}} \\
        \hline
    PsortPos & 24.36 & {\color[HTML]{0000FF}\textbf{32.19}} & {\color[HTML]{FF0000} \textbf{33.98}} & 18.93 & 26.68 & 21.44 & 0.80 & 19.60 & 4.42 & 19.50 & 11.24 & 23.94 & 18.45 & {31.80} \\
        \hline
    BBC & 39.28 & 39.24 & 40.33 & 39.27 & 62.27 & 39.45 & -0.03 & 56.97 & 74.28 & 40.80 & 61.50 & {\color[HTML]{0000FF} \textbf{82.40}} & 61.79 & {\color[HTML]{FF0000} \textbf{89.66}} \\
        \hline
    BBCSport & 48.10 & 47.97 & 48.11 & 47.77 & 54.46 & 47.12 & 0.34 & 54.76 & 3.47 & 42.64 & {\color[HTML]{0000FF} \textbf{75.59}} & 63.69 & 48.10 & {\color[HTML]{FF0000} \textbf{92.01}} \\
        \hline
    ProteinFold & 14.36 & 12.11 & 7.76 & 17.15 & {\color[HTML]{0000FF} \textbf{20.08}} & 18.01 & -0.04 & 16.08 & 7.65 & 17.61 & 7.44 & 19.71 & 19.83 & {\color[HTML]{FF0000} \textbf{20.36}} \\
        \hline
    PsortNeg & 13.14 & {\color[HTML]{0000FF} \textbf{26.75}} & - & 16.85 & 16.04 & 16.93 & -0.17 & 16.09 & -0.08 & 13.13 & 1.88 & 19.76 & 13.84 & {\color[HTML]{FF0000}\textbf{27.44}} \\
        \hline
    Caltech101-mit & 18.42 & 17.34 & 13.37 & 18.78 & 16.82 & 18.32 & 0.90 & 18.79 & 18.54 & 19.83 & 14.82 & 12.30 & {\color[HTML]{0000FF} \textbf{21.04}} & {\color[HTML]{FF0000} \textbf{23.75}} \\
        \hline
    Handwritten & 91.33 & 51.76 & 50.38 & 77.16 & 91.65 & 78.70 & 37.97 & 90.98 & 8.30 & 86.45 & 85.72 & 83.82 & {\color[HTML]{0000FF} \textbf{93.49}} & {\color[HTML]{FF0000} \textbf{94.45}} \\
        \hline
    Mfeat & 88.36 & 46.88 & - & 77.36 & 79.25 & 77.32 & 77.73 & 87.09 & 1.37 & 87.68 & 88.11 & 86.80 & {\color[HTML]{0000FF} \textbf{93.32}} & {\color[HTML]{FF0000} \textbf{94.54}} \\
        \hline
    Scene15 & 26.03 & 22.62 & 22.87 & 22.70 & 24.84 & 23.46 & 0.20 & 27.31 & 0.70 & 25.37 & 5.84 & {\color[HTML]{0000FF} \textbf{27.37}} & 25.77 & {\color[HTML]{FF0000} \textbf{29.99}}\\
    \bottomrule
    \end{tabular}}
    }
\end{center}
\vspace{-5pt}
\end{table*}
%----------------------------------------------------------------------------
\begin{figure*}[!t]
\vspace{-5pt}
\centering
\includegraphics[width= 0.99\textwidth]{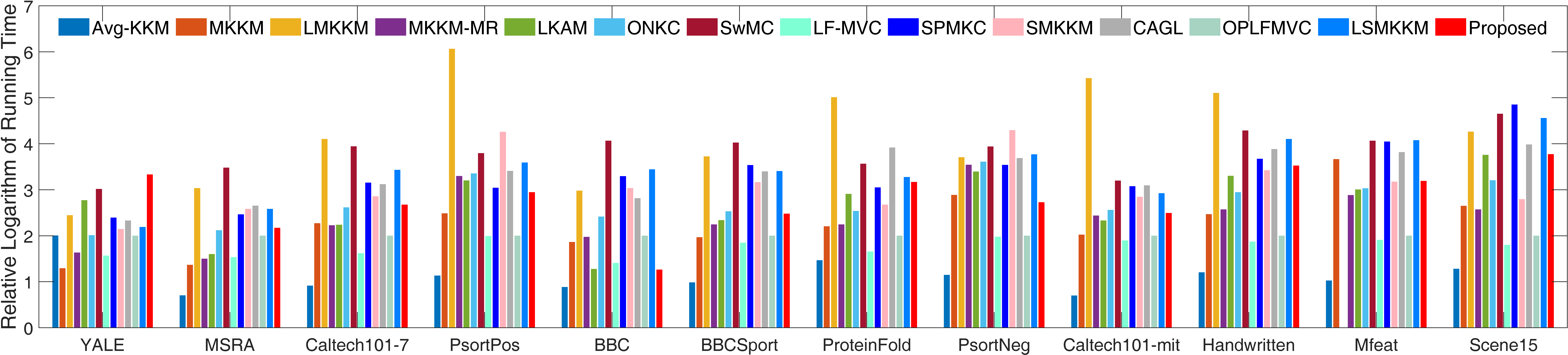}
\caption{Relative logarithm time-consuming comparison of fourteen models on twelve datasets.} 
\label{Compare_time}
\vspace{-10pt}
\end{figure*}

%----------------------------------------------------------------------------
\begin{table*}[!t]
\caption{{ACC, NMI, Purity, and ARI comparisons of our proposed algorithm and KNN mechanism on twelve benchmark datasets. }}\label{comparison_KNN}
\begin{center}
\vspace{-5pt}
\scriptsize
    \begin{tabular}{|c|c|c|c|c|c|c|c|c|c|c|c|c|}
    \toprule
    Datasets & YALE & MSRA & Caltech101-7 & PsortPos & BBC & BBCSport & ProteinFold & PsortNeg & Caltech101-mit & Handwritten & Mfeat & Scene15 \\
    \midrule
    &\multicolumn{12}{c|}{ACC $(\%)$} \\
    \hline
    KNN & 63.03 & 90.48 & 74.15 & 64.14 & 71.69 & 72.06 & 36.31 & 51.73 & 37.32 & 96.75 & 96.75 & 46.82 \\
    \hline
    Proposed & {\color[HTML]{FF0000} \textbf{66.67}} & {\color[HTML]{FF0000} \textbf{90.95}} & {\color[HTML]{FF0000} \textbf{76.64}} & {\color[HTML]{FF0000} \textbf{65.06}} & {\color[HTML]{FF0000} \textbf{96.51}} & {\color[HTML]{FF0000} \textbf{97.24}} & {\color[HTML]{FF0000} \textbf{36.60}} & {\color[HTML]{FF0000} \textbf{52.77}} & {\color[HTML]{FF0000} \textbf{39.35}} & {\color[HTML]{FF0000} \textbf{97.45}} & {\color[HTML]{FF0000} \textbf{97.50}} & {\color[HTML]{FF0000} \textbf{48.58}}
\\
    \midrule
    &\multicolumn{12}{c|}{NMI $(\%)$} \\
    \hline
    KNN & 62.00 & 83.90 & 68.78 & 35.48 & 55.66 & 48.53 & 44.22 & 28.08 & 61.74 & 92.87 & 92.88 & 42.33 \\
    \hline
    Proposed & {\color[HTML]{FF0000} \textbf{66.15}} & {\color[HTML]{FF0000} \textbf{85.15}} & {\color[HTML]{FF0000} \textbf{72.12}} & {\color[HTML]{FF0000} \textbf{39.65}} & {\color[HTML]{FF0000} \textbf{90.05}} & {\color[HTML]{FF0000} \textbf{91.03}} & {\color[HTML]{FF0000} \textbf{46.03}} & {\color[HTML]{FF0000} \textbf{30.20}} & {\color[HTML]{FF0000} \textbf{62.91}} & {\color[HTML]{FF0000} \textbf{94.17}} & {\color[HTML]{FF0000} \textbf{94.31}} & {\color[HTML]{FF0000} \textbf{46.70}}\\
    \midrule
    &\multicolumn{12}{c|}{Purity $(\%)$} \\
    \hline
    KNN & 63.64 & 90.48 & 78.91 & 68.39 & 73.16 & 73.16 & 42.36 & 53.88 & 39.22 & 96.75 & 96.75 & 49.63 \\
    \hline
    Proposed & {\color[HTML]{FF0000} \textbf{67.27}} & {\color[HTML]{FF0000} \textbf{90.95}} & {\color[HTML]{FF0000} \textbf{81.41}} & {\color[HTML]{FF0000} \textbf{68.76}} & {\color[HTML]{FF0000} \textbf{96.51}} & {\color[HTML]{FF0000} \textbf{97.24}} & {\color[HTML]{FF0000} \textbf{42.80}} & {\color[HTML]{FF0000} \textbf{57.06}} & {\color[HTML]{FF0000} \textbf{41.31}} & {\color[HTML]{FF0000} \textbf{97.45}} & {\color[HTML]{FF0000} \textbf{97.50}} & {\color[HTML]{FF0000} \textbf{50.81}}\\
    \midrule
    &\multicolumn{ 12}{c|}{ARI $(\%)$} \\
    \hline
    KNN & 40.19 & 79.95 & 67.50 & {\color[HTML]{FF0000} \textbf{34.73}} & 45.11 & 42.93 & 19.44 & 24.02 & 21.35 & 92.95 & 92.94 & 28.31 \\
    \hline
    Proposed & {\color[HTML]{FF0000} \textbf{45.06}} & {\color[HTML]{FF0000} \textbf{81.38}} & {\color[HTML]{FF0000} \textbf{74.34}} & 31.80 & {\color[HTML]{FF0000} \textbf{86.66}} & {\color[HTML]{FF0000} \textbf{92.01}} & {\color[HTML]{FF0000} \textbf{20.36}} & {\color[HTML]{FF0000} \textbf{27.44}} & {\color[HTML]{FF0000} \textbf{23.75}} & {\color[HTML]{FF0000} \textbf{94.45}} & {\color[HTML]{FF0000} \textbf{94.54}} & {\color[HTML]{FF0000} \textbf{29.99}}\\
    \bottomrule
    \end{tabular}
    \end{center}
\vspace{-10pt}
\end{table*}
%----------------------------------------------------------------------------
\begin{figure*}[!t]
\vspace{-10pt}
\begin{center}{
		\centering
			\subfloat[KNN (neighbor index)]{{\includegraphics[width=0.22\textwidth]{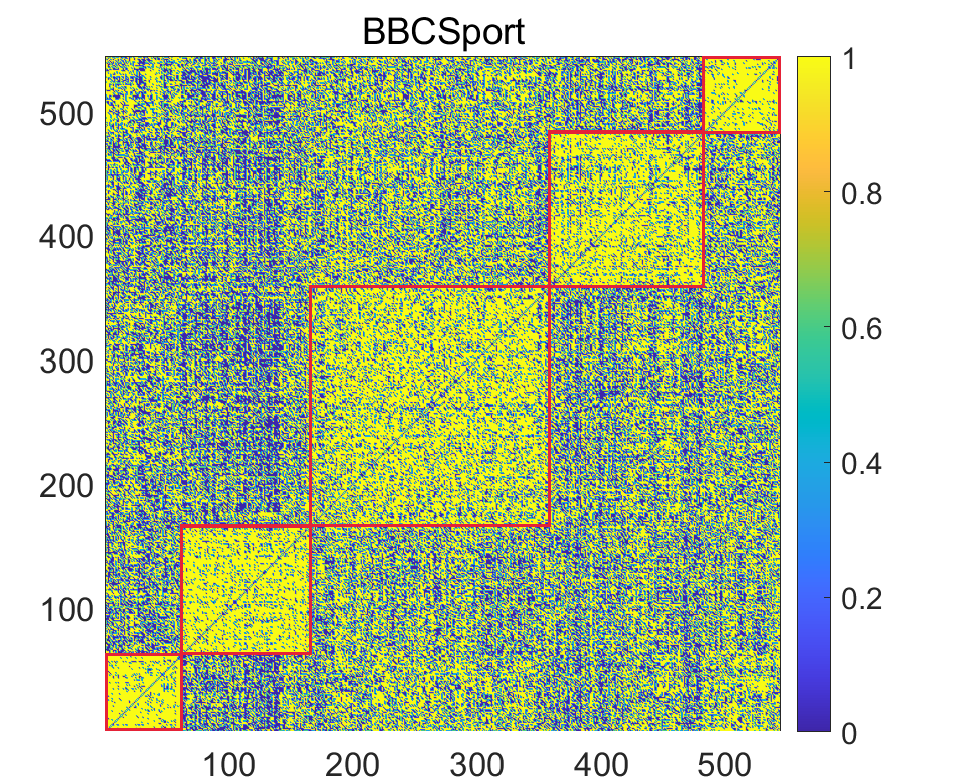}} \label{d-1}}\hspace{3mm}
			\subfloat[KNN (${\mathbf{K}}_{(l)}$)]{{\includegraphics[width=0.22\textwidth]{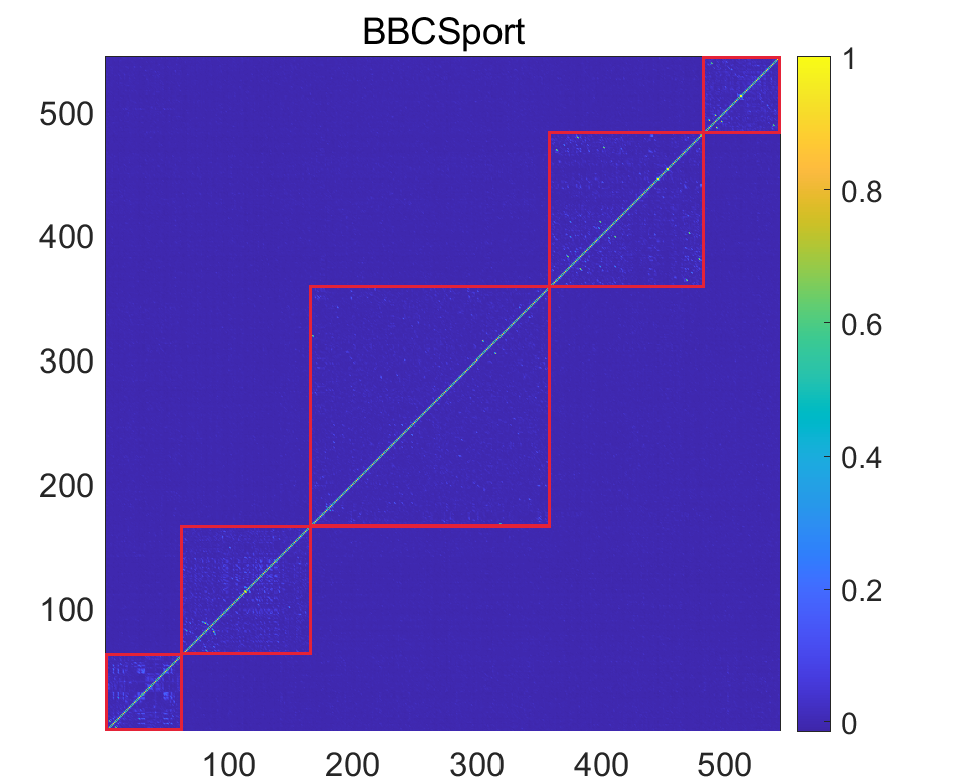}} \label{d-2}}\hspace{3mm}
            \subfloat[Proposed ($\mathbf{Z}$)]{{\includegraphics[width=0.22\textwidth]{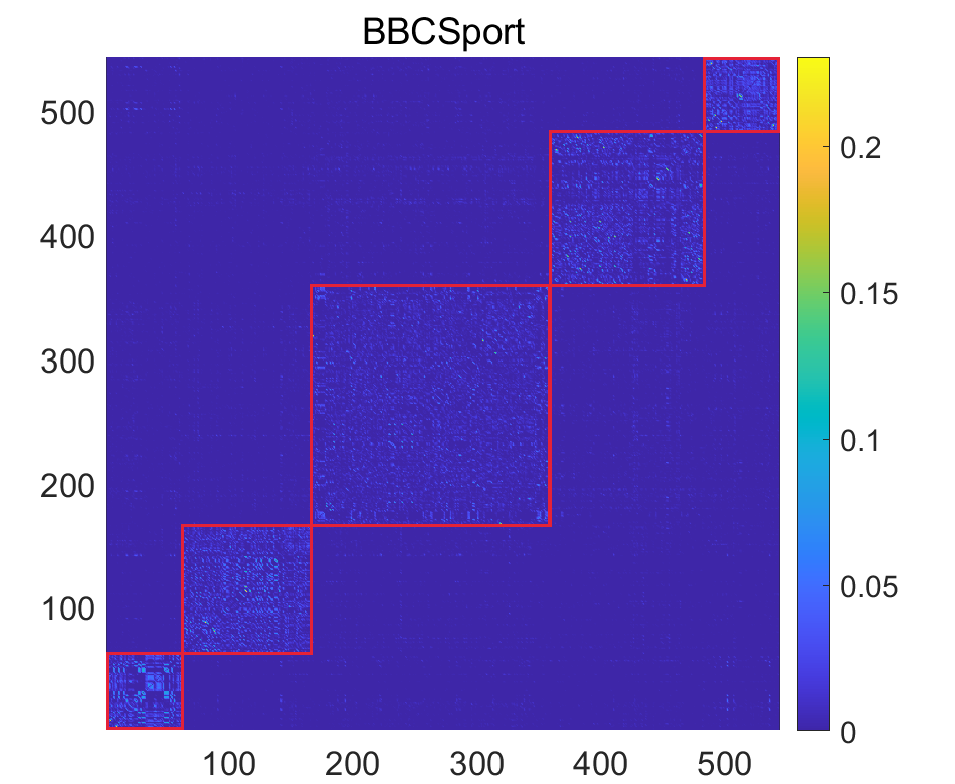}} \label{d-3}}\hspace{3mm}
            \subfloat[Proposed (${\mathbf{K}}^{\ast}$)]{{\includegraphics[width=0.22\textwidth]{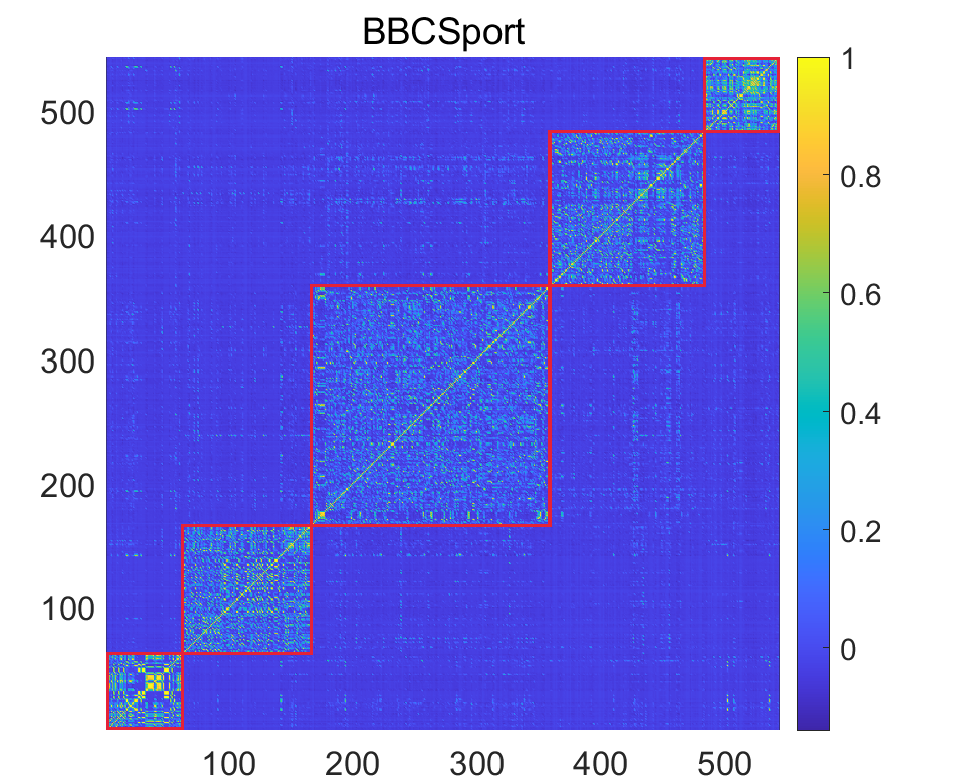}} \label{d-4}}\\
            \vspace{-8pt}
			\subfloat[KNN (neighbor index)]{{\includegraphics[width=0.22\textwidth]{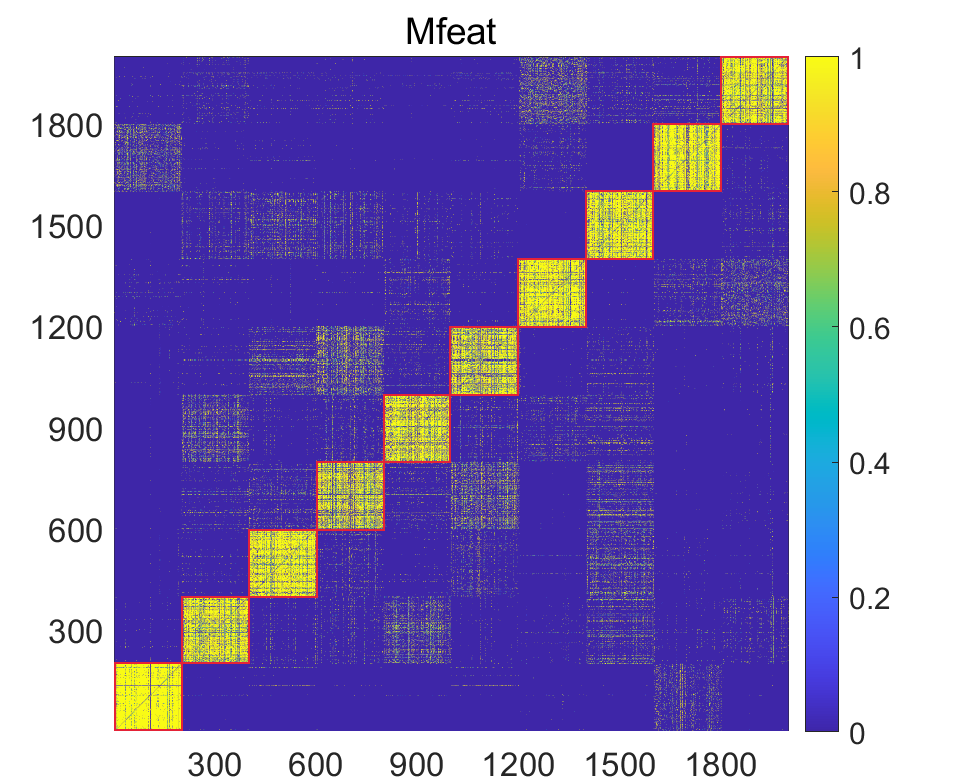}} \label{i-1}}\hspace{3mm}
			\subfloat[KNN (${\mathbf{K}}_{(l)}$)]{{\includegraphics[width=0.22\textwidth]{mfeat_KNN_K.png}} \label{i-2}}\hspace{3mm}
            \subfloat[Proposed ($\mathbf{Z}$)]{{\includegraphics[width=0.22\textwidth]{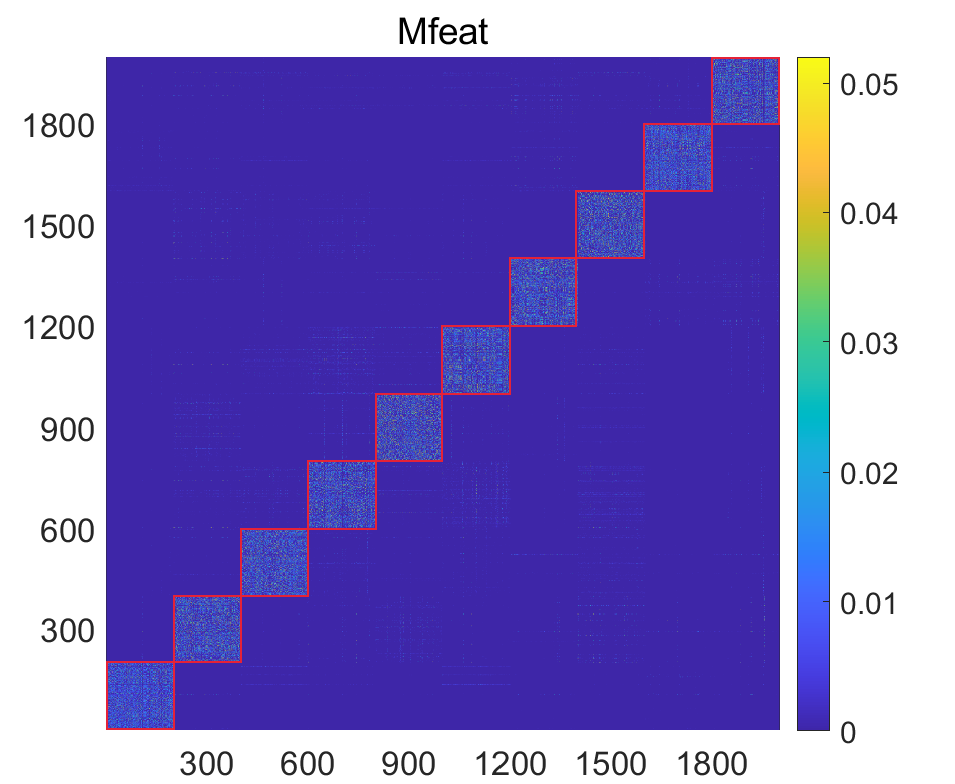}} \label{i-5}}\hspace{3mm}
            \subfloat[Proposed (${\mathbf{K}}^{\ast}$)]{{\includegraphics[width=0.22\textwidth]{mfeat_Proposed_Z_iter=22.png}} \label{i-6}}\\
			\caption{{The visualization of neighbor index and localized ${\mathbf{K}}_{(l)}$ in KNN mechanism, the affinity graph $\mathbf{Z}$ and localized ${\mathbf{K}}^{\ast}$ of the proposed algorithm on BBCSport and Mfeat datasets.}}
			\label{compare-KNN}
			}
\end{center}
\vspace{-10pt}
\end{figure*}
%----------------------------------------------------------------------------
\begin{figure*}[!t]
\vspace{-10pt}
\begin{center}{
		\centering
            \subfloat[YALE]{{\includegraphics[width=0.155\textwidth]{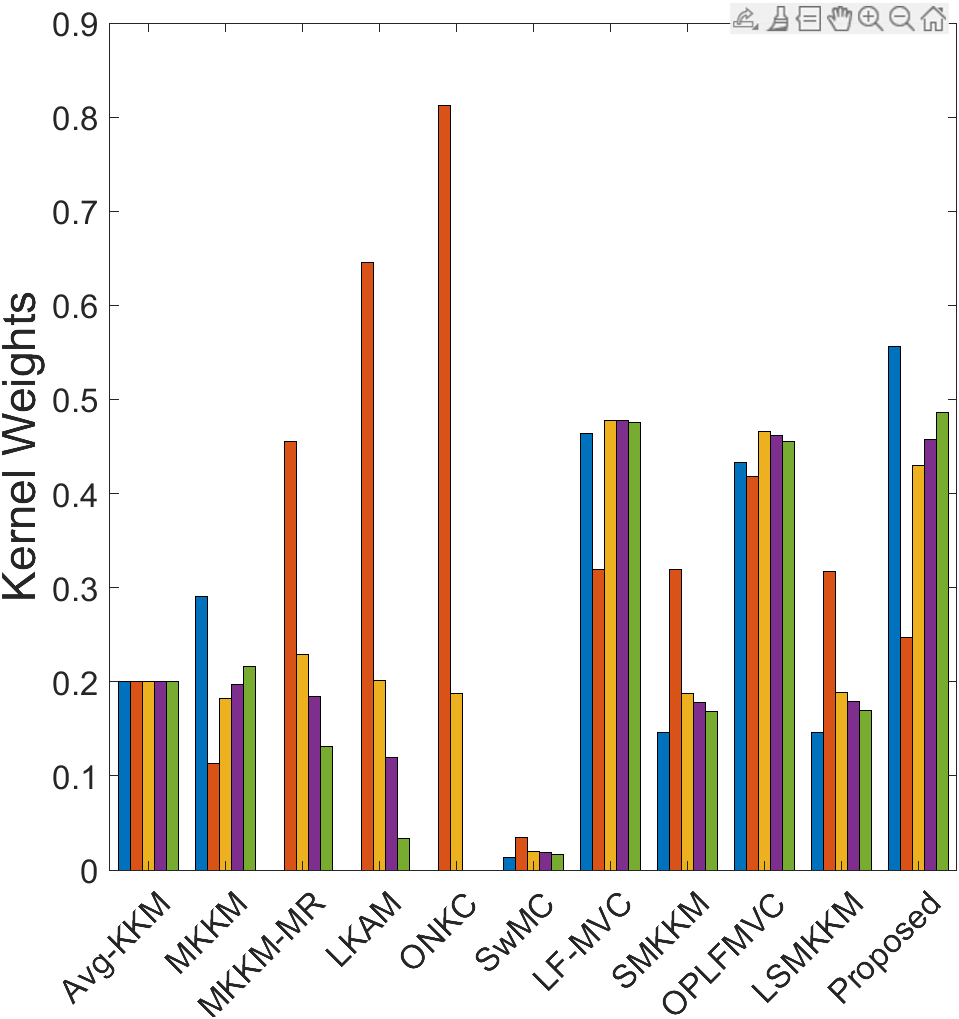}}}
            \subfloat[BBC]{{\includegraphics[width=0.155\textwidth]{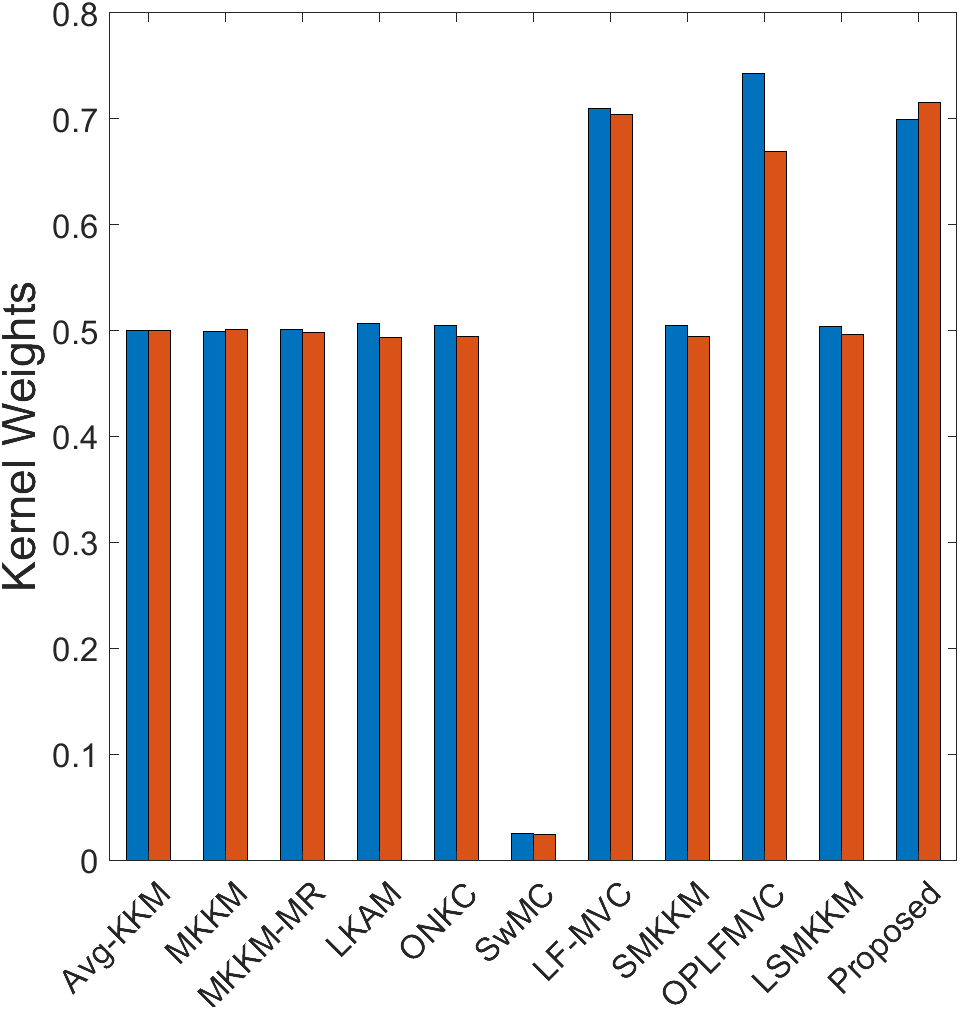}}}
            \subfloat[BBCSport]{{\includegraphics[width=0.155\textwidth]{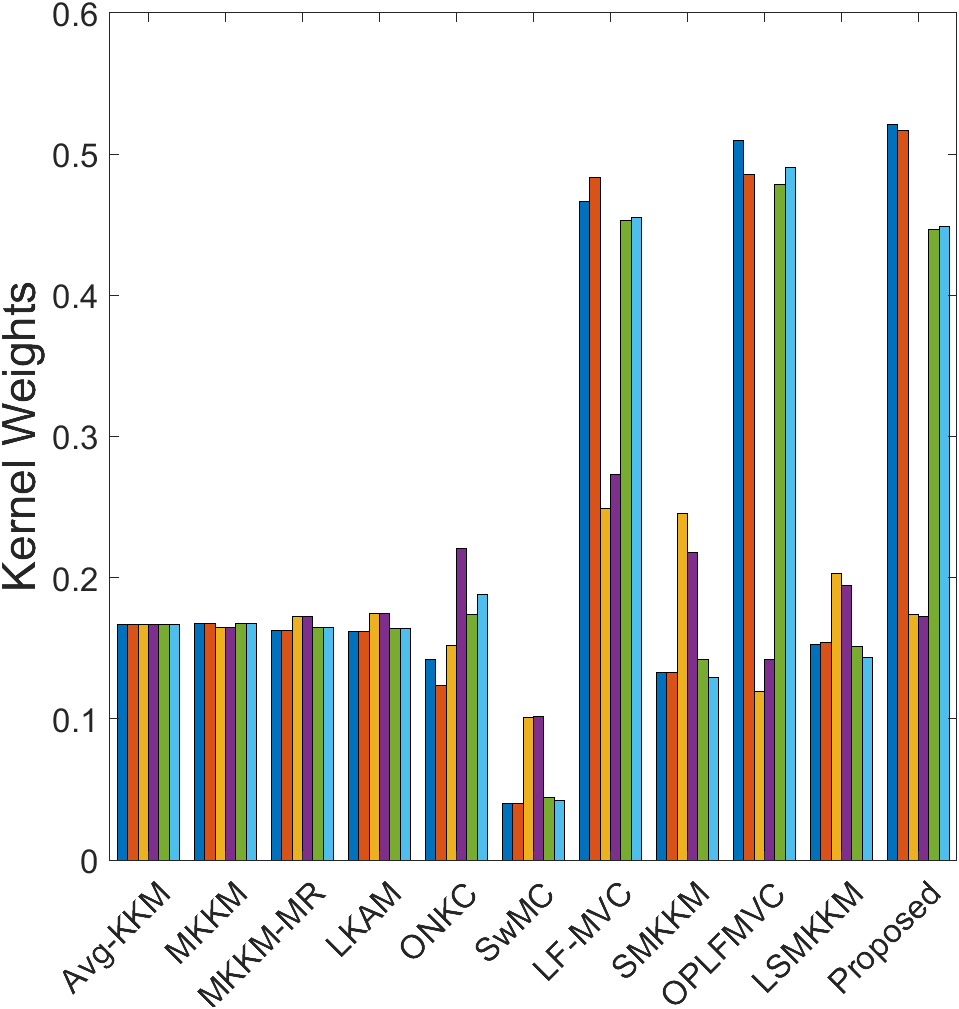}}} 
            \vspace{-5pt}
            \subfloat[Handwritten]{{\includegraphics[width=0.155\textwidth]{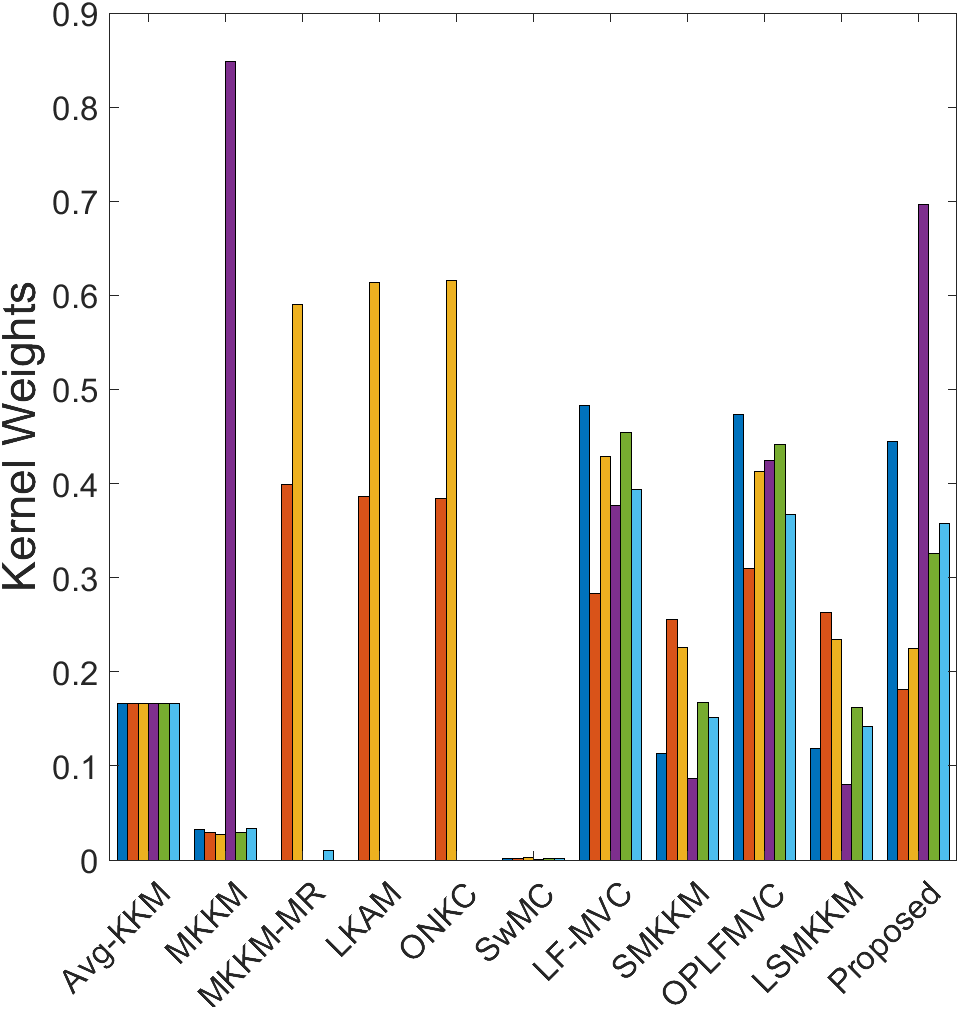}}}
            \subfloat[Mfeat]{{\includegraphics[width=0.155\textwidth]{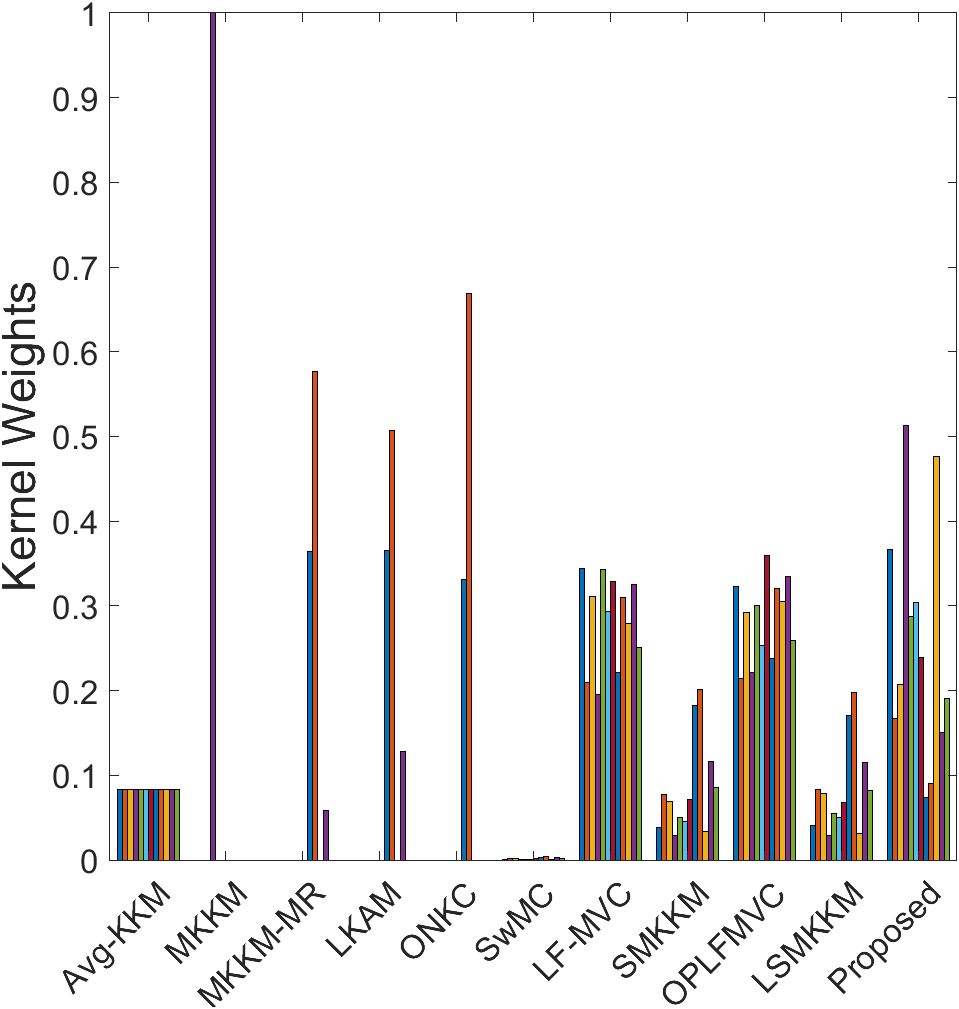}}}
            \subfloat[Scene15]{{\includegraphics[width=0.155\textwidth]{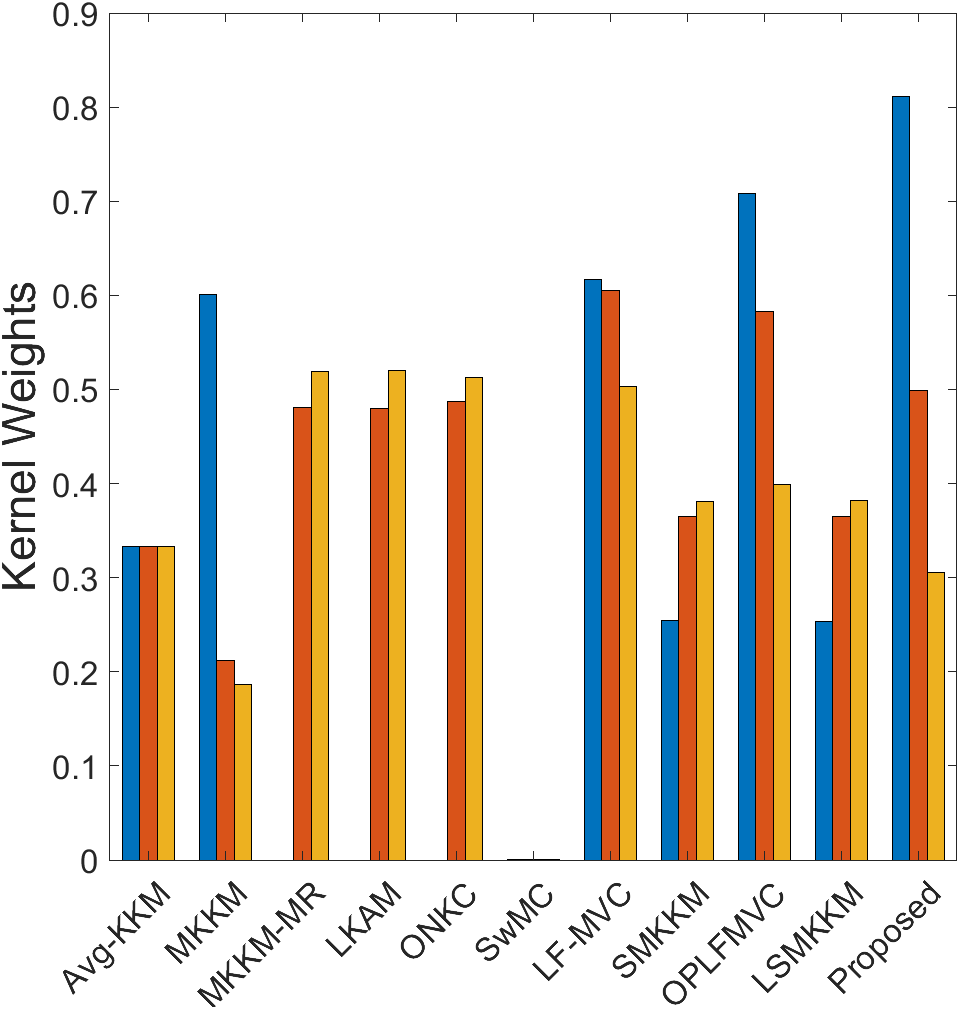}}}
            \vspace{5pt}
			\caption{{Comparison of the learned kernel weights of different algorithms on six datasets. Other datasets' results are provided in the appendix.}}
			\label{Weights}
			}
\end{center}
\vspace{-10pt}
\end{figure*}
%----------------------------------------------------------
\subsection{Experimental Results}
Table \ref{Comparasion_Kpi} reports ACC, NMI, Purity, and ARI comparisons of fourteen algorithms on twelve datasets. Red bold denotes the optimal results. Blue bold denotes the sub-optimal results while '-' denotes unavailable results due to overmuch execution time. According to the experimental results, it can be seen that:
\begin{itemize}
\item Our proposed LSWMKC algorithm achieves optimal or sub-optimal performance on most datasets. Particularly, CAGL can be regarded as the strongest competitor in affinity graph multi-kernel clustering, our LSWMKC still exceeds CAGL with a large margins improvement of 13.34\%, 16.26\%, 20.41\%, 8.09\%, 25.00\%, 9.20\%, 10.00\%, and 26.28\% on YALE, PsortPos, BBC, BBCSport, PsortNeg, Handwritten, Mfeat, and Scene15 datasets respectively in terms of ACC, which well demonstrates the superiority of our model over existing methods.
\item Compared with LKAM and LSMKKM that utilize KNN mechanism to localize base kernel, our LSWMKC still exhibits promising performance. Especially, LSMKKM can be regarded as the most competitive method in multi-kernel clustering, the ACC of our LSWMKC exceeds that of them 7.42\%, 0.43\%, 11.99\%, 22.66\%, 20.13\%, 7.08\%, 2.39\%, 0.97\%, 0.55\%, and 4.78\% on ten datasets respectively, which sufficiently illustrates the reasonableness of our model. Similarly, NMI, Purity, and ARI of our algorithm also outperforms other methods on most datasets. 
\end{itemize}

In summary, the quantitative comparison results can adequately substantiate the promising capability of our LSWMKC algorithm. The superiority of our algorithm can be attributed to the following two aspects:
\romannumeral1) Our MKC model firstly learns a discriminative graph to explore the intrinsic local manifold structures in kernel space, which can reveal the ranking relationship of samples. The noise or outliers are sufficiently removed, which directly serves for clustering.       
\romannumeral2) An optimal neighborhood kernel is obtained with naturally sparse property and clear block diagonal structures, which can further denoise the affinity graph. Our model achieves implicitly optimizing adaptive weights on different neighbors with corresponding samples in kernel space. Compared with the existing KNN mechanism, the unreliable distant-distance neighbors in our model can be removed or assigned small weights. The obtained localized kernel is more reasonable in comparison to the one from KNN mechanism. Such two aspects conduce to obvious improvement in applications.
%----------------------------------------------------------
\subsection{Running Time Comparison}
Figure \ref{Compare_time} plots the time-consuming comparison of fourteen algorithms. For simplify, the elapsed time of OPLFMVC is set as the baseline and we take the logarithm of all results. As our analysis that our LSWMKC shares the same computational complexity with MKKM, LMKKM, LKAM, ONKC, SMKKM, SPMKC, CAGL, and LSMKKM, the empirical time evaluation also demonstrates that our LSWMKC costs comparative and even shorter running time. More importantly, our LSWMKC exhibits promising performance.  
%----------------------------------------------------------
\subsection{Comparing with KNN Mechanism}
Recall our motivation to learn localized kernel by considering the ranking importance of neighbors in contrast to the traditional KNN mechanism. Here, we conduct comparison experiments with KNN mechanism (labelled as KNN). Specifically, we tune the neighbor ratio $\tau$ varying in $[0.1,0.2,\cdots,0.9]$ by grid search in average kernel space and report the best results. As Table \ref{comparison_KNN} shows, our algorithm consistently outperforms KNN mechanism. Moreover, as Figure \ref{compare-KNN} shows, for KNN mechanism, we plot the visualization of neighbor index and $\mathbf{K}_{(l)}$, for our model, we visualize the learned affinity graph $\mathbf{Z}$ and neighborhood kernel $\mathbf{K}^{\ast}$ on BBCSport and Mfeat datasets. Regarding KNN mechanism, the neighbor index involves noticeable noise, especially on BBCSport dataset, caused by the unreasonable neighbor building strategy. Such coarse localized manner directly incurs the corrupted $\mathbf{K}_{(l)}$ with much noise. In contrast, the affinity graphs learned by our neighbor learning mechanism achieve more precise block structures, which directly serve for learning localized $\mathbf{K}^{\ast}$. All the above results sufficiently illustrate the effectiveness of our neighbor building strategy.
%----------------------------------------------------------------------------
\begin{figure*}[!t]
\vspace{-10pt}
\begin{center}{
		\centering
			\subfloat[Initialized]{{\includegraphics[width=0.16\textwidth]{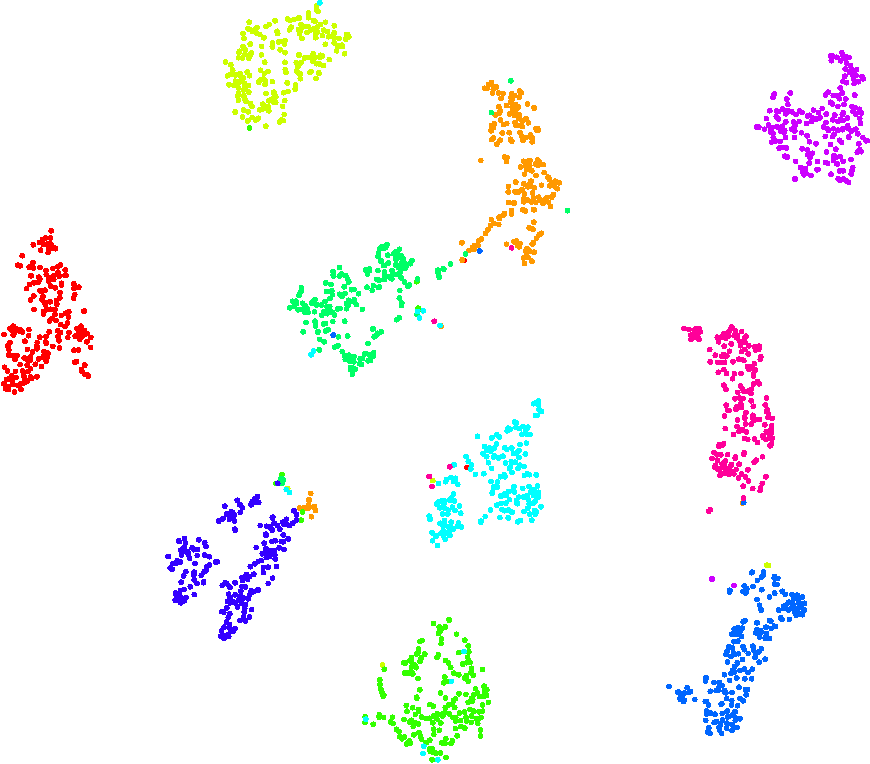}} \label{0-st iteration}}\hspace{5mm}
			\subfloat[1-st iteration]{{\includegraphics[width=0.16\textwidth]{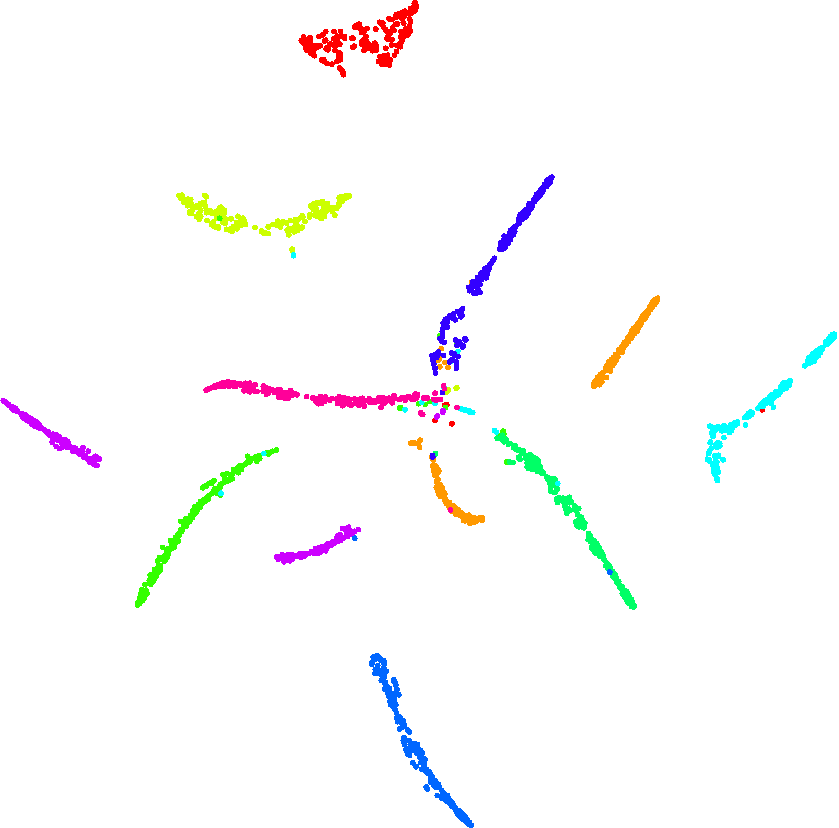}} \label{1-rd iteration}}\hspace{5mm}
 			\subfloat[5-th iteration]{{\includegraphics[width=0.16\textwidth]{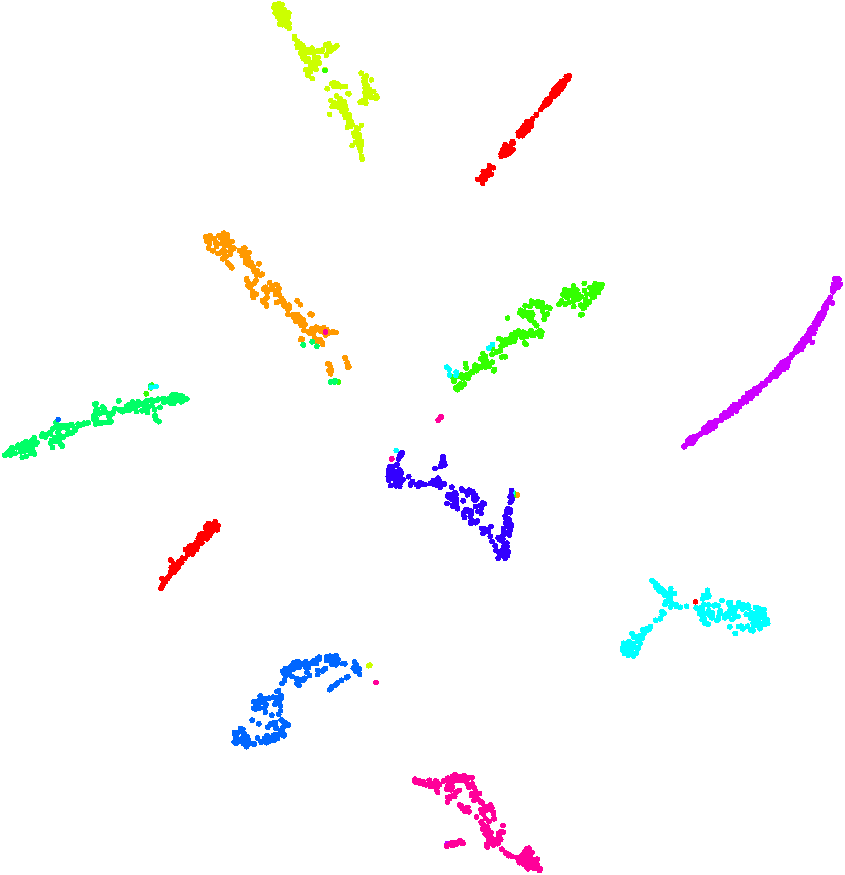}} \label{5-th iteration}}\hspace{5mm}
			\subfloat[10-th iteration]{{\includegraphics[width=0.16\textwidth]{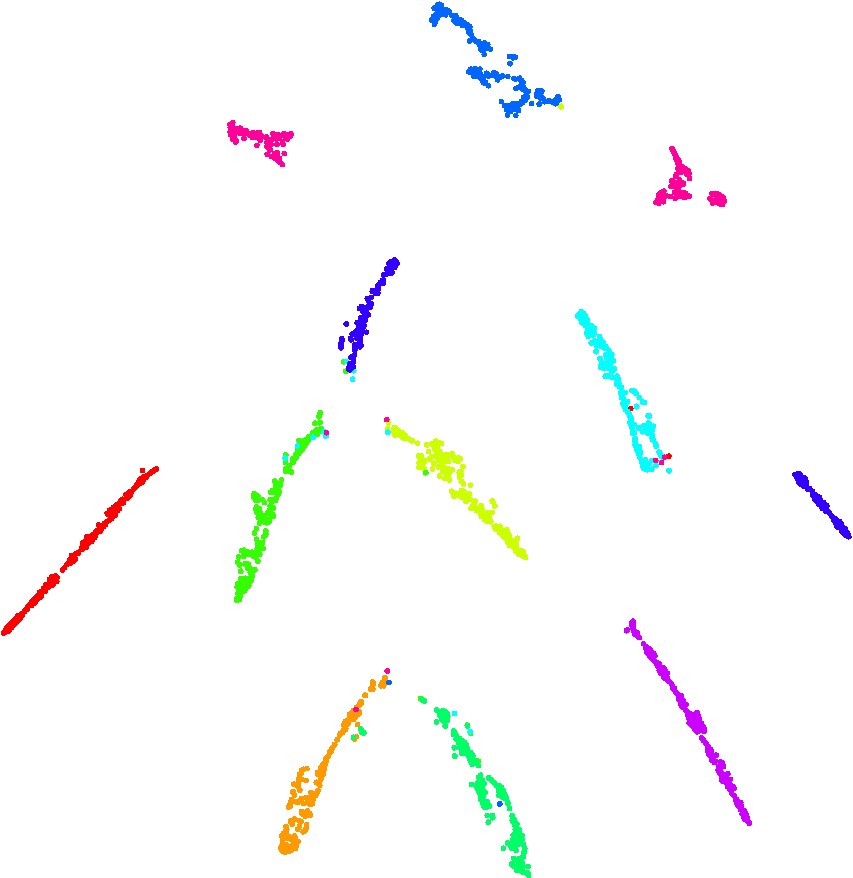}} \label{10-th iteration}}\hspace{5mm}
			\subfloat[20-th iteration]{{\includegraphics[width=0.16\textwidth]{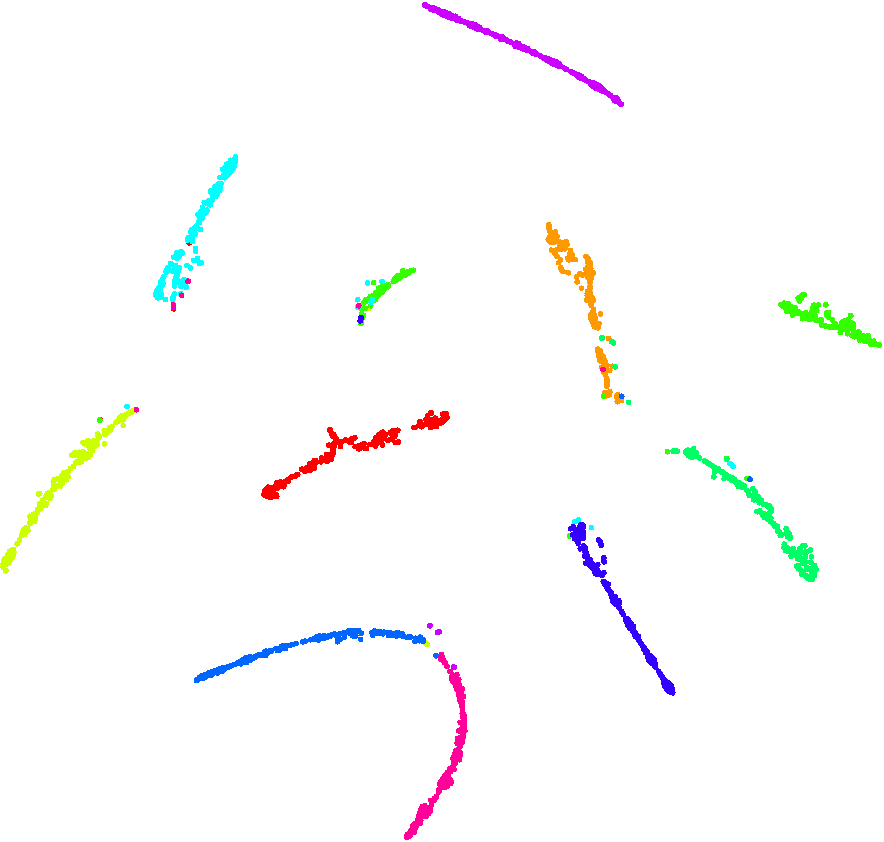}} \label{20-th iteration}}
			\caption{Evolution of data distribution by t-SNE on Handwritten dataset.}
			\label{tsne}
			}
\end{center}
\vspace{-13pt}
\end{figure*}
%----------------------------------------------------------------------------
\begin{figure*}[!t]
\vspace{-10pt}
\begin{center}{
		\centering
			\subfloat[Initialized ($\mathbf{Z}$)]{{\includegraphics[width=0.2\textwidth]{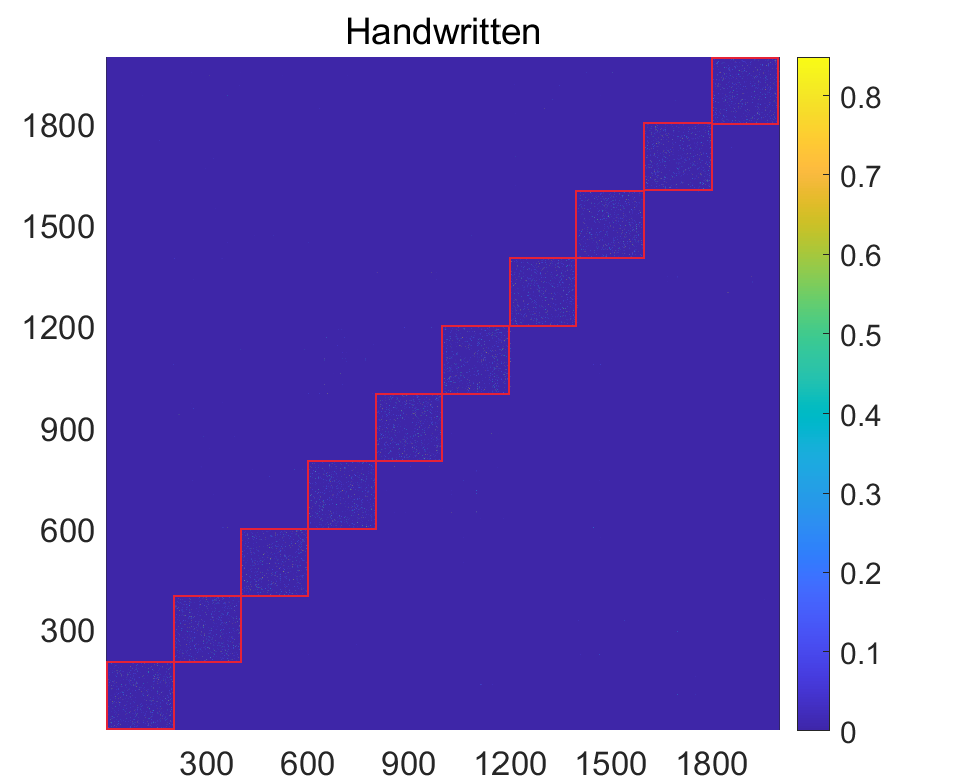}} \label{0-st iteration Z}}
			\subfloat[1-st iteration ($\mathbf{Z}$)]{{\includegraphics[width=0.2\textwidth]{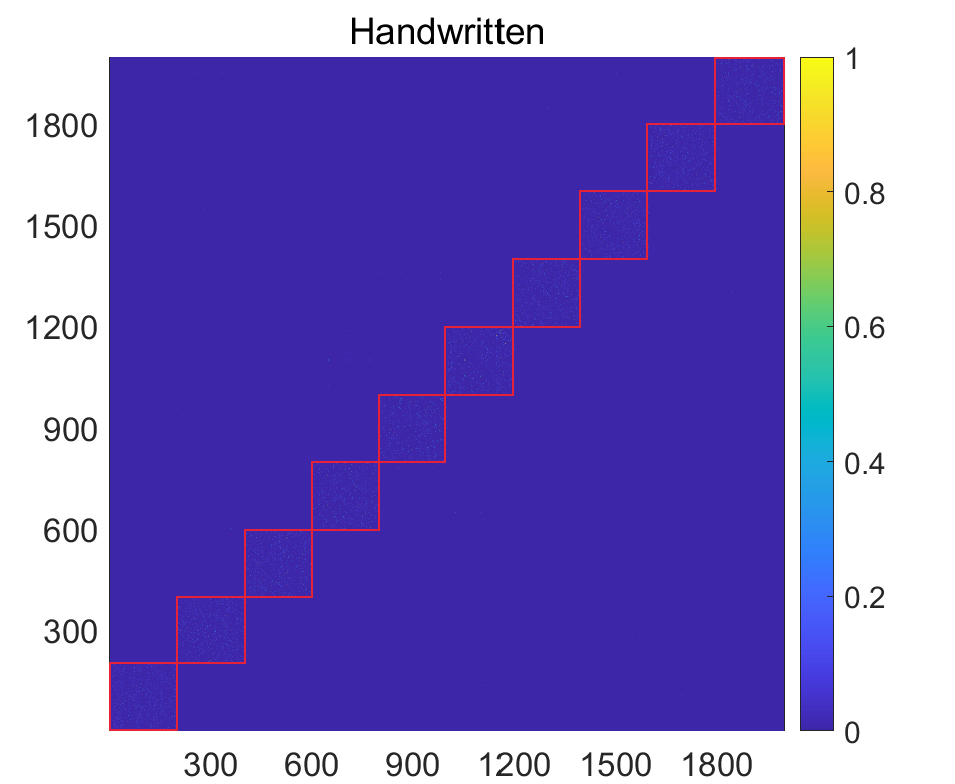}} \label{1-st iteration Z}}
			\subfloat[3-rd iteration ($\mathbf{Z}$)]{{\includegraphics[width=0.2\textwidth]{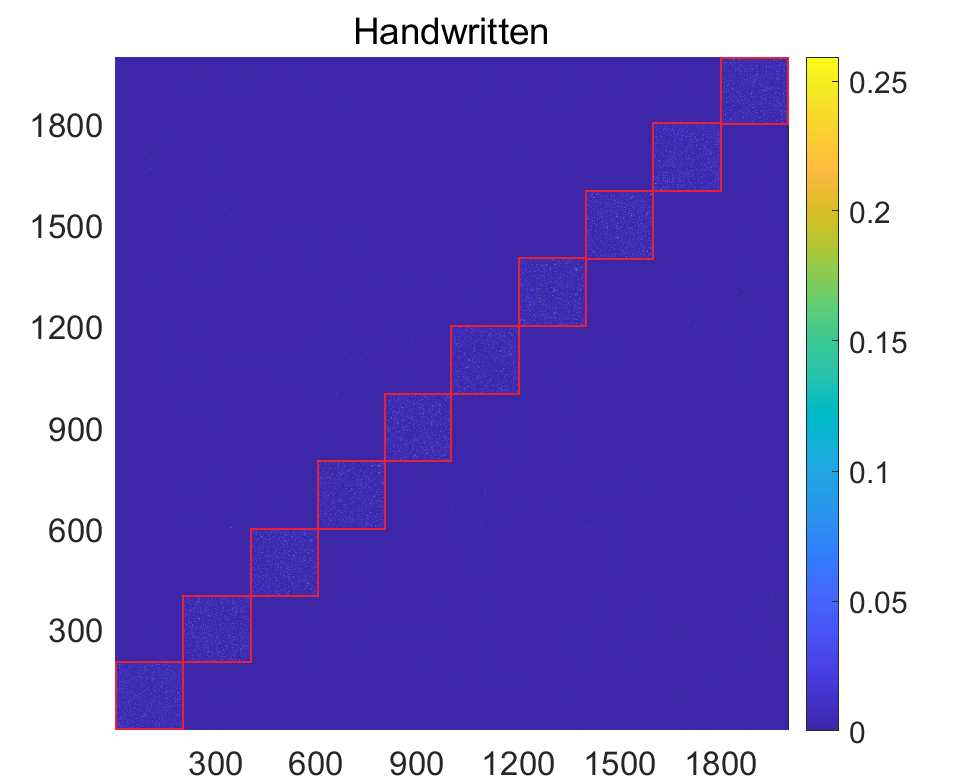}} \label{3-st iteration Z}}
			\subfloat[5-th iteration ($\mathbf{Z}$)]{{\includegraphics[width=0.2\textwidth]{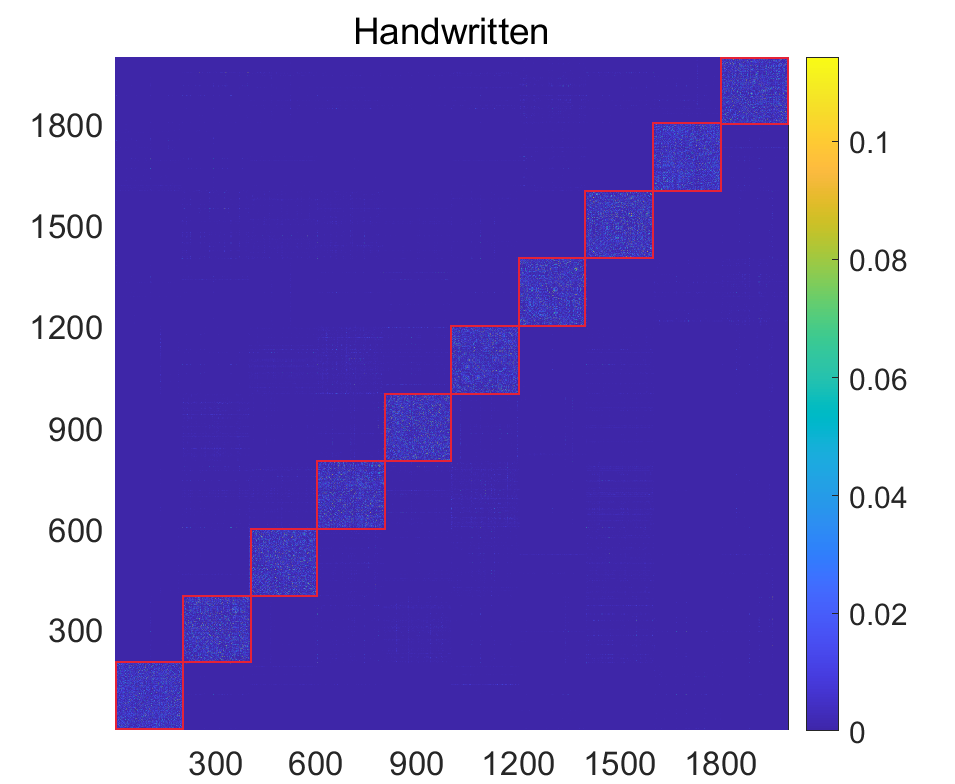}} \label{5-st iteration Z}}
			\subfloat[10-th iteration ($\mathbf{Z}$)]{{\includegraphics[width=0.2\textwidth]{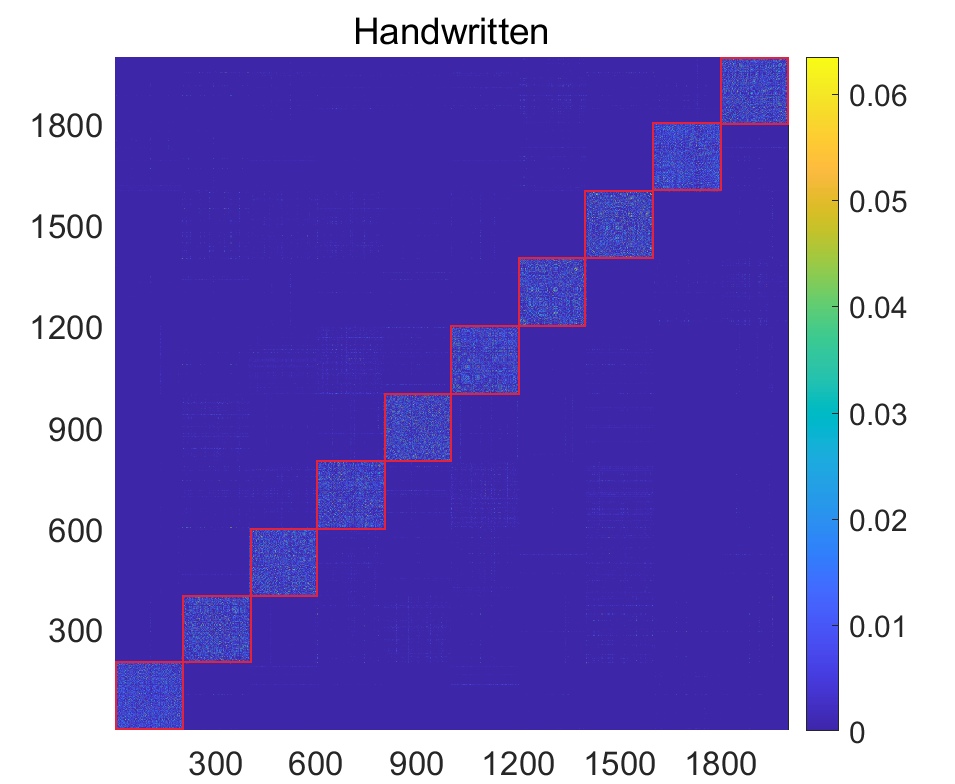}} \label{10-st iteration Z}}\\
			\vspace{-10pt}
			\subfloat[Initialized ($\mathbf{K}^{\ast}$)]{{\includegraphics[width=0.2\textwidth]{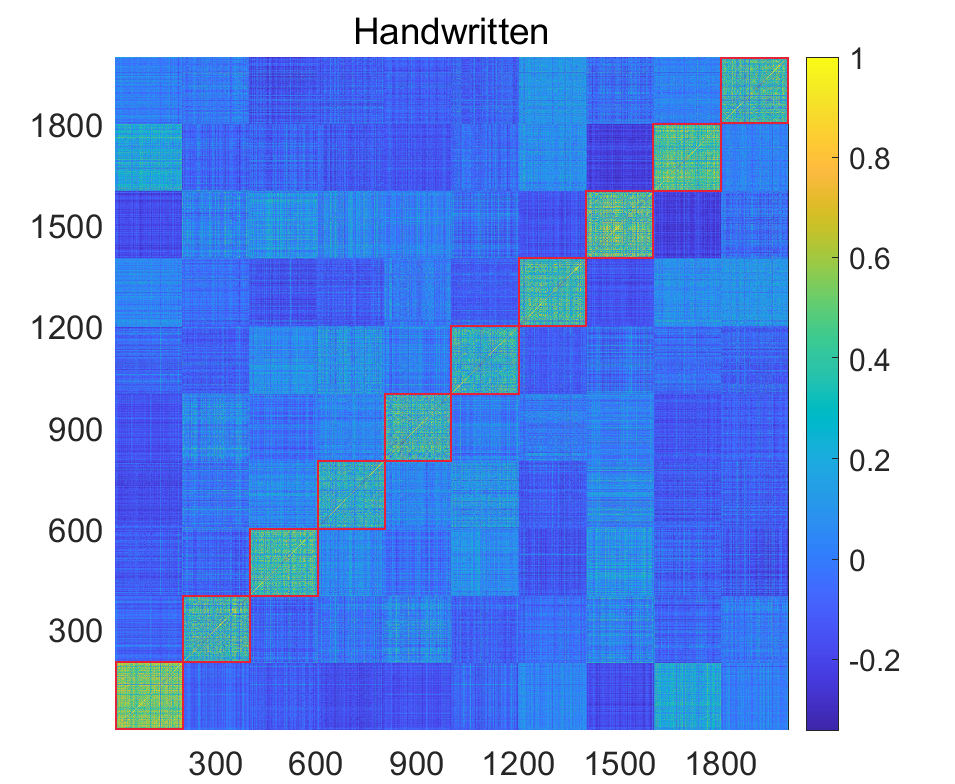}} \label{0-st iteration K*}}
			\subfloat[1-st iteration ($\mathbf{K}^{\ast}$)]{{\includegraphics[width=0.2\textwidth]{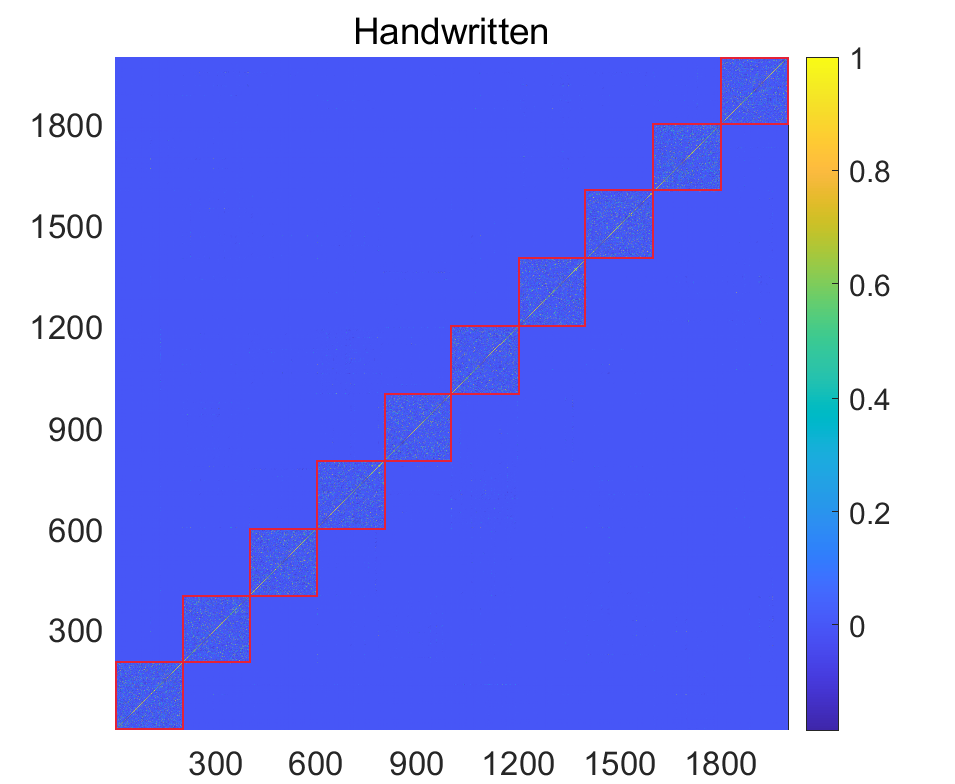}} \label{1-st iteration K*}}
			\subfloat[3-rd iteration ($\mathbf{K}^{\ast}$)]{{\includegraphics[width=0.2\textwidth]{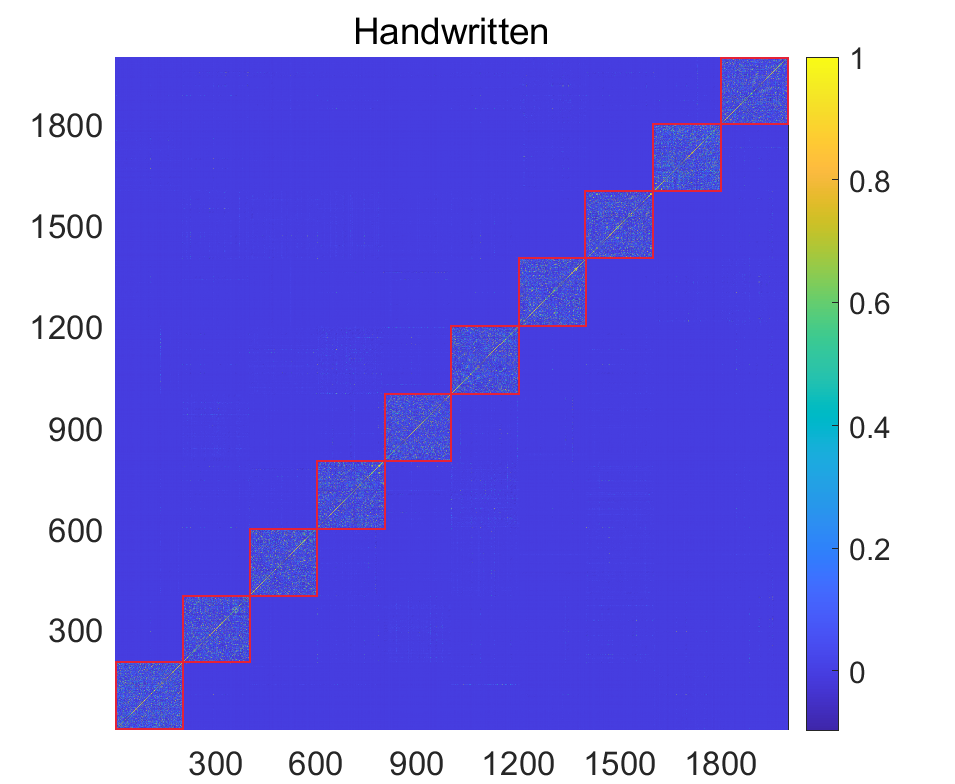}} \label{3-st iteration K*}}
			\subfloat[5-th iteration ($\mathbf{K}^{\ast}$)]{{\includegraphics[width=0.2\textwidth]{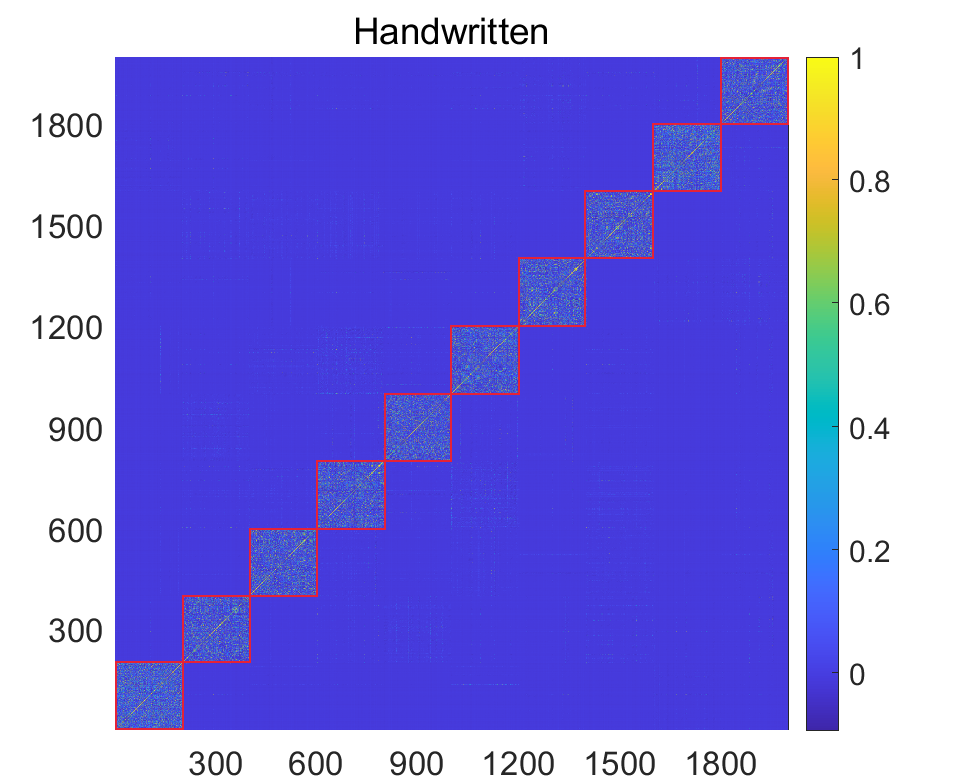}} \label{5-st iteration K*}}
			\subfloat[10-th iteration ($\mathbf{K}^{\ast}$)]{{\includegraphics[width=0.2\textwidth]{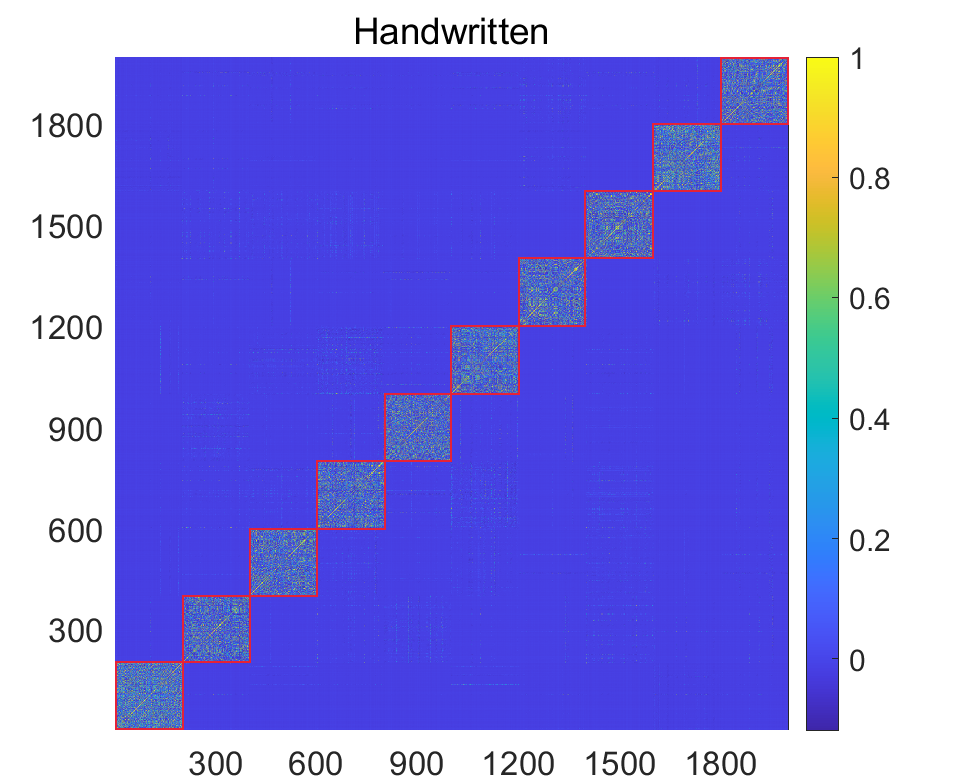}} \label{10-st iteration K*}}
			\caption{Evolution of affinity graph $\mathbf{Z}$ and neighborhood kernel $\mathbf{K}^{\ast}$ learned by our proposed algorithm on Handwritten dataset.}
			\label{Visualization-Z-K}
			}
\end{center}
\vspace{-13pt}
\end{figure*}
%----------------------------------------------------------
\begin{figure*}[!t]
\vspace{-10pt}
\begin{center}{
		\centering
            \subfloat[YALE]{{\includegraphics[width=0.163\textwidth]{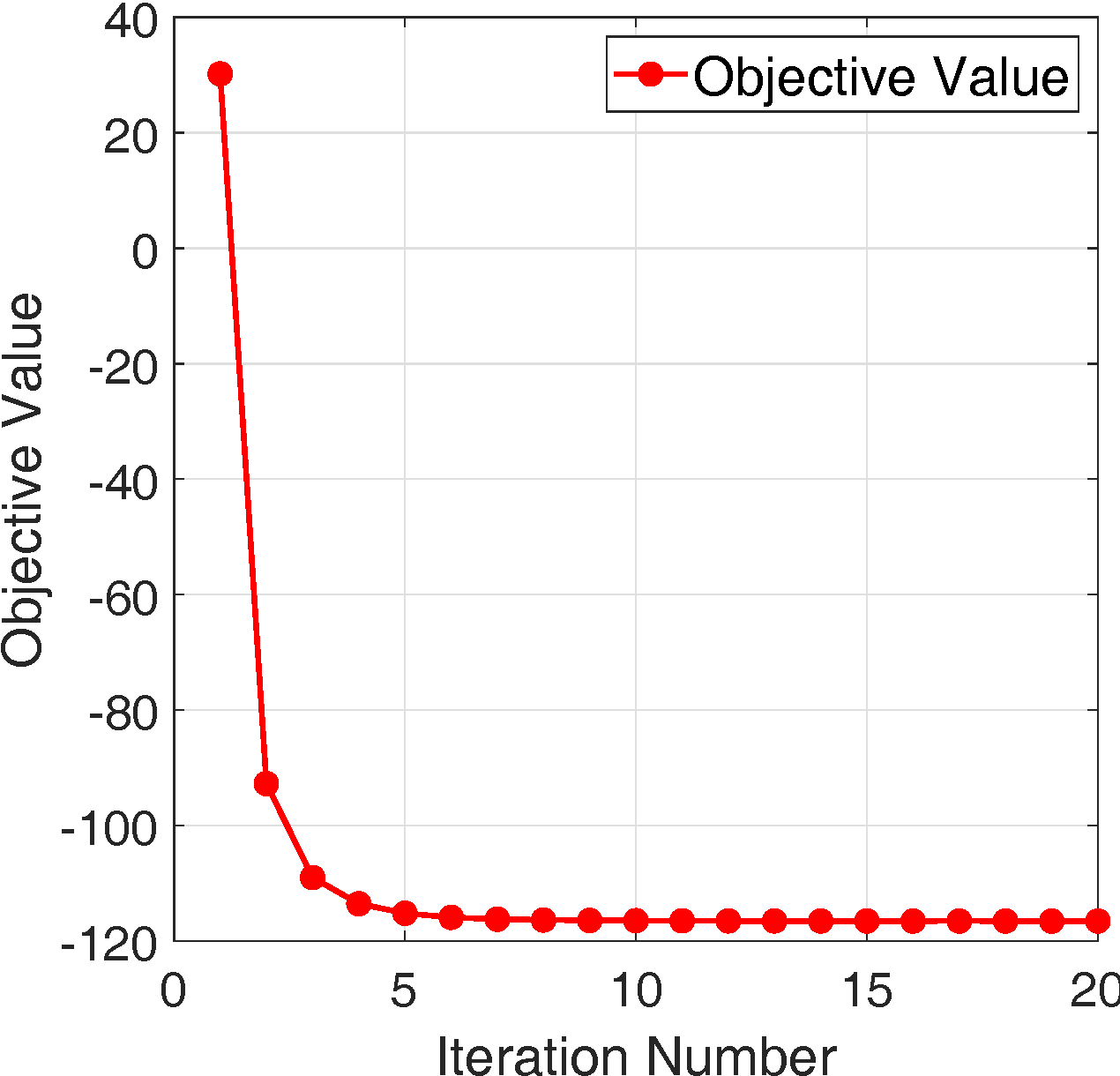}} \label{C-1}}
            \subfloat[BBC]{{\includegraphics[width=0.16\textwidth]{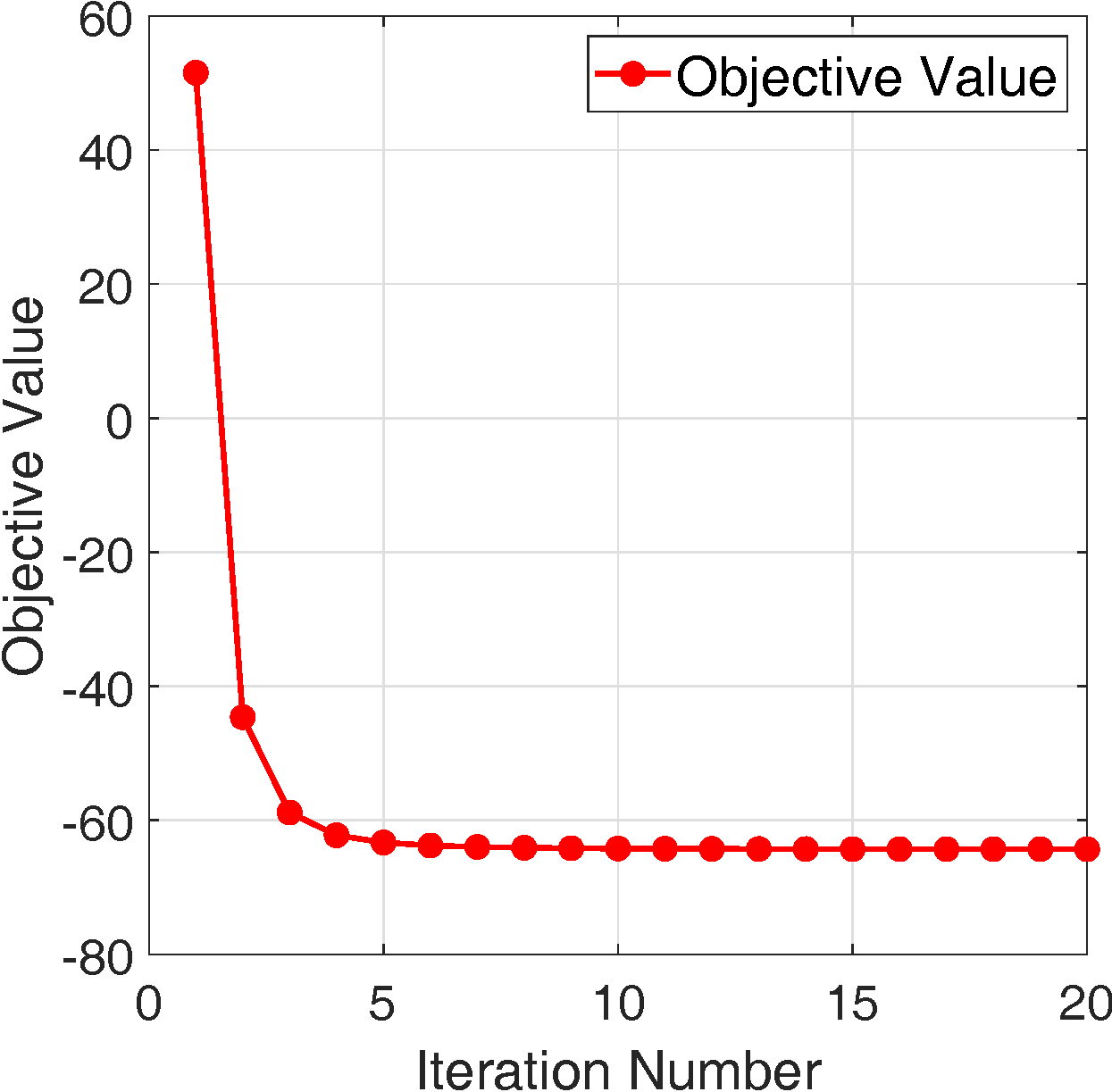}} \label{C-5}}
            \subfloat[BBCSport]{{\includegraphics[width=0.161\textwidth]{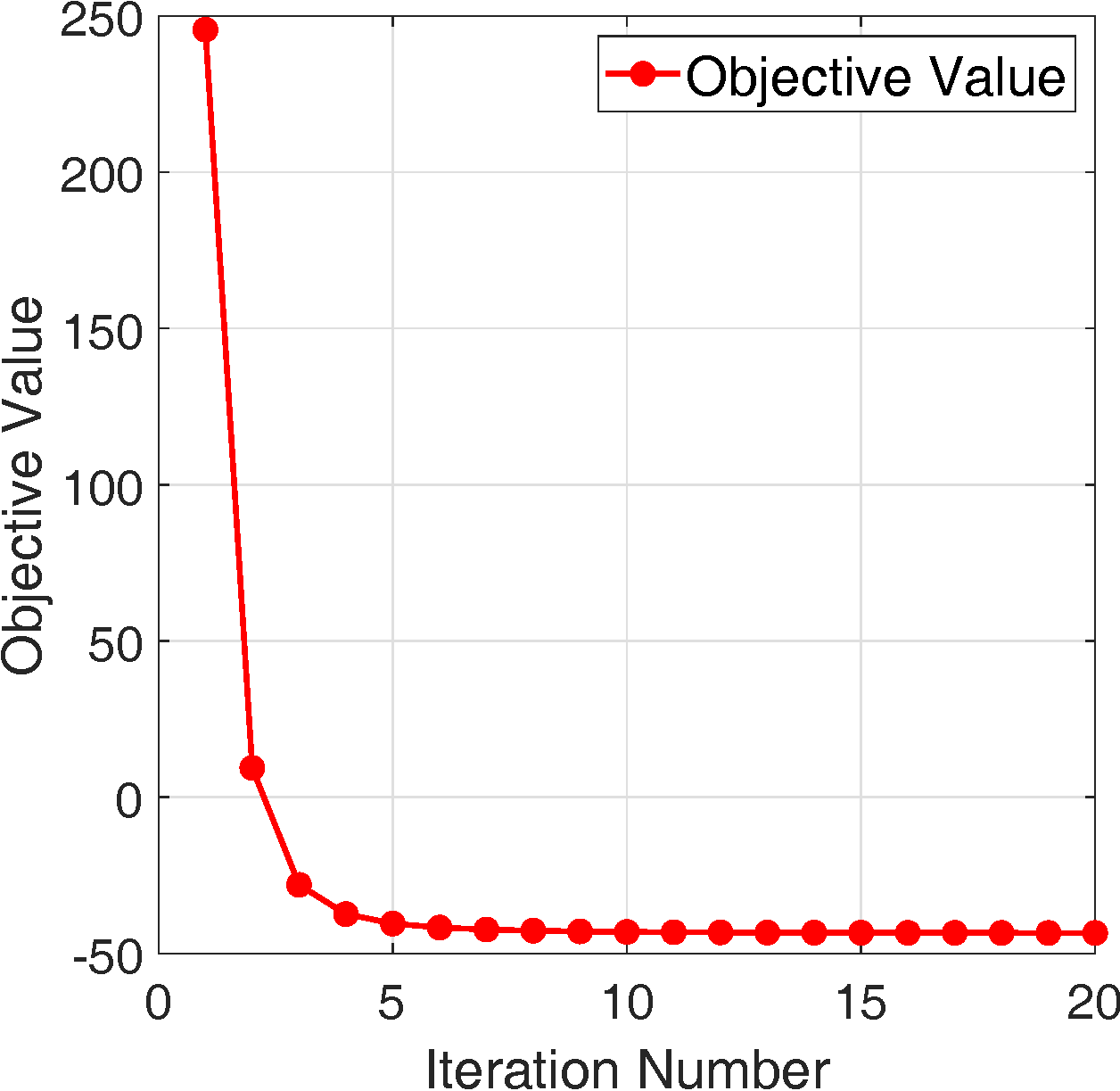}} \label{C-6}}
            \subfloat[Handwritten]{{\includegraphics[width=0.16\textwidth]{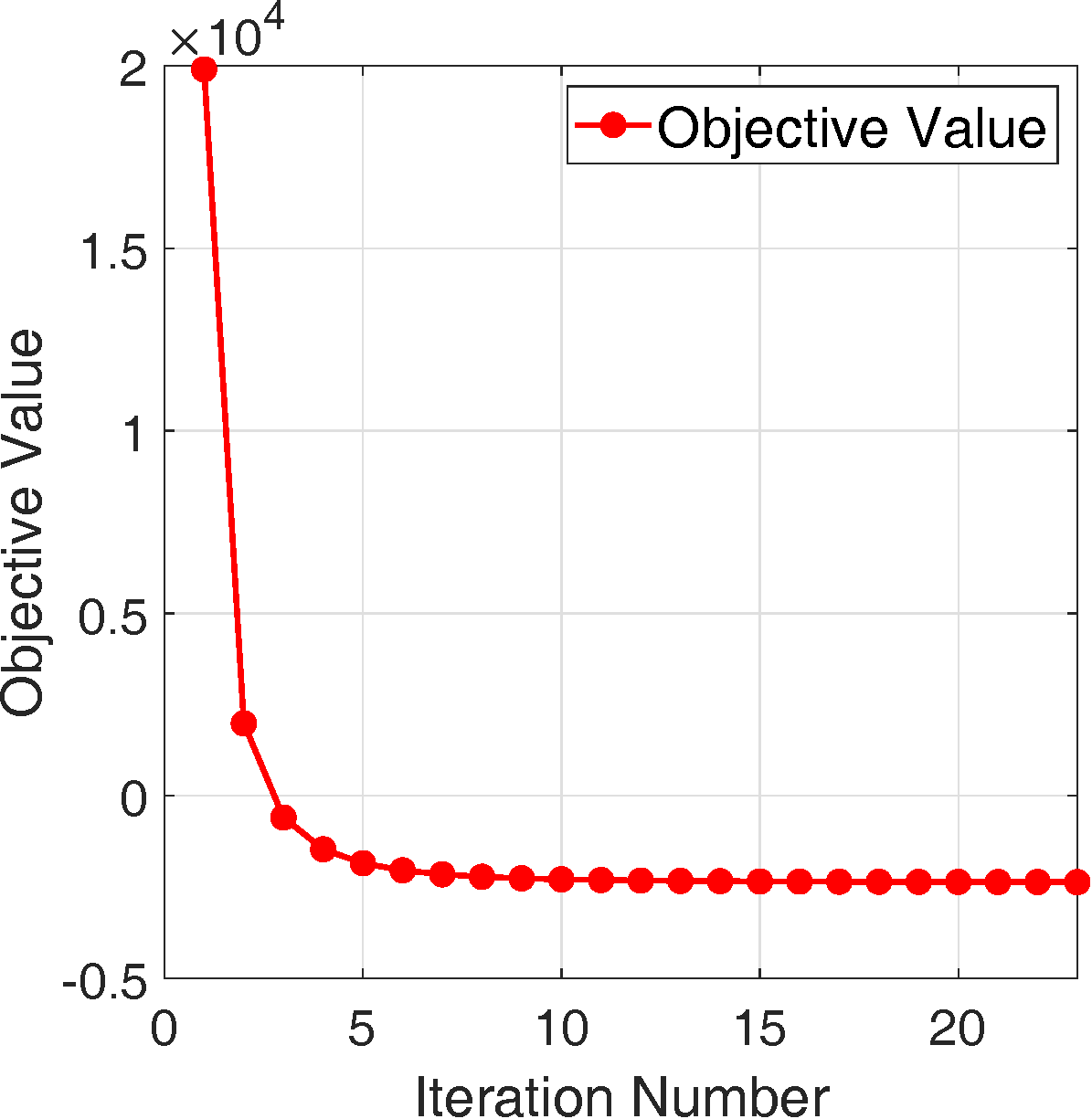}} \label{C-10}}
            \subfloat[Mfeat]{{\includegraphics[width=0.161\textwidth]{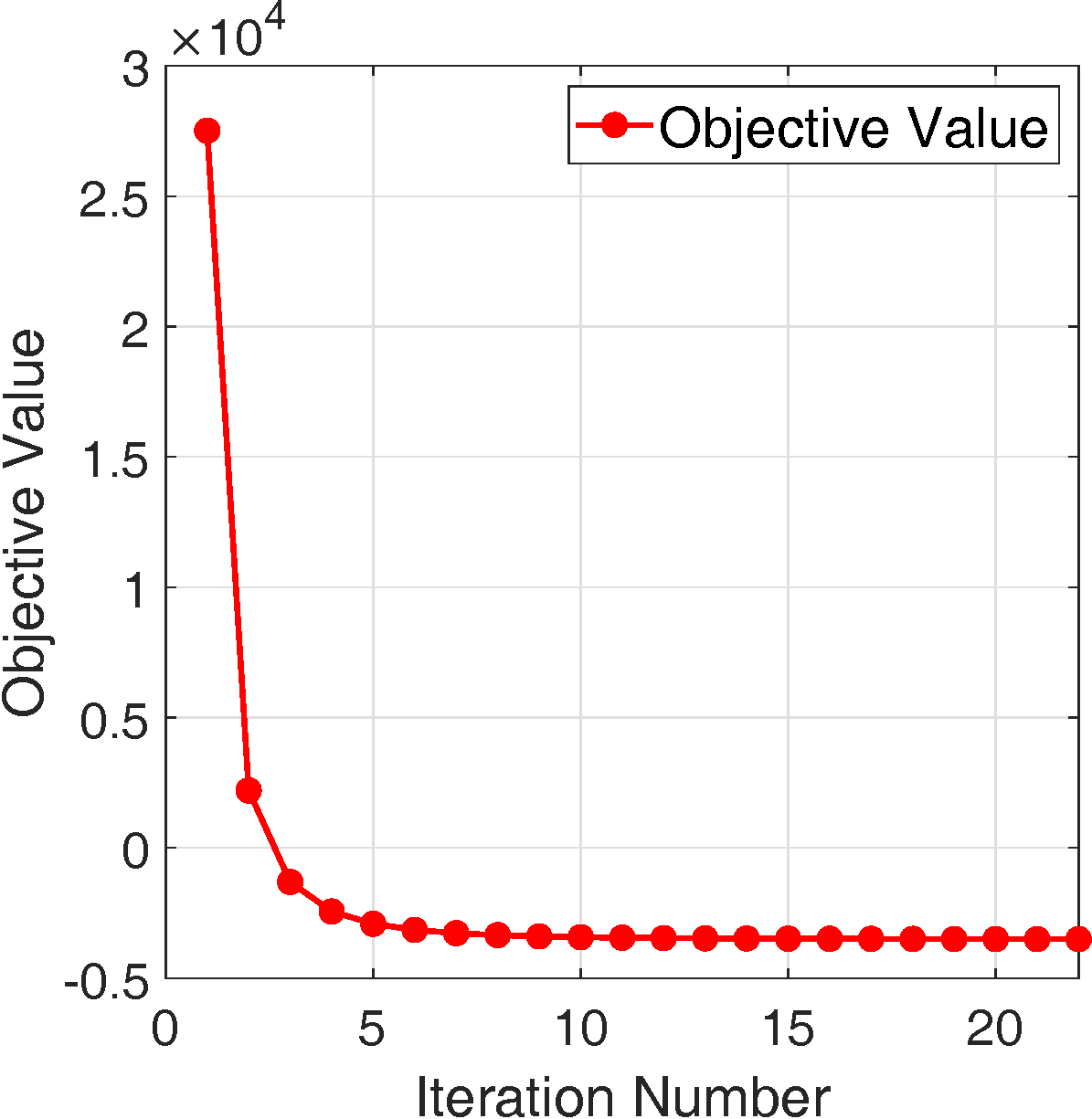}} \label{C-11}}
            \subfloat[Scene15]{{\includegraphics[width=0.156\textwidth]{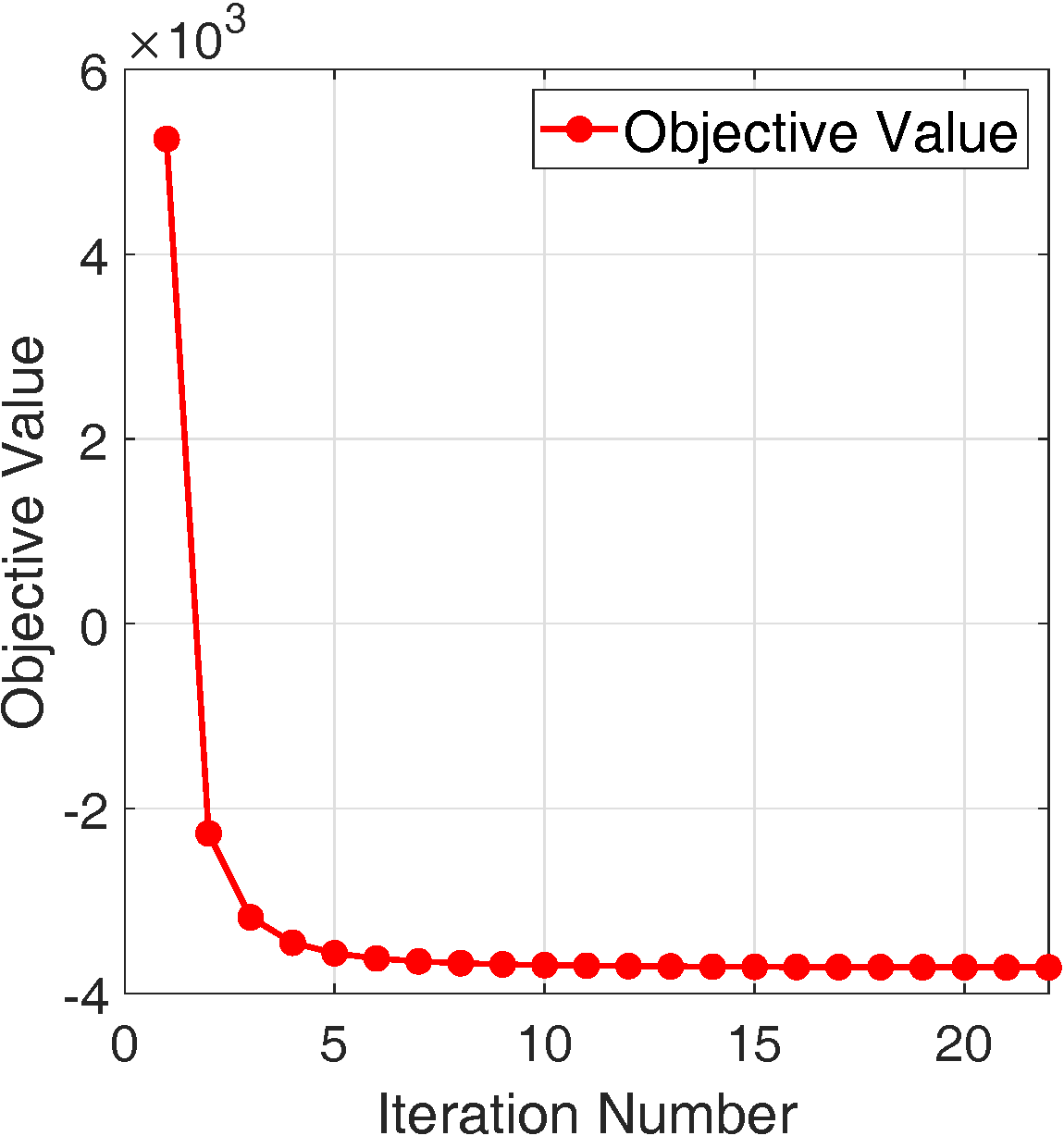}} \label{C-12}}
			\caption{{The convergence of the proposed LSWMKC on six datasets. Other datasets' results are provided in the appendix.}}
			\label{Convergence}
			}
\end{center}
\vspace{-13pt}
\end{figure*}
%----------------------------------------------------------
\subsection{Kernel Weight Analysis}
We further evaluate the distribution of the learned kernel weights on twelve datasets. As Figure \ref{Weights} shows, the kernel weight distributions of MKKM-MR, ONKC, and LKAM vary greatly and are highly sparse on most datasets. Such sparsity would incur clustering information across multiple views cannot be fully utilized. In contrast, the weight distributions of our proposed algorithm are non-sparse on all the datasets, thus the latent clustering information can be significantly exploited.
%----------------------------------------------------------
\subsection{Visualization}
To visually demonstrate the learning process of the proposed localized building strategy, Figure \ref{tsne} plot the t-SNE visual results on Handwritten dataset, which clearly shows the separation of different clusters during the iteration. Moreover, Figure \ref{Visualization-Z-K} plots the evolution of the learned affinity graph $\mathbf{Z}$ and neighborhood kernel $\mathbf{K}^{\ast}$ on Handwritten dataset. Clearly, the noises are gradually removed and the clustering structures become clearer. Besides, $\mathbf{K}^{\ast}$ can further denoise $\mathbf{Z}$, which exhibits more evident block diagonal structures. These results can well illustrate the effectiveness of our localized strategy.  
%----------------------------------------------------------------------------
\begin{figure*}[!t]
\vspace{-10pt}
\begin{center}{
		\centering
            \subfloat[BBC(ACC)]{{\includegraphics[width=0.165\textwidth]{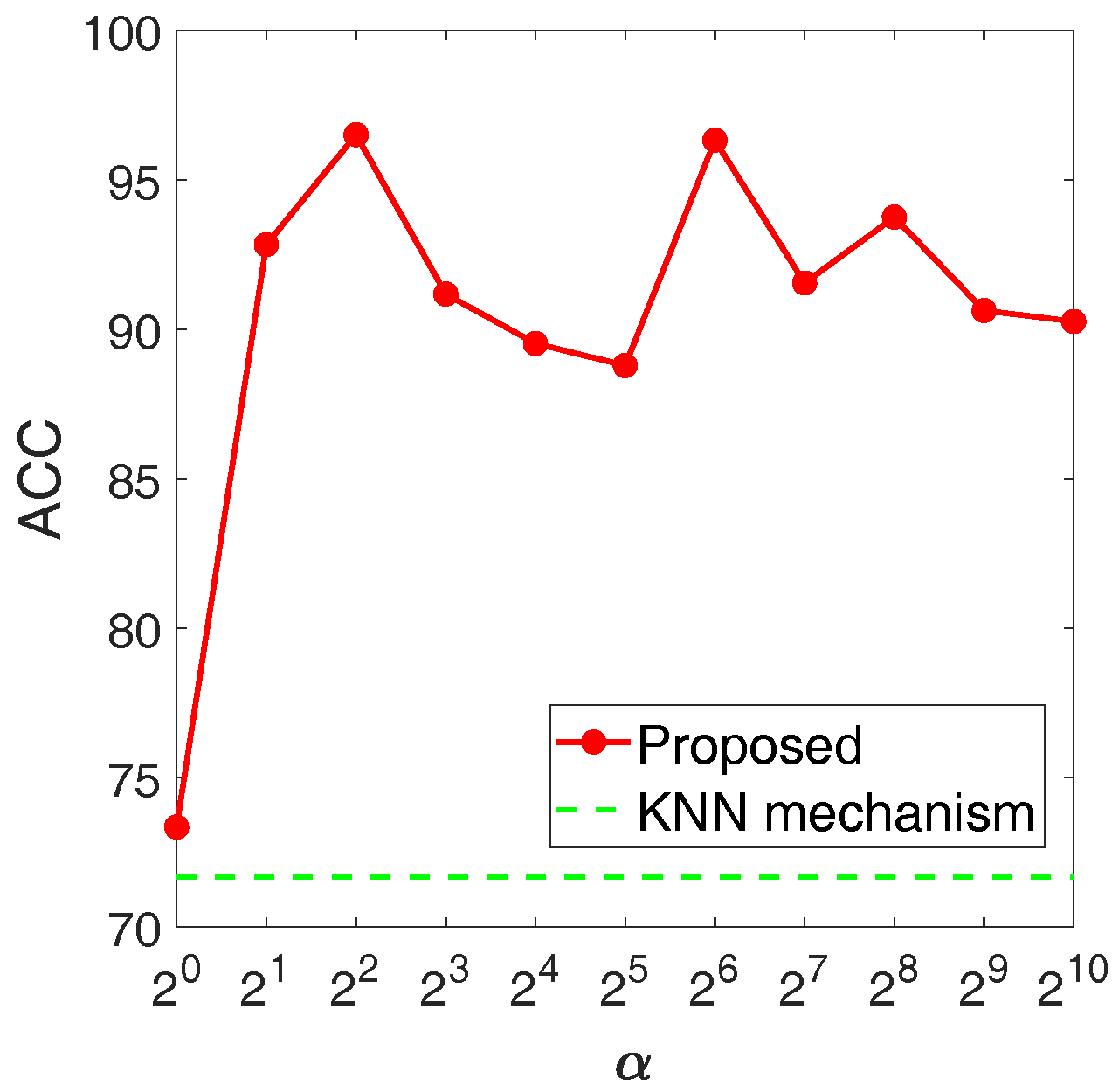}}}
            \subfloat[BBC(NMI)]{{\includegraphics[width=0.165\textwidth]{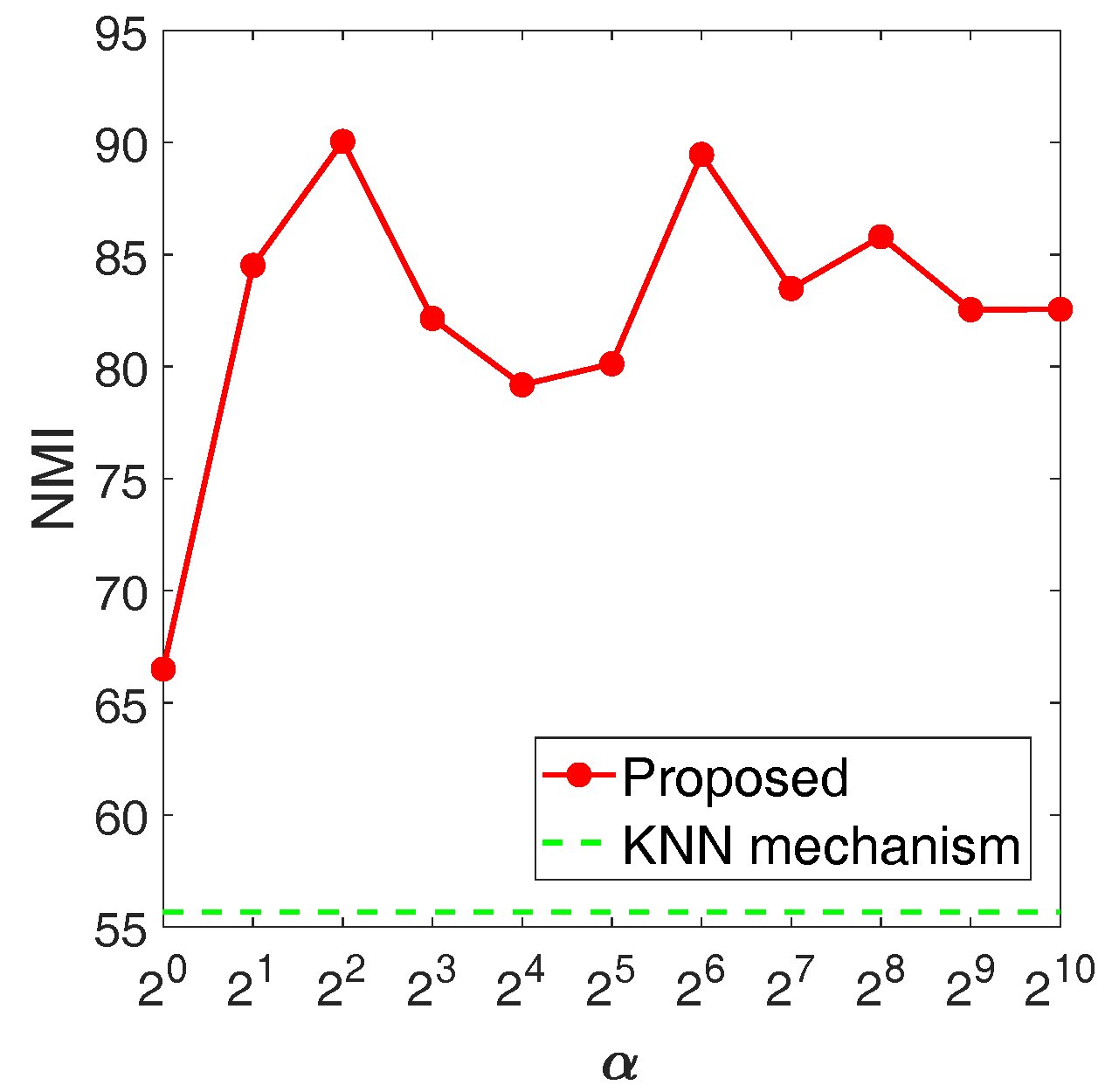}}}
            \subfloat[BBCSport(ACC)]{{\includegraphics[width=0.165\textwidth]{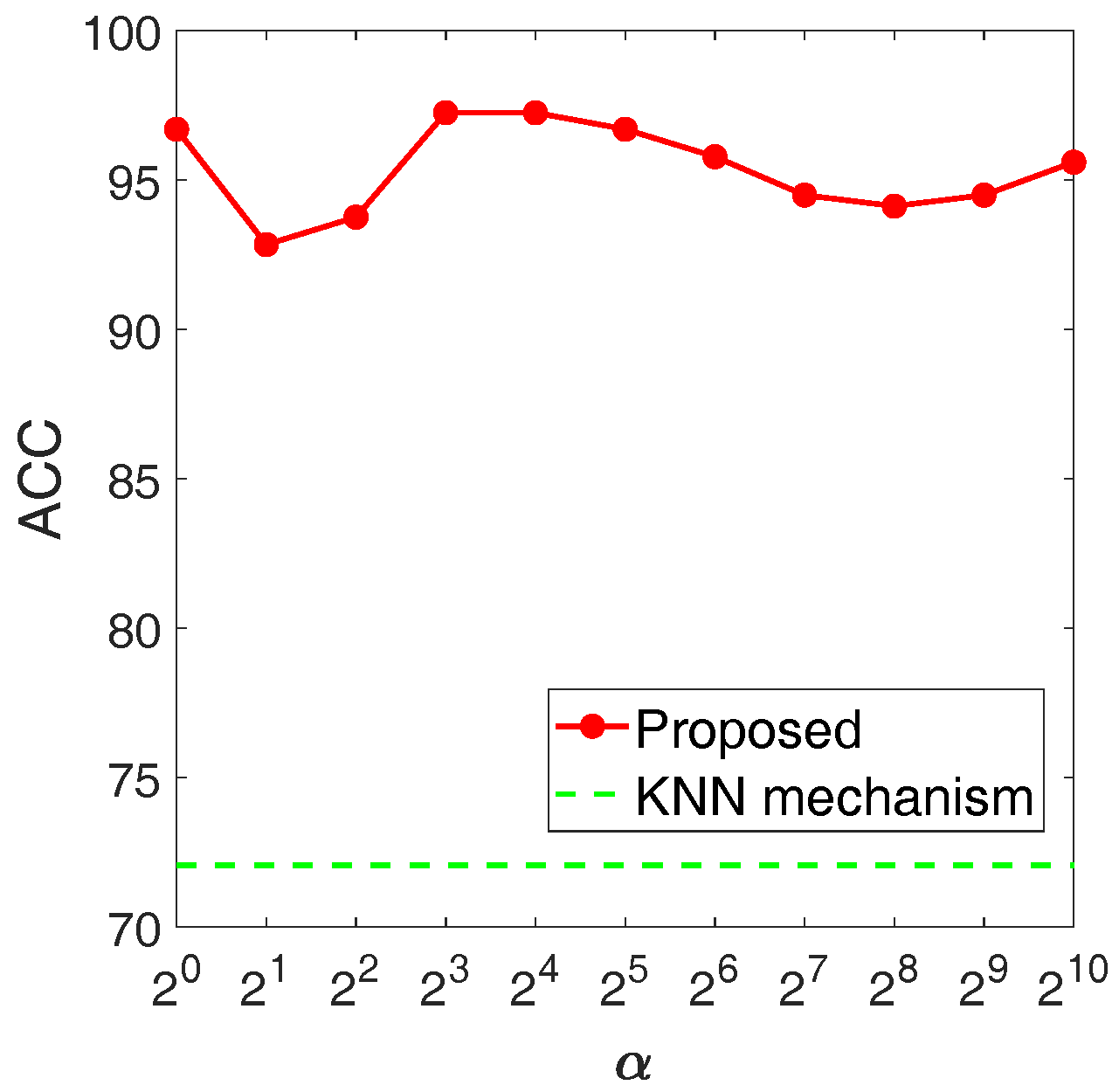}}}
            \subfloat[BBCSport(NMI)]{{\includegraphics[width=0.165\textwidth]{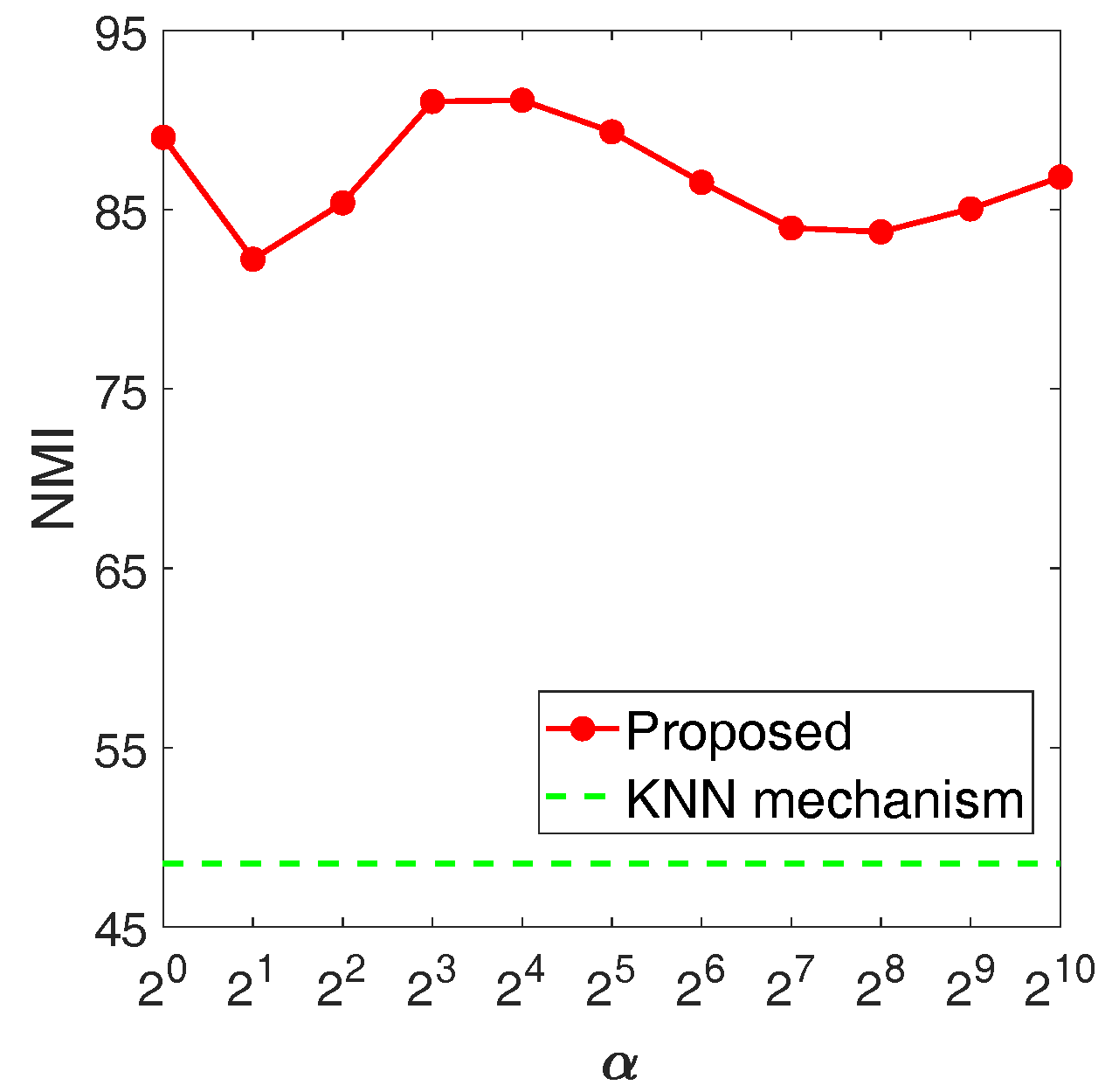}}}
            \subfloat[Caltech101-mit(ACC)]{{\includegraphics[width=0.165\textwidth]{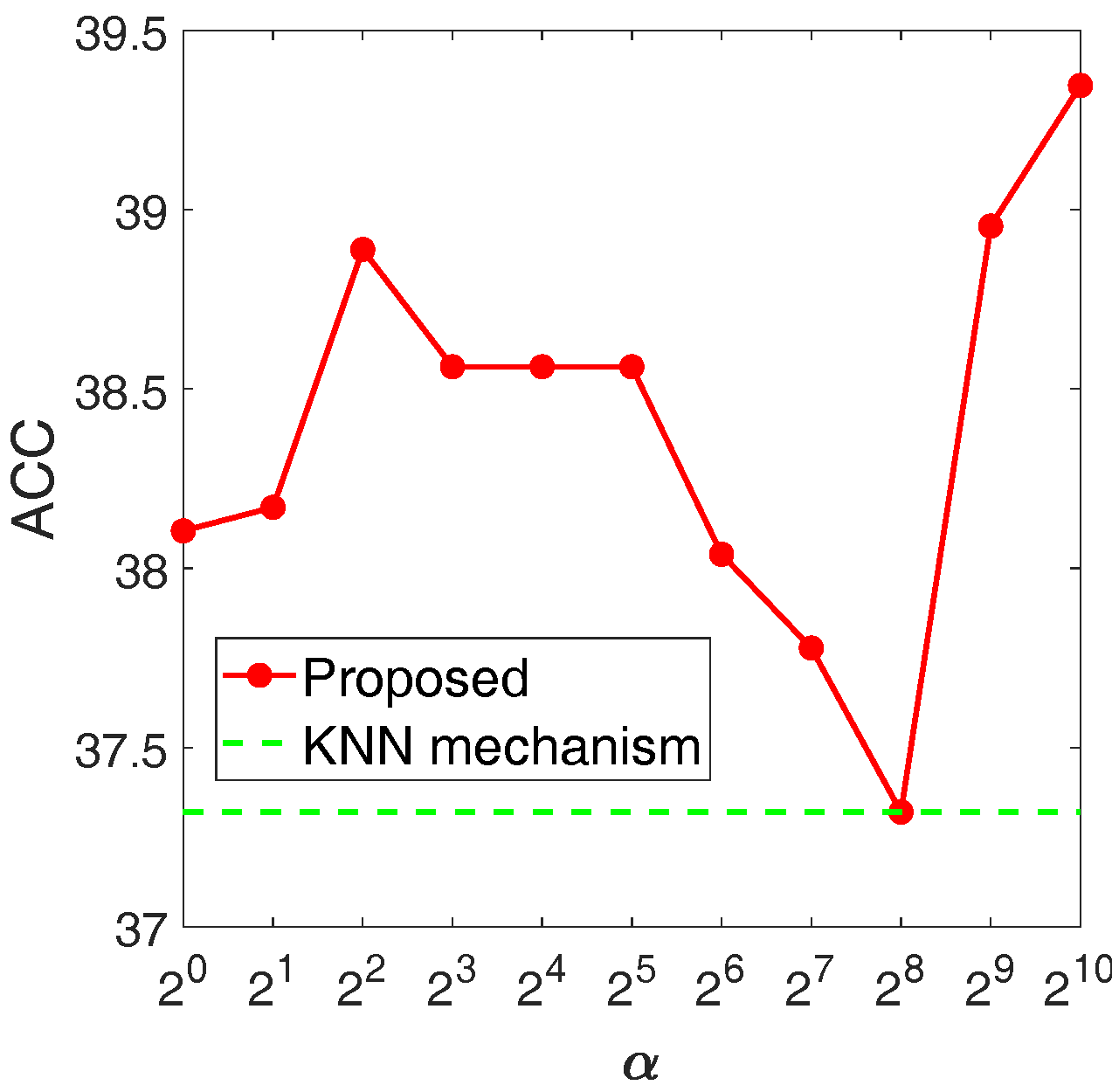}}}
            \subfloat[Caltech101-mit(NMI)]{{\includegraphics[width=0.165\textwidth]{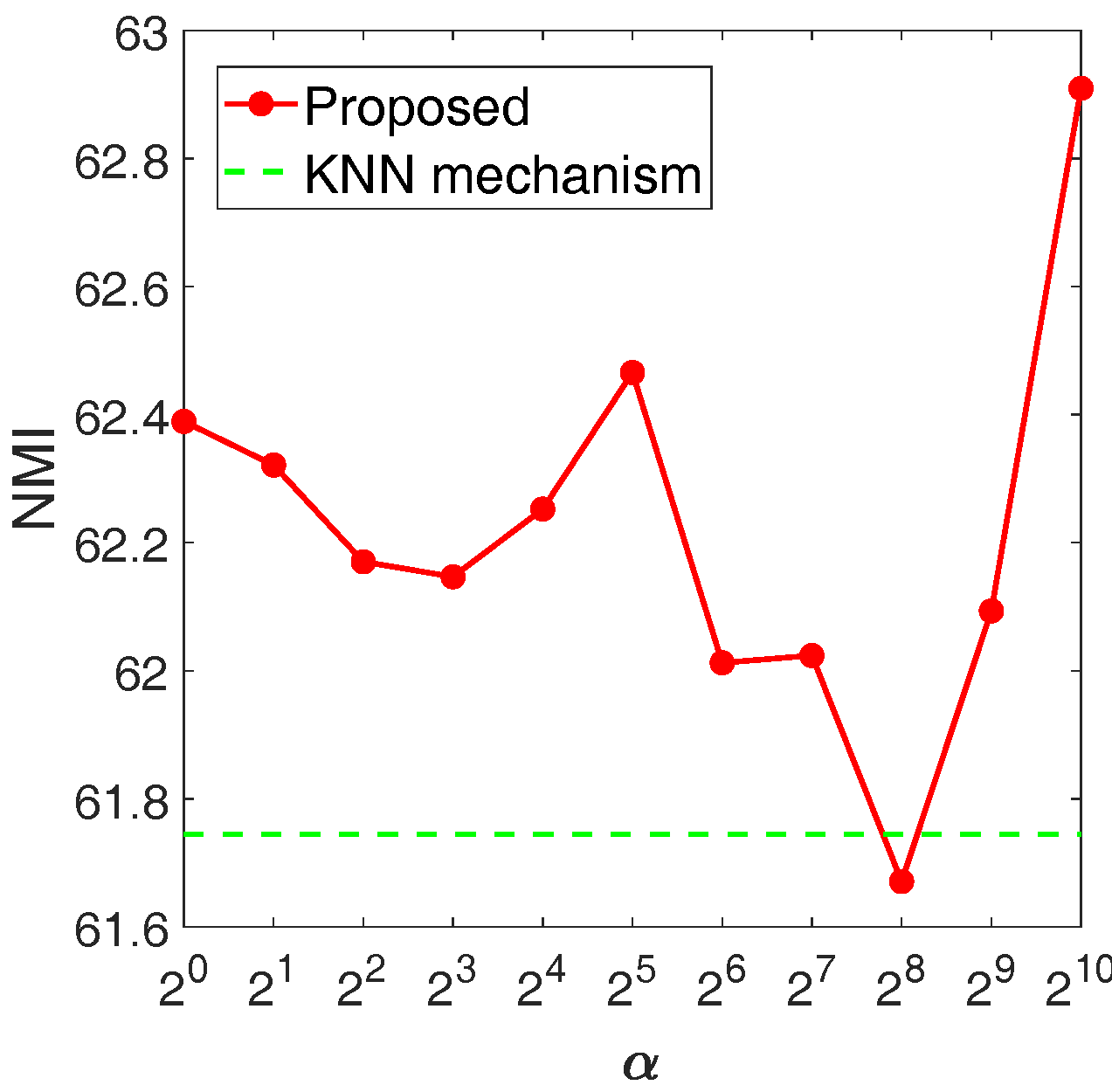}}}
			\caption{{Parameter sensitivity study of hyper-parameter $\alpha$ on BBC, BBCSport, and Caltech101-mit datasets.}}
			\label{Sensitivity}
			}
\end{center}
\vspace{-10pt}
\end{figure*}

\begin{figure*}[!t]
\vspace{-10pt}
\begin{center}{
		\centering
            \subfloat[Caltech101-7 (ACC)]{{\includegraphics[width=0.165\textwidth]{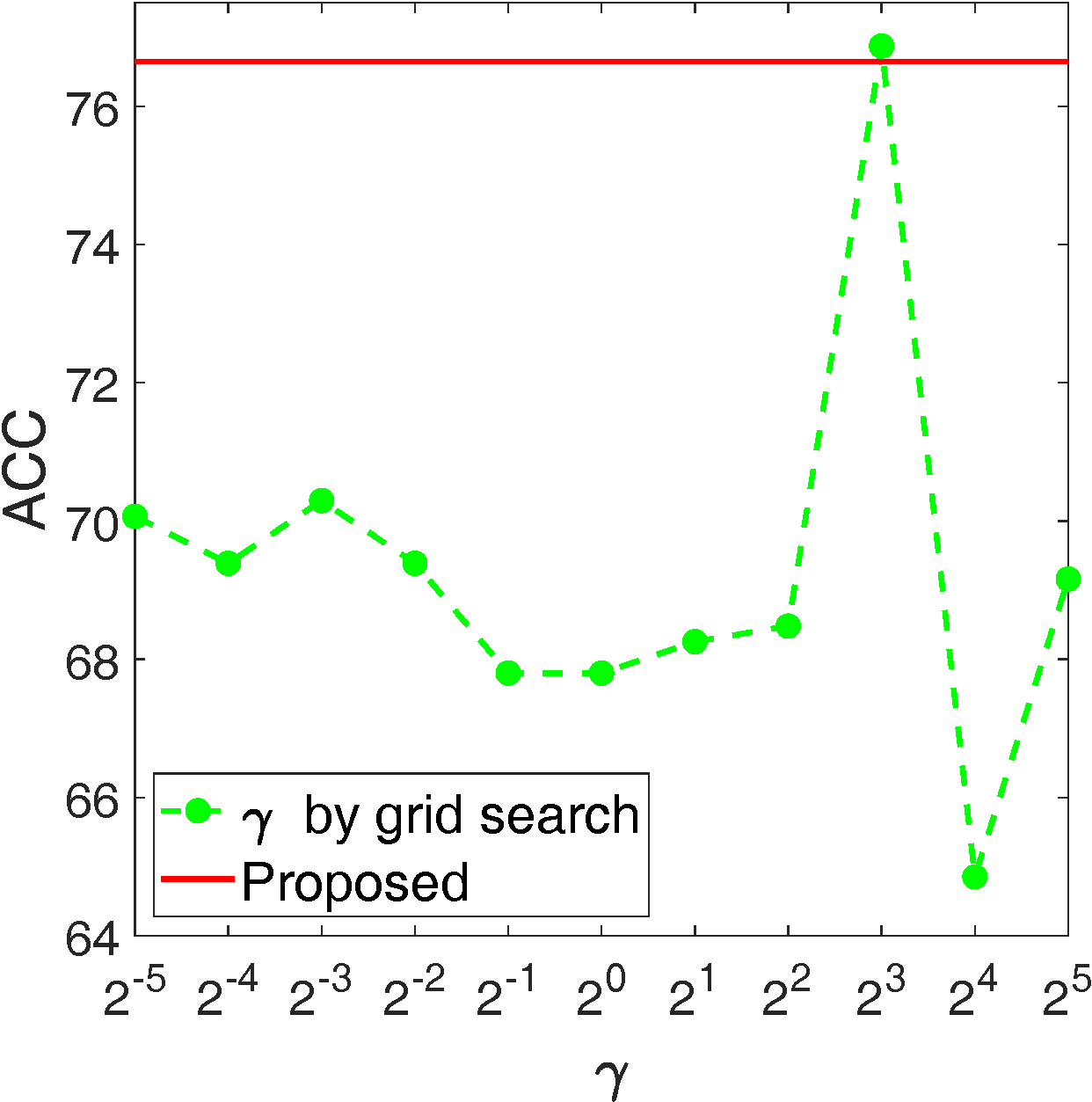}}}
            \subfloat[Caltech101-7 (NMI)]{{\includegraphics[width=0.165\textwidth]{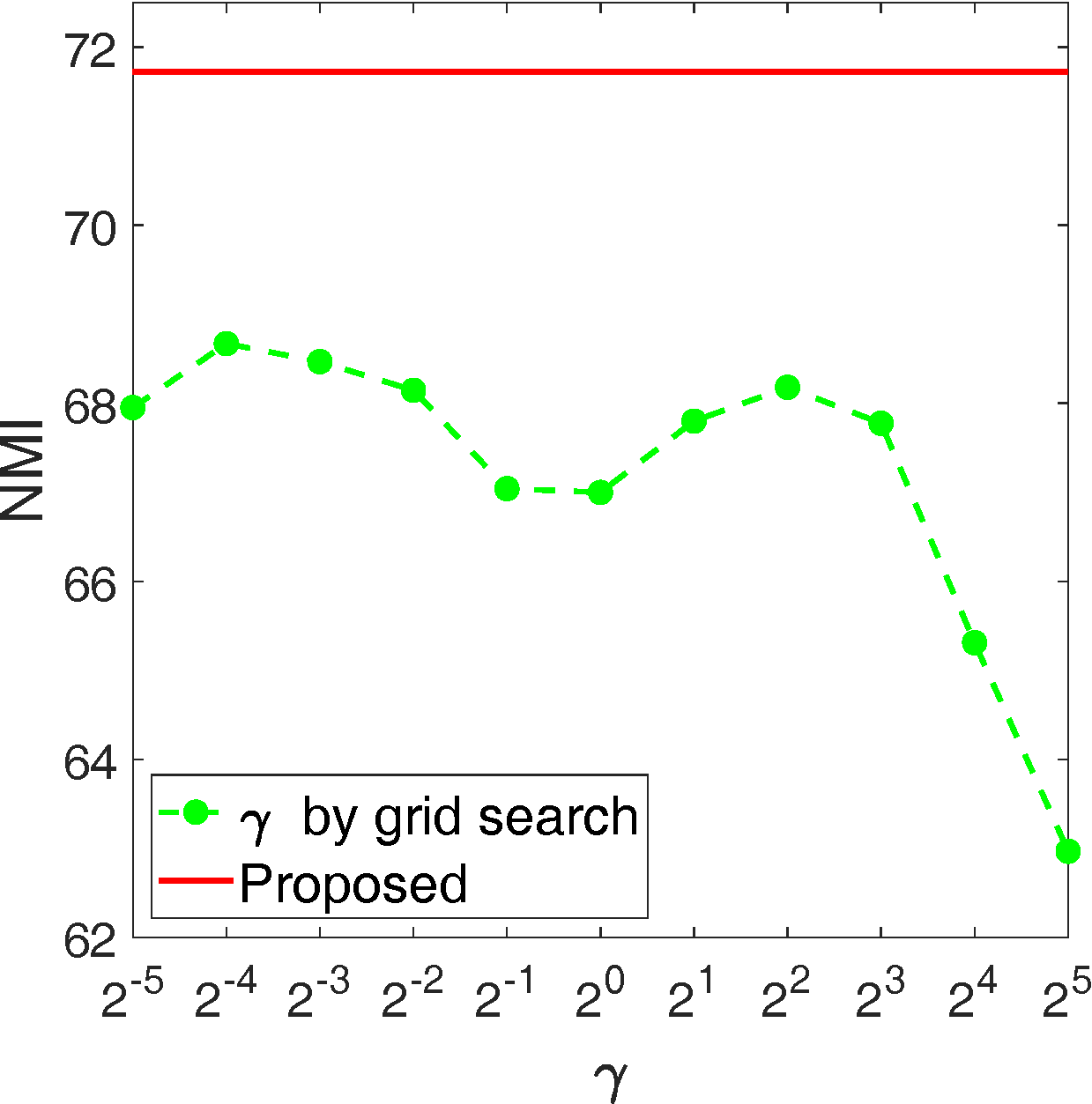}}}
            \subfloat[Caltech101-7 (Purity)]{{\includegraphics[width=0.165\textwidth]{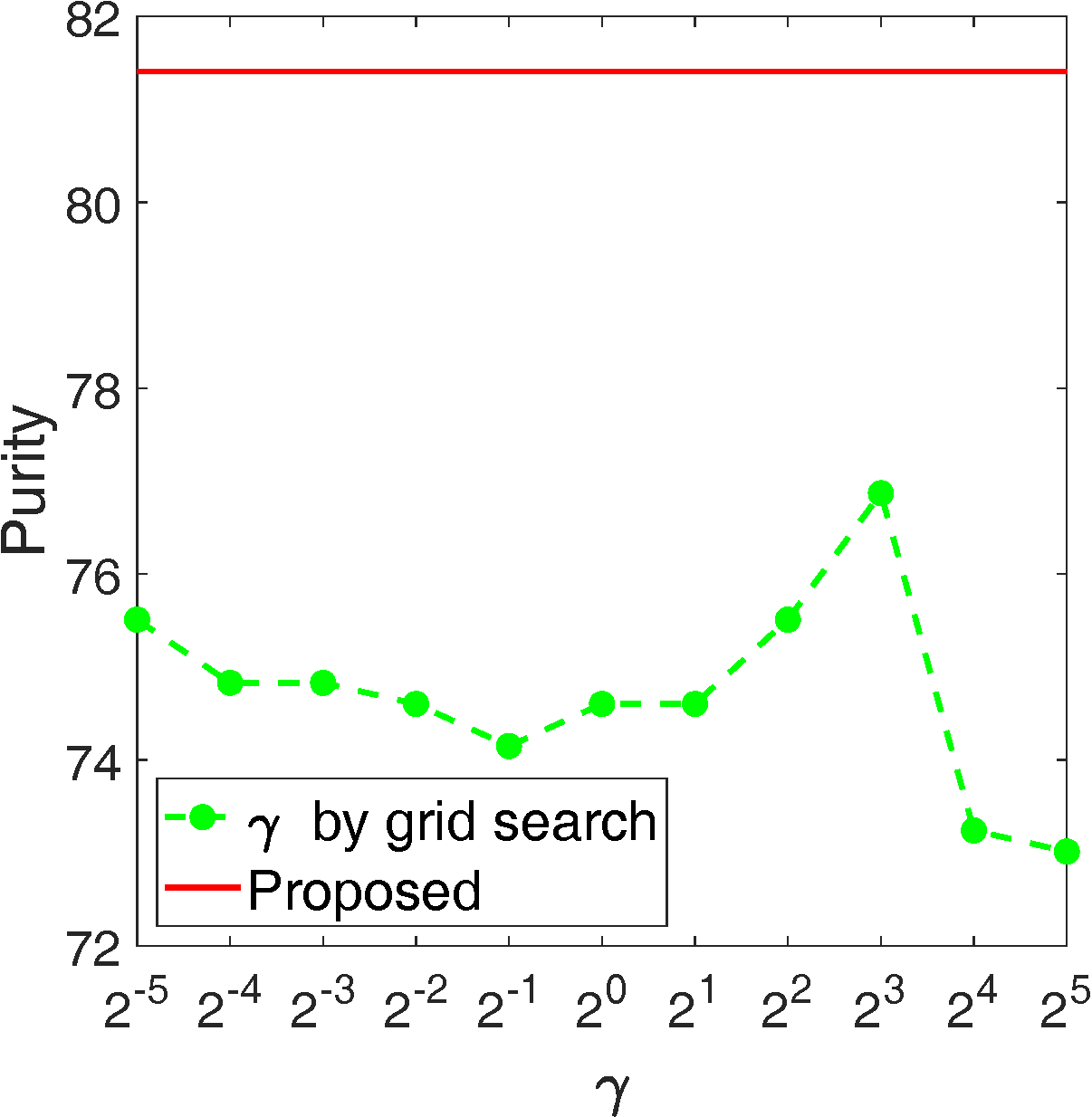}}}            
            \subfloat[BBCSport (ACC)]{{\includegraphics[width=0.165\textwidth]{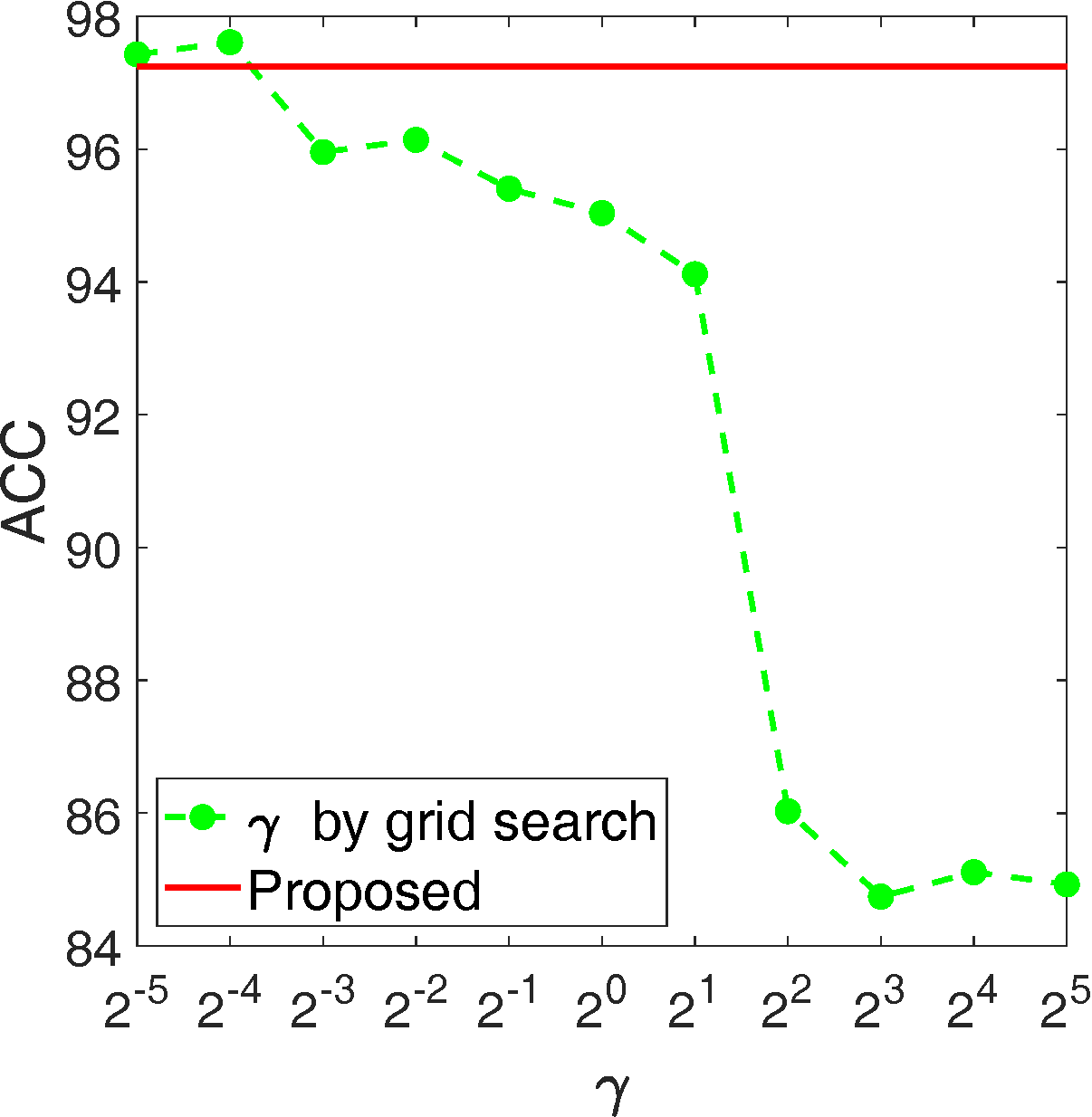}}}
            \subfloat[BBCSport (NMI)]{{\includegraphics[width=0.165\textwidth]{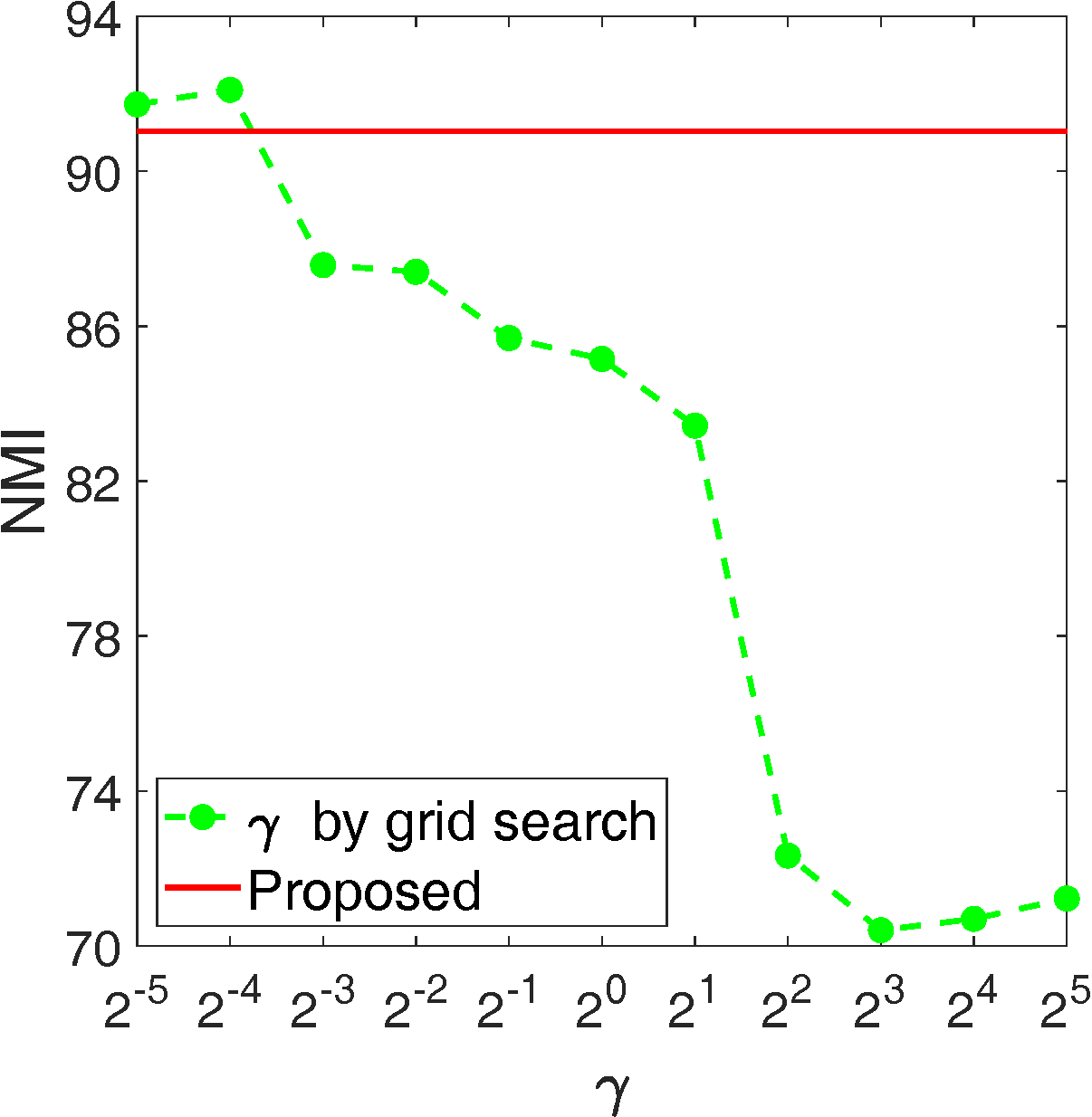}}}
            \subfloat[BBCSport (Purity)]{{\includegraphics[width=0.165\textwidth]{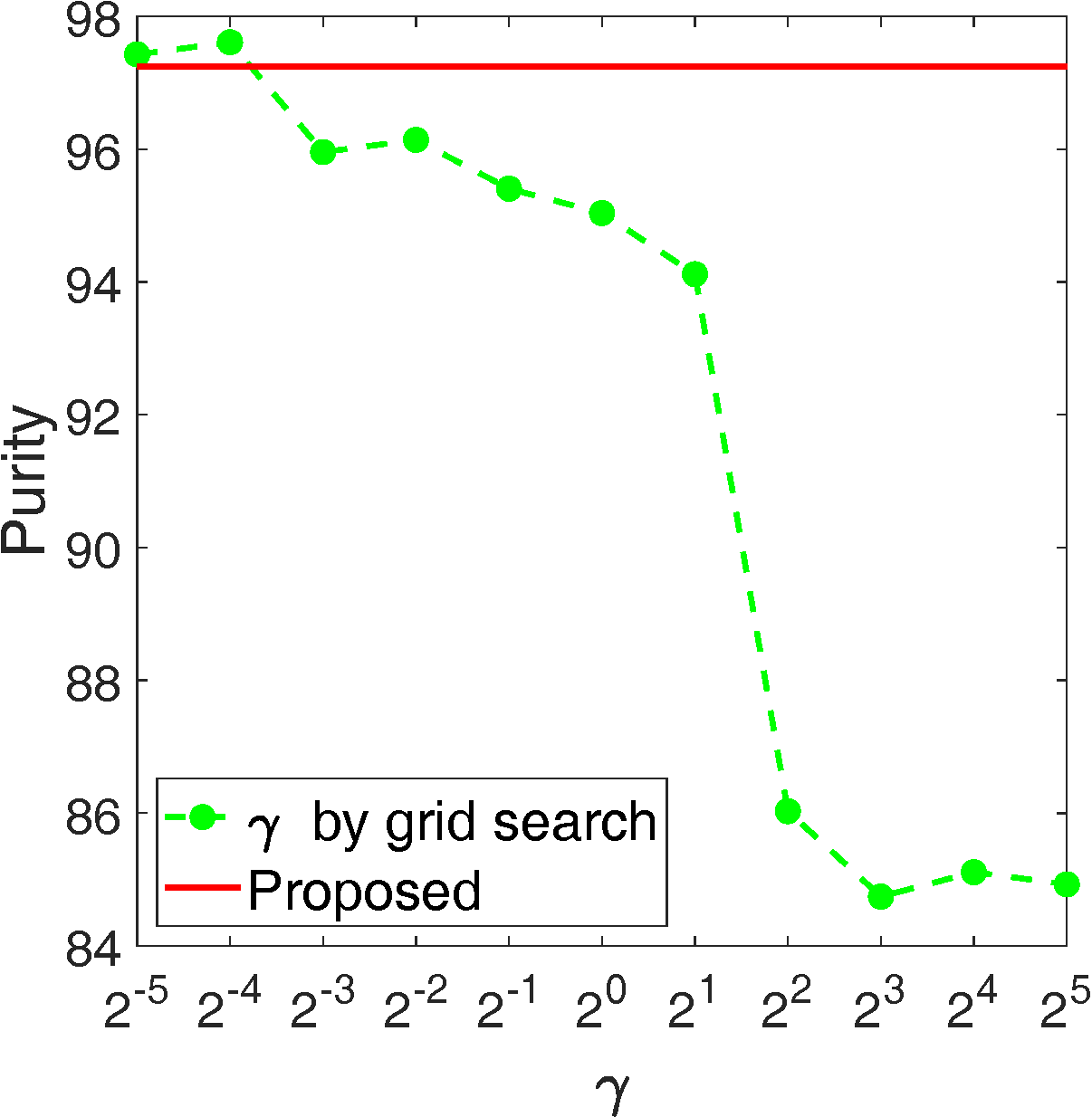}}}
			\caption{{Ablation study of $\mathbf{\gamma}$ by grid search on Caltech101-7 and BBCSport datasets. Other datasets' results are provided in the appendix.}}
			\label{Abalation_gamma}
			}
\end{center}
\vspace{-10pt}
\end{figure*}
%----------------------------------------------------------
\subsection{Convergence and Parameter Sensitivity} %
According to our previous theoretical analysis, the convergence of our LSWMKC model has been verified with a local optimal. Here, experimental verification is further conducted to illustrate this issue. Figure \ref{Convergence} reports the evolvement of optimization goal during iteration. Obviously, the objective function values monotonically decrease and quickly converge during the iteration. 

We further evaluate the parameter sensitivity of $\alpha$ by grid search varying in $[2^{0},2^{1},\cdots,2^{10}]$ on BBC, BBCSport, and Caltech101-mit datasets. From Figure \ref{Sensitivity}, we find the proposed method exhibits much better performance compared to KNN mechanism in a wide range of $\alpha$, making it practical in real-world applications. 
%----------------------------------------------------------
\subsection{Ablation Study on Tuning $\mathbf{\gamma}$ by Grid Search}\label{Ablation}
To evaluate the effectiveness of our learning $\gamma$ manner in Section \ref{Initialization}, we perform ablation study by tuning $\gamma$ in $[2^{-5},2^{-4},\cdots,2^{5}]$. The range of $\alpha$ still varies in $[2^{0},2^{1},\cdots,2^{10}]$. 
Figure. \ref{Abalation_gamma} plots the results on Caltech101-7 and BBCSport datasets. The red line denotes our reported results. The green dash line denotes the tuning results, for simplicity, $\alpha$ is fixed at the index of the optimal results. 

As can be seen, our learning manner exceeds the tuning manner with a large margin in a wide range of $\gamma$. Although tuning manner may achieve better performance at several values of $\gamma$, it is mainly due to tuning by grid search enlarges the search region of hyper-parameter $\gamma$, it dramatically increases the running time as well. In contrast, our learning manner can significantly reduce the search region and achieve comparable or much better performance.
%==========================================================
\section{Conclusion}
\label{conclusion}
This paper proposes a novel localized MKC algorithm LSWMKC. In contrast to traditional localized methods in KNN mechanism, which neglects the ranking relationship of neighbors, this paper adopts a heuristic manner to implicitly optimize adaptive weights on different neighbors according to the ranking relationship. We first learn a consensus discriminative graph across multiple views in kernel space, revealing the latent local manifold structures. We further to learn a neighborhood kernel with more discriminative capacity by denoising the consensus graph, which achieves naturally sparse property and clearer block diagonal property. Extensive experimental results on twelve datasets sufficiently demonstrate the superiority of our proposed algorithm over the existing thirteen methods. Our algorithm provides a heuristic insight into localized methods in kernel space. 

However, we should emphasize that the promising performance obtained at the expense of $\mathcal{O}(n^{3})$ computational complexity, which restricts applications in large-scale clustering. Introducing more advanced and efficient graph learning strategies deserves future investigation, especially for prototype or anchor learning, which may reduce the complexity from $\mathcal{O}(n^{3})$ to $\mathcal{O}(n^{2})$, even $\mathcal{O}(n)$. Moreover, the present work still requires post-processing to get the final clustering labels, i.e. $k$-means. Interestingly, several concise strategies, such as rank constraint or one-pass mechanism, provide promising solutions to this issue, which deserves our further research.

%==========================================================
\section*{Acknowledgments}
The authors would like to thank the anonymous reviewers who provided constructive comments for improving the quality of this work. This work was supported by the National Key R\&D Program of China 2020AAA0107100 and the National Natural Science Foundation of China (project no. 61922088, 61773392 and 61976196).

%==========================================================
\bibliographystyle{IEEEtran}
\bibliography{Reference_DBLP}{}

\vfill

\end{document}